\newtheorem{claim}{Claim}
\newtheorem{cor}{Corollary}
\newtheorem{theorem}{Theorem}
\newtheorem{definition}{Definition}
\newtheorem{lemma}{Lemma}
\newtheorem{remark}{Remark}
\newcommand{\D}{{\cal D}}
\newcommand{\sample}{{\cal S}}
\newcommand{\E}{{\mathrm{E}}}
\newcommand{\R}{\mathbb{R}}
 \author{Maria-Florina Balcan \and
Ellen Vitercik \and
Colin White}
\date{}
\title{Learning Combinatorial Functions from Pairwise Comparisons
\footnote{Authors' addresses: \texttt{\{ninamf,vitercik,crwhite\}@cs.cmu.edu}.}
}
\begin{document}

\maketitle     

\begin{abstract}
A large body of work in machine learning has focused
on the problem of learning a close approximation to an underlying combinatorial function,
given a small set of labeled examples.
However, for real-valued functions, cardinal labels might not be accessible, or it may be difficult for an expert to 
consistently assign real-valued labels over the entire set of examples. For instance, it is 
notoriously hard for consumers to reliably assign values to bundles of merchandise. 
Instead, it might be much easier for a consumer to report which of two bundles she likes better.

With this motivation in mind, we consider an alternative learning model, wherein the algorithm must learn the underlying function up to pairwise comparisons, from pairwise comparisons. In this model, we present a series of novel algorithms that learn over a wide variety of combinatorial function classes. These range from graph functions to broad classes of valuation functions that are fundamentally important in microeconomic theory, the analysis of social networks, and machine learning, such as coverage, submodular, XOS, and subadditive functions, as well as functions with sparse Fourier support.
\end{abstract}

\section{Introduction}
The problem of ranking based on pairwise comparisons is present in many application domains ranging from algorithmic game theory~\cite{agt-book} to computational finance~\cite{comp-finance-book} to social networks~\cite{jk-book}. For example, a business might wish to learn its consumers' combinatorial valuation functions, since this will allow them to better set prices, choose which goods to sell as bundles, and determine inventory levels. Previous work on learning valuation functions has concentrated on the model in which the learning algorithm is given access to a set of examples (bundles of goods) which are labeled by the underlying valuation function \cite{Balcan11submodular,Sketching,Feldman2012optimal,balcan2012learning,iyer2013curvature}. However, for real-valued functions, this cardinal data may not be accessible. Indeed, it may be difficult for a consumer to provide the real-valued number corresponding
to her valuation for a bundle of goods. Instead, it might be more natural for her to express
whether she likes one bundle of goods more than another. After all, it is well-known that humans are significantly better at comparing than scoring
\cite{barnett2003modern,stewart2005absolute}. Therefore, we may hope to learn a consumer's valuation function up to pairwise comparisons, from pairwise comparisons.

Of course, economics is not the only field where it would be valuable to learn an underlying function up to pairwise comparisons. Research on judgment elicitation through pairwise comparisons is a fundamental problem in fields outside of computer science, ranging from psychology to economics to statistics, as well as many others \cite{heldsinger2010using,chen2013pairwise,
bradley1952rank,barnett2003modern}. For example, in a social network, one might wish to learn the influence of subgroups and individuals, but it could be difficult to consistently assign real-valued numbers as measurements of this influence. Rather, it might be easier to simply answer which of two subgroups is more influential. Although the number of subgroups in a social network may be exponential in the number of nodes, through a polynomial number of such queries, we may hope to learn a pairwise comparison function that allows us to accurately predict which of any two subgroups is more influential.

\subsection{Our Results}
In this paper, we prove that many classes of combinatorial functions can be learned up to comparisons.
Our formal definition of what it means to learn a function up to comparisons is similar to the PAC setting: we say that a class of functions is \emph{comparator-learnable} if there exists is an efficient algorithm which outputs a comparison function
that, with high probability
over the choice of examples, has small error over the distribution. For some function classes, we require that the function value of two sets be sufficiently far apart in order to guarantee that the learned comparator predicts accurately on those sets.

More formally, in Section~\ref{sec:multiplicativeError}, we show that for a fixed class $\mathcal{F}$ of combinatorial
functions which map $2^{[n]}$ to $\R$, if any function in $\mathcal{F}$ can be multiplicatively approximated up to a factor of $\alpha(n)$
by some power of a linear function,
then we can learn any function in $\mathcal{F}$ up to comparisons on pairs of sets whose values differ by at least an $\alpha(n)$ multiplicative factor. In this case, we say that $\mathcal{F}$ is \emph{comparator-learnable with separation $\alpha(n)$.} Our results are summarized in Tables \ref{tab:results} and \ref{tab:results2}.
Using existing approximation results
\cite{goemans2009approximating,balcan2012learning}, we immediately conclude
that several broad classes of combinatorial functions are comparator-learnable, including many that are ubiquitous in microeconomic theory. These include the nested classes of monotone submodular, XOS, and subadditive functions, all of which are used to model consumer preferences that exhibit diminishing marginal utility. In particular, we show that submodular functions are comparator-learnable with separation $\alpha(n) = \sqrt{n}$ and provide a nearly-matching lower bound of $\alpha(n) = \tilde{\Omega}(n^{1/3})$. Further, we show that the classes of XOS and subadditive functions are comparator-learnable with separation $\alpha(n) = \tilde{\Theta}(\sqrt{n})$.

We also rely on results from \cite{iyer2013curvature} and \cite{balcan2012learning} to achieve stronger bounds for submodular functions if the \emph{curvature} is small. Curvature is a well-studied measure of submodular complexity which quantifies how close a function is to being fully additive. We prove that the separation factor approaches 1 (which is optimal) as the function class approaches full additivity, i.e. as the maximum curvature approaches 0. Further, for XOS functions with polynomially-many SUM trees, we show that the separation factor decreases as a function of the number of trees. In this way, the more structured a class in question is, the stronger our results in Section~\ref{sec:multiplicativeError} are.

\renewcommand{\arraystretch}{1.5}
\begin{table}[h]
\begin{center}
\begin{tabular}{|p{6cm}||p{3.5cm}|p{4cm}|}
\hline
\textbf{Function Class}                                                                    & \textbf{Separation}   & \textbf{Sample Complexity} \\ \hline\hline
Subadditive                                                                      & $\tilde{\Theta}(\sqrt{n})$ & $\tilde{O}\left(n/\epsilon^3\right)$                          \\\hline
XOS                                                                              & $\tilde{\Theta}(\sqrt{n})$                     & $\tilde{O}\left(n/\epsilon^3\right)$ \\\hline
Submodular                                                                                 &   $O\left(\sqrt{n}\right), \tilde{\Omega}\left(\sqrt[3]{n}\right)$                    &                           $\tilde{O}\left(n/\epsilon^3\right)$ \\\hline
Submodular with curvature $\kappa$                                                         & $O\left(\min \left\{\sqrt{n}, \frac{1}{1-\kappa}\right\}\right),$\newline $\frac{n^{1/3}}{O(\kappa\log n)+(1-\kappa)n^{1/3}}$       & $\tilde{O}\left(n/\epsilon^3\right)$            \\\hline
XOS with $R$ SUM trees                                                                     & $O\left(R^{\xi}\right) $, where $\xi > 0$                     &            $\tilde{O}\left(n^{1/\xi}/\epsilon^3\right)$                \\\hline
$|\mathcal{P}|$-sparse Fourier support functions &                      1 &   $\tilde{O}\left(|\mathcal{P}|/\epsilon^2\right)$                         \\\hline
Valuation functions with \newline $k$-limited nonlinear interactions&                      1 &           $\tilde{O}\left(n^k/\epsilon^2\right)$                 \\\hline
Coverage functions                                                                         & $1+\epsilon$                      &    $\tilde{O}\left(n^3/\epsilon^5\right)$   \\\hline
\end{tabular}
\end{center}
\caption{Using our general algorithmic framework, we prove that these combinatorial function classes are comparator learnable with the associated multiplicative separation factors.}\label{tab:results}
\end{table}

\renewcommand{\arraystretch}{1.5}
\begin{table}[h]
\begin{center}
\begin{tabular}{|p{6cm}||p{3.5cm}|p{4cm}|}
\hline
\textbf{Function Class}                                                                    & \textbf{Separation}   & \textbf{Sample Complexity} \\ \hline\hline
XOS functions with \newline distributional assumptions and range in {[}0,1{]}                 &    $\beta \in (0,1)$                  &   $\tilde{O}\left(n^{O\left(\frac{1}{\gamma}\right)}/\epsilon^3\right)$, \newline where $\gamma = \tilde{O}\left(\beta/\epsilon^{3/2}\right)$                         \\\hline
Submodular functions with \newline distributional assumptions and range in {[}0,1{]}                 &     $\beta \in (0,1)$                  &   $\tilde{O}\left(n^{O\left(\frac{1}{\gamma^{4/5}}\log \frac{1}{\gamma}\right)}/\epsilon^3\right)$, \newline where $\gamma = \tilde{O}\left(\beta/\epsilon^{3/2}\right)$                         \\\hline

\end{tabular}
\end{center}
\caption{We also show that our algorithmic framework can be extended to learn over classes of combinatorial function classes on pairs of sets whose values differ by an additive factor $\beta$, for any $\beta \in (0,1)$. For both XOS and submodular functions, we assume that the underlying distribution over subsets of $[n]$ is uniform in order to derive the additive guarantees.}\label{tab:results2}
\end{table}

In Section~\ref{sec:multiplicativeError} we only guarantee the accuracy of the learned comparator on pairs of sets whose values differ by a sufficiently large multiplicative factor. We show in Section \ref{sec:additiveError} that if the underlying distribution over subsets of $[n]$ is uniform, then we can take advantage of key insights regarding the Fourier spectrum of monotone submodular functions with range in $[0,1]$, presented in \cite{FeldmanV15}, to learn such a function up to comparisons on pairs of sets whose values differ by a sufficiently large \emph{additive} factor. We extend this result to XOS functions with range in $[0,1]$ as well.

In Section \ref{sec:otherCombinatorial},
we show that our algorithm from Section \ref{sec:multiplicativeError} applies to a wide range of other classes
of combinatorial functions. In particular, we
present results for functions with sparse Fourier support \cite{stobbe2012learning} and
functions with bounded nonlinear interactions
\cite{vainsencher2011bundle}.
For these more structured function classes we demonstrate a much better $\alpha(n) = 1$, meaning we do not need to assume $f(S)$ and $f(S')$ are
sufficiently far apart to predict correctly. Finally, for coverage functions \cite{feldman2014learning,Sketching}, we achieve $\alpha(n) = 1+\epsilon$.
In Appendix \ref{ksubmodular}, we study comparator-learning $k$-submodular functions
(submodular functions with range in $\{1,\dots, k\}$) in the membership query model, in which the algorithm may ask for labels on examples of its own construction. We show how to learn a $k$-submodular function up to a multiplicative separation of $\alpha$
with sample complexity and running time $O\left(n^{k/\alpha}\right)$.

\subsection{Our Techniques}

Our techniques depart significantly from learning with real-valued labeled examples. When attempting to learn a combinatorial function from cardinal data, rather than ordinal data, the existence of an approximating linear function implies a natural learning algorithm, via a reduction to learning linear separators. In our model, where we are only allowed to make pairwise comparison queries, we require a substantially different algorithm and analysis. At a high level, the existence of an approximating linear function still implies useful structure: once we know that such a function $\vec{w}$ exists and approximates the underlying function $f$ up to an $\alpha(n)$ factor, then given two sets $S$ and $S'$ such that $f(S)$ and $f(S')$ are $\alpha(n)$ apart, we can learn a linear separator that classifies all sets with value less than $f(S)$ as negative and all sets with value greater than $f(S')$ as positive. However, we would like to predict accurately over all random pairs of subsets, not only those whose values are separated by $f(S)$ and $f(S')$. Even more problematic, using only pairwise comparisons, we cannot know if $f(S)$ and $f(S')$ are $\alpha(n)$ apart in the first place.

To surmount this obstacle, we show that we can discretize the range of $f$ using a set of ``landmarks,'' i.e. a sorted set of random examples. For every pair of landmarks $S_i$ and $S_j$, we attempt to solve for a linear separator that classifies a set $S$ as negative if its value is less than $f(S_i)$ and positive if its value is greater than $f(S_j)$. If $f(S_i)$ and $f(S_j)$ are at least $\alpha(n)$ apart, then we are guaranteed that a such a linear separator must exist. Naturally, on a random pair of subsets $T$ and $T'$, the learned comparator simply searches for a linear separator that classifies $T$ as positive and $T'$ as negative, which implies that $f(T) < f(T')$, or vice versa. The key idea which guarantees the correctness of the algorithm is the fact that on one such random query, it is highly unlikely that $f(T)$ and $f(T')$ are $\alpha(n)$ apart, and yet there does not exist a pair of landmarks $S_i$ and $S_j$ such that (1) $f(S_i)$ and $f(S_j)$ sit between $f(T)$ and $f(T')$ and (2) $f(S_i)$ and $f(S_j)$ are also $\alpha(n)$ apart. This is exactly what we need, because if such a pair of landmarks \emph{does} exist, then we will have solved for a linear separator that correctly classifies $T$ and $T'$.

\subsection{Related Work}
Past work has explored the learnability of submodular and related functions
when given access to a set of random examples labeled by the underlying function. Goemans et al.\, showed how to learn an approximation of a submodular function within a multiplicative $\tilde{O}(\sqrt{n})$ factor  \cite{goemans2009approximating} in the membership query model, i.e.\
the queries are selected adaptively by the algorithm.

Balcan and Harvey showed how to efficiently learn a function that approximates the given
submodular factor up to a $\sqrt{n}$ factor on a $1-\epsilon$ fraction of the test inputs, with probability
$1-\delta$, in the supervised learning setting \cite{Balcan11submodular}.
They call this model the \emph{PMAC model} of learning,
where PMAC stands for ``Probably Mostly Approximately Correct,''
due to similarity to the PAC model of learning.
They also show an $\Omega(n^{\frac{1}{3}})$ lower bound in this model.
A later paper by Balcan et al.\ show near tight bounds on the PMAC learnability of
subadditive functions and XOS (fractionally subadditive) functions \cite{balcan2012learning}.

There is also a large body of work on learning submodular functions with additive, rather than multiplicative, guarantees, when the
underlying distribution over subsets of $[n]$ is uniform.
Gupta et al.\ gave an algorithm with runtime $n^{O(\log(1/\delta)/\epsilon^2)}$
which learns an approximation $h$ to a submodular function $f$ such that
with high probability, $|f(x) - h(x)| \leq \epsilon$ \cite{gupta2013privately}.
Feldman et al.\ show an algorithm with runtime $2^{\tilde{O}(1/\epsilon^{4/5})} \cdot n^2$ for approximating
a submodular function with $L_2$ error $\epsilon$ \cite{FeldmanV15}.
Both of these results are accomplished by proving there exist low degree polynomials which approximate submodular functions.

Badanidiyuru et al.\ showed that submodular functions
always have an approximate function with a small sketch \cite{Sketching},
and Iyer et al.\ showed parameterized bounds based on the curvature of the
submodular function (how close the function is to being fully additive)
\cite{iyer2013curvature}.

We conclude this section by reviewing related works on ranking via pairwise comparisons.
Jamieson and Nowak study this problem under the assumption that the $n$ objects they wish to rank are embedded into a $d$-dimensional Euclidean space and that the ranking reflects the objects' relative distance to some fixed point in $\R^d$.
They show an algorithm to learn the rank using $O(d\log n)$ queries on average \cite{jamieson2011active}.
Shah et al.\ study the ranking problem by assuming that the ranking reflects the inherent ``qualities'' of the objects, as defined by a vector $\vec{w}^* \in \R^n$ \cite{ShahBBPRW15}. They work under the standard Bradley-Terry-Luce and Thurstone models, and prove upper and lower bounds on the optimal error when estimating $\vec{w}^*$ in these models.

\section{Preliminaries}

\subsection{Combinatorial Functions}

Throughout this paper, we study different classes of combinatorial functions.
All functions we study are defined over subsets of a ground set $[n]=\{1,\dots,n\}$ and map $2^{[n]}$ to $\R$.
We use $\chi(S)$ to represent the indicator function of the set $S$, so $(\chi(S))_i=1$ if and only if $i\in S$, otherwise
$(\chi(S))_i=0$.

We define three important function classes here and defer the rest of the definitions to their respective sections.

\noindent\textbf{Subadditive functions}.
A function $f$ is \textit{subadditive} if and only if
$f(S\cup S')\leq f(S)+f(S')$, for all $S,S'\subseteq [n]$.
Intuitively, the value of a set is at most the sum of its parts.

\noindent\textbf{Submodular functions}.
A function $f$ is submodular if and only if $f(T\cup\{i\})-f(T)\leq f(S\cup\{i\})-f(S)$ for all $S\subseteq T\subseteq [n]$.
Submodular functions model valuations that satisfy diminishing returns.
Submodularity can also be thought of as the discrete analog of convex functions.

\noindent\textbf{XOS functions}.
A function $f$ is XOS if and only if $f(S)=\max_{j=1\dots k} w_j^T\chi(S)$ where $w_{j} \in \R^n_{\geq 0}$. Alternatively, an XOS function is a MAX of SUM trees. For example, several vacation sites might offer overlapping amenities, and a vacationer might value the amenities differently depending on the site. She will likely then choose the site that has the maximum sum value for the amenities offered there, which means that her valuation function is XOS.

\subsection{Learning Model}

We now define our notion of learnability up to comparisons.
Let $f$ be an unknown function from some class $\mathcal{F}$ (for instance, the class of submodular functions)
and suppose that sets are drawn from some distribution $\D$ over $2^{[n]}$. Moreover, suppose that we have access to a pairwise comparison oracle which, on input $S,S' \in 2^{[n]}$, returns 1 if $f(S) \leq f(S')$ and 0 otherwise. Clearly, we cannot hope to learn $f$ well in absolute terms in this model. Rather, our goal is to produce a hypothesis $g$ such that for most pairs $S,S' \sim D$, either $g$ predicts correctly which of $f(S)$ or $f(S')$ is larger, or $f(S)$ and $f(S')$ are separated by less than a multiplicative $\alpha(n)$ factor. 
We formally define this learning model as follows.

\begin{definition}[comparator-learnable with separation $\alpha(n)$]
A class $\mathcal{F}$ of functions is \textit{comparator-learnable with multiplicative
separation $\alpha(n)$}
if for all $\epsilon,\delta \in (0,1)$ and all $f\in \mathcal{F}$, given a sample of sets over a
distribution $\mathcal{D}$ with size polynomial
in $n,~\frac{1}{\epsilon}$, and $\frac{1}{\delta}$, and given access to a pairwise comparison oracle,
then there exists an algorithm which outputs a pairwise function $g$ with the following guarantee: with probability
at least $1-\delta$, for $S,S' \sim \mathcal{D}$, the probability that $\alpha(n) f(S) \leq f(S')$ yet $g$ incorrectly predicts that $f(S) > f(S')$ is at most $\epsilon$.
\end{definition}

In Section \ref{sec:multiplicativeError}, we present a general algorithm
which can be used to efficiently comparator-learn over a variety of combinatorial function classes 
with separation $\alpha(n)$, where the value of $\alpha(n)$ depends on the complexity of the function class at hand. 
For example, for fully additive functions, $\alpha(n) = 1$, and for submodular functions that are close to being fully additive, i.e. those with small curvature $\kappa$, we have that $\alpha(n) = \frac{1}{1-\kappa}$. Meanwhile, for complex function classes, such as monotone submodular, $\alpha(n) = \sqrt{n}$.
We note that even when given access to real-valued labeled examples, we do not know how to learn a function that approximates a monotone submodular function up to any multiplicative factor better than $\sqrt{n}$. In fact, we prove a nearly-matching lower bound for monotone submodular functions, namely that it is not possible to comparator-learn over this function class with separation $o(n^{1/3}/\log n)$.

Next, in Section \ref{sec:additiveError}, we introduce a related definition: 
comparator-learnable with \emph{additive} separation $\beta$.
In this case, for most pairs $S,S' \sim \mathcal{D}$, we guarantee that the learned comparator either predicts correctly which of $f(S)$ or $f(S')$ is larger, or $f(S)$ and $f(S')$ are separated by less than an additive $\beta$ factor. 
Finally, in Section~\ref{sec:otherCombinatorial}
we show that for certain highly structured function classes, we are able to learn with no separation. In other words, the learned comparator will predict accurately on a large fraction
of \emph{all} pairs, not just pairs which are sufficiently far apart.

\section{General Framework for Learning with Comparisons}\label{sec:multiplicativeError}

In this section, we present a general algorithm for learning combinatorial functions
up to pairwise comparisons. 
We guarantee that for an underlying combinatorial function $f$, with high probability our algorithm outputs a hypothesis $g$, where given a random $S,S'$, the probability that 
$f(S)$ and $f(S')$ differ by a large multiplicative factor yet $g$ predicts incorrectly is low.

We describe our algorithm for a large family of general function classes, each of which has the property that any function in the class can be approximated by the $p$-th root of linear function up to an $\alpha(n)$ factor. We then instantiate this algorithm for many classes of combinatorial functions. To make this notion more concrete, we define the following characterization of a class of functions.

\begin{definition}[$\alpha(n)$-approximable]\label{def:approximable}
A class $\mathcal{F}$ of set functions is \textit{$\alpha(n)$-approximable} 
if there exists a $p\in\mathbb{R}$
such that for all $f\in \mathcal{F}$, there exists a vector $\vec{w}_f\in\mathbb{R}^n$ such that for all $S \subseteq [n]$,
$f(S)\leq (\vec{w}_f \cdot\chi(S))^p\leq \alpha(n) f(S)$.
\end{definition}

\noindent \textbf{High Level Description of the Algorithm and Analysis.}\begin{figure}
	{\includegraphics[width=\textwidth]{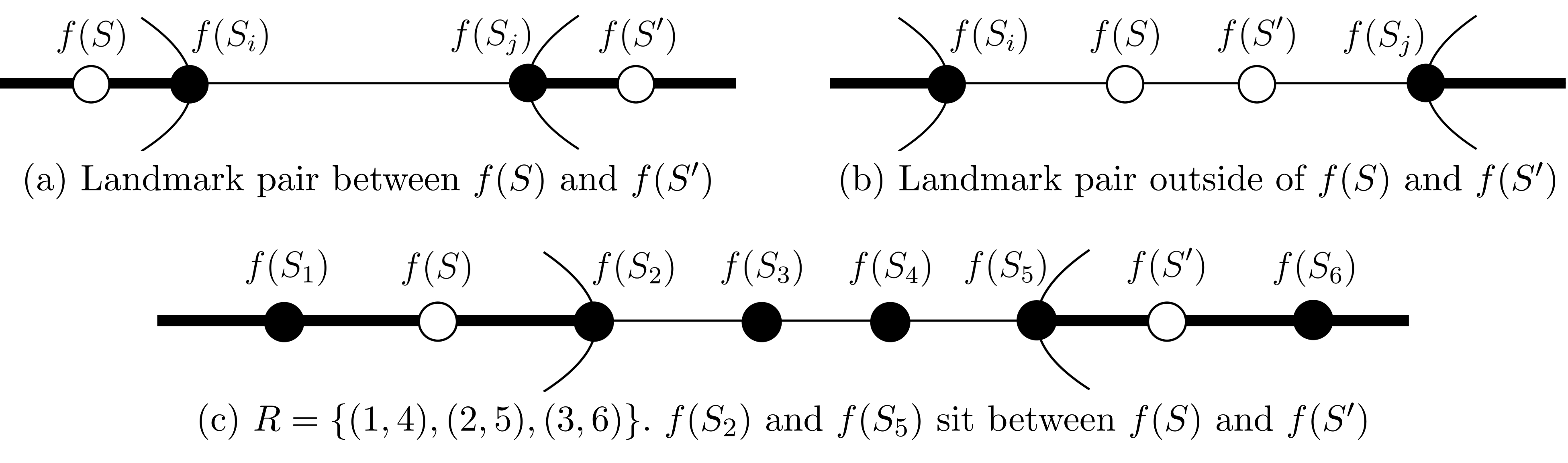}}
    \caption{Illustration of landmark pairs in Algorithm~\ref{alg:active_learning}.}
    \label{fig:landmark_regions}
\end{figure}
As a crucial first step in our algorithm design, we show that if the underlying function $f$ is $\alpha(n)$-approximable, then
there exists a vector $\vec{w}_f \in \R^n$ such that for any sets $S_i$ and $S_j$ in our sample, if $\alpha(n)f(S_i)<f(S_j)$, then $ \vec{w}_f\cdot\chi(S_i)< \vec{w}_f\cdot\chi(S_j)$. As one would expect, this is simply the vector $\vec{w}_f$ referred to in Definition~\ref{def:approximable}.
Taking one step further, this means that there exists a linear separator such that if $f(S) < f(S_i) < f(S_j) < f(S')$ for two sets $S$ and $S'$, then $S$ is labeled as negative and $S'$ is labeled as positive by the linear separator. This situation is illustrated by Figure \ref{fig:landmark_regions}a. This fact alone will not be enough when designing our comparator learning algorithm. After all, we want to predict accurately on all random pairs, not just those that span $f(S_i)$ and $f(S_j)$. Moreover, without real-valued labels, it is impossible to know if $f(S_i)$ and $f(S_j)$ are $\alpha(n)$ apart to begin with.

This suggests the following algorithm. We begin by discretizing the range of $f$ using a series of ``landmarks.'' This is simply an initial random sample, which we sort according to $f$. Since $f$ is $\alpha(n)$-approximable, we know that if two landmarks $S_i$ and $S_j$ are at least $\alpha(n)$ apart, then there exists a weight vector $\vec{w}_{ij} \in \R^n$ and threshold $\theta_{ij} \in \R$ that classifies a set $S$ as negative if $f(S) < f(S_i)$ and positive if $f(S) > f(S_j)$. Namely, $\vec{w}_{ij}=\vec{w}_f$ and $\theta_{ij}=\frac{1}{2}\left[\vec{w}_f\cdot\chi(S_i)+\vec{w}_f\cdot\chi(S_j)\right]$.

Therefore, we attempt to learn $\vec{w}_{ij}, \theta_{ij}$ for every landmark pair $S_i$ and $S_j$ such that $f(S_i) \leq f(S_j)$. From pairwise comparison queries alone, we cannot know if $f(S_i)$ and $f(S_j)$ are $\alpha(n)$ apart, which would mean that $(w_{ij}, \theta_{ij})$ has no training error, so we must attempt to solve for such a linear separator for all landmark pairs. Using a set $R$, we save the indices of the landmark pairs whose corresponding linear separator has no training error. Crucially, we guarantee that it is highly unlikely, on a random query $(S,S')$, that $f(S)$ and $f(S')$ are $\alpha(n)$ apart, and yet there does not exist a pair of landmarks $S_i$ and $S_j$ such that (1) $f(S_i)$ and $f(S_j)$ fall in between $f(S)$ and $f(S')$ and (2) $f(S_i)$ and $f(S_j)$ are also separated by an $\alpha(n)$ factor. This guarantee is illustrated by Figure 1c. If such a pair of landmarks $S_i$ and $S_j$ do exist, we can be confident that we solve for a linear separator that correctly classifies $S$ and $S'$.

Ultimately, on input $(S,S')$, the learned comparator will search for a pair $(i,j)$ in $R$ such that the corresponding linear separator $(\vec{w}_{ij}, \theta_{ij})$ classifies $S$ as positive and $S'$ as negative, or vice versa, in which case $f(S) < f(S')$ or the opposite, respectively. However, we cannot guarantee that this will work as planned just yet. After all, suppose that on a random query $(S,S')$, $f(S)$ and $f(S')$ are $\alpha(n)$ apart but there is some pair of landmarks $S_i$ and $S_j$ such that $f(S)$ and $f(S')$ fall in between $f(S_i)$ and $f(S_j)$, instead of the other way around. This event is illustrated in Figure \ref{fig:landmark_regions}b. In this situation, we can guarantee nothing about how $S$ and $S'$ will be classified. To avoid this case, we remove all but the minimal pairs in $R$. In other words, if there exists $(i,j)$ and $(i', j')$ in $R$ such that $f(S_i) \leq f(S_{i'}) < f(S_{j'}) \leq f(S_j)$, then we remove $(i,j)$ from $R$. Therefore, so long as there do exist two landmarks falling between $f(S)$ and $f(S')$ that are $\alpha(n)$ apart, we can be guaranteed that any bad landmark pair $(S_i, S_j)$, as illustrated by Figure \ref{fig:landmark_regions}b, is removed, and the learned comparator never considers $(\vec{w}_{ij}, \theta_{ij})$ when making a prediction.

Now we state the algorithm formally, and in Corollary~\ref{cor:structure_classes}, show that it can be instantiated for many different combinatorial function classes, achieving polynomial sample complexity as well as efficient runtime.

\begin{algorithm}
\DontPrintSemicolon
\KwIn{Sample $\mathcal{S}\sim\mathcal{D}$ of size $\tilde O(\frac{n}{\epsilon^3})$,
pairwise comparison oracle}
\KwOut{Function $g:2^{[n]}\times 2^{[n]}\rightarrow\{0,1\}$}
\begin{enumerate}
\item Remove $m=\frac{2}{\epsilon}\log(\frac{1}{\epsilon\delta})$ samples uniformly at random from $\mathcal{S}$. Label this set $\mathcal{S}_1$,\\
and label $\mathcal{S}\setminus\mathcal{S}_1$ as $\mathcal{S}_2$.
\item Sort $\mathcal{S}_1$ into $f(S_1)\leq \cdots\leq f(S_m)$. \label{step:sort}
\item Sort $\sample_2$ into the sets $\sample_{ij} = \{S \ | \ S \in \sample_2 \text{ and } f(S) \leq f(S_i) \text{ or } f(S) \geq f(S_j)\}$ for all\\ $S_i, S_j \in \sample_1$ such that $i < j$.
\item For each $S_i,S_j\in \mathcal{S}_1$ (wlog $i<j$), attempt to find $\theta_{ij}\in \mathbb{R}$ and 
$\vec{w}_{ij}\in \mathbb{R}^n$ such that for all $S\in \sample_{ij}$, $f(S)<f(S_i)\implies \vec{w}_{ij} \cdot \chi(S)<\theta_{ij}$ 
and $f(S_j)<f(S)\implies \vec{w}_{ij}\cdot \chi(S)>\theta_{ij}$.\label{step:train}
\item If the previous step is successful, put $(i,j)$ into $R$.
\item Remove all but the ``minimal'' pairs in $R$: if there exists
$(i,j),(i',j')\in R$ such
that\\ $i\leq i'$ and $j\geq j'$, remove $(i,j)$ from $R$. \label{step:minimal}
\item Define the function $g(S,S')$ as follows. Return 1 if $\exists (i,j)\in R$ such
 that $\vec{w}_{ij}\cdot\chi(S)<\theta_{ij}<\vec{w}_{ij}\cdot\chi(S')$.
Otherwise, return 0.
\end{enumerate}
\caption{Algorithm for comparator-learning combinatorial functions.}
\label{alg:active_learning}
\end{algorithm}

\begin{theorem} \label{thm:general}
Let $\mathcal{F}$ be an $\alpha(n)$-approximable class.
Then $\mathcal{F}$ is comparator-learnable with separation $\alpha(n)$, using Algorithm \ref{alg:active_learning}.
\end{theorem}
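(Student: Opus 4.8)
The plan is to verify two things: first, that Algorithm~\ref{alg:active_learning} runs in polynomial time with the claimed sample complexity, and second, that the hypothesis $g$ it outputs satisfies the error guarantee in the definition of comparator-learnability. The running time and the correctness of Step~\ref{step:train} are relatively routine: since $\mathcal{F}$ is $\alpha(n)$-approximable with exponent $p$, I first observe that for the weight vector $\vec{w}_f$ guaranteed by Definition~\ref{def:approximable}, whenever $\alpha(n) f(S_i) < f(S_j)$ we get $\vec{w}_f \cdot \chi(S_i) \le (\text{stuff})^{1/p}$-type inequalities that chain to $\vec{w}_f \cdot \chi(S_i) < \vec{w}_f \cdot \chi(S_j)$, and more importantly $\vec{w}_f$ together with the midpoint threshold correctly separates every set $S$ with $f(S) < f(S_i)$ from every set $S$ with $f(S) > f(S_j)$. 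Hence the linear program in Step~\ref{step:train} is feasible for every ``good'' landmark pair (those at least $\alpha(n)$ apart); feasibility for a given pair is checked in polynomial time by linear programming, and there are at most $m^2 = \mathrm{poly}(1/\epsilon,\log(1/\delta))$ pairs, so the whole algorithm is efficient. The output of Step~\ref{step:train} is error-free on $\mathcal{S}_{ij}$ by construction of $R$.

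The heart of the argument is a structural claim about the landmarks $\mathcal{S}_1$: with probability at least $1 - \delta$ over the draw of $\mathcal{S}_1$, for a fresh pair $S, S' \sim \mathcal{D}$ with (say) $\alpha(n) f(S) \le f(S')$, the probability that there is \emph{no} landmark pair $S_i, S_j$ with $f(S) \le f(S_i) < f(S_j) \le f(S')$ and $\alpha(n) f(S_i) \le f(S_j)$ is at most $\epsilon$. I would prove this by the standard ``hitting rare intervals'' / quantile argument: sort the range of $f$ under $\mathcal{D}$ and repeatedly peel off the bottom $\epsilon/2$ probability mass into ``levels''; there are $O(\frac{1}{\epsilon}\log\frac{1}{\epsilon\delta})$ such levels, and by a coupon-collector / Chernoff bound, $m = \frac{2}{\epsilon}\log\frac{1}{\epsilon\delta}$ landmarks hit every level with probability $\ge 1 - \delta$. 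Then if $f(S)$ and $f(S')$ are $\alpha(n)$ apart and neither lies in the bottom $\epsilon$-mass levels, there must be landmarks in two distinct levels strictly between them, and because $\alpha(n)$-separation is ``wide enough'' relative to a single level's width I can pick $S_i$ from a low level and $S_j$ from a high level nested inside $[f(S), f(S')]$ with $f(S_i), f(S_j)$ still $\alpha(n)$ apart — this is exactly the picture in Figure~\ref{fig:landmark_regions}c. The event that $f(S)$ or $f(S')$ falls in a bottom level contributes at most $\epsilon$ to the failure probability.

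Given the structural claim, I finish as follows. Condition on the good event for $\mathcal{S}_1$, and take $S, S' \sim \mathcal{D}$ with $\alpha(n) f(S) \le f(S')$ (the symmetric case is identical). With probability $\ge 1 - \epsilon$ there is a landmark pair $S_i, S_j$ nested between them and $\alpha(n)$-separated, so $(i,j)$ was placed in $R$ in Step~5. If $(i,j)$ survives the minimality pruning of Step~\ref{step:minimal}, then $(\vec{w}_{ij},\theta_{ij})$ classifies $S$ as negative and $S'$ as positive — because $f(S) < f(S_i)$ and $f(S') > f(S_j)$ and this separator has zero training error on $\mathcal{S}_{ij}$, and I need the extra step that zero training error on the sample plus $\alpha(n)$-approximability forces correct classification of the \emph{particular} fresh points $S,S'$; this follows since $\vec{w}_f$ itself is a valid (population-level) separator for the pair $(S_i,S_j)$, so any feasible solution to the LP, restricted to the relevant halfspaces, agrees with $\vec{w}_f$'s verdict on $S$ and $S'$ — hence $g(S,S') = 1$, the correct answer. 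If instead $(i,j)$ was pruned, it was replaced by some nested $(i',j') \in R$ with $f(S_i) \le f(S_{i'}) < f(S_{j'}) \le f(S_j)$, hence also nested between $f(S)$ and $f(S')$, and the same argument applies to $(i',j')$. So $g$ errs on at most an $\epsilon$ fraction of such pairs, with probability $\ge 1-\delta$ over the sample, which is the definition of comparator-learnable with separation $\alpha(n)$.

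The main obstacle I anticipate is making the quantile/level construction in the structural claim tight enough that a single level's width is genuinely dominated by an $\alpha(n)$-gap, so that ``nested and $\alpha(n)$-apart'' landmarks provably exist whenever $f(S),f(S')$ are $\alpha(n)$-apart and not too extreme; getting the constants and the two uses of the $\epsilon$ budget (one for extreme levels, one for unhit levels folded into the $\delta$) to line up cleanly is the delicate part, everything else is bookkeeping.
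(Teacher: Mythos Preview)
There is a genuine gap in your argument, and it sits exactly where you try to pass from zero training error to correctness on the fresh pair $S,S'$. You write that ``any feasible solution to the LP, restricted to the relevant halfspaces, agrees with $\vec{w}_f$'s verdict on $S$ and $S'$.'' This is false: the LP in Step~\ref{step:train} is solved only over the finite sample $\sample_{ij}$, and there can be many linear separators consistent with that sample which disagree with $\vec{w}_f$ on a particular fresh $S$ or $S'$. The existence of a perfect population-level separator (namely $\vec{w}_f$) does not constrain \emph{which} empirically consistent separator the algorithm returns. In short, you have conflated realizability with generalization.

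The paper handles this with a second claim (Claim~\ref{claim:lin_sep_error}) that you omit entirely: a VC-dimension argument showing that with $|\sample_2| = \tilde O(m^2 n/\epsilon)$ samples, every learned separator $(\vec{w}_{ij},\theta_{ij})$ that achieves zero training error has true error at most $\epsilon/m^2$, and then a union bound over all $m^2$ pairs. This is where the $n$ in the $\tilde O(n/\epsilon^3)$ sample complexity comes from --- your proposal never invokes the dimension of the separators and so never explains why $\sample_2$ needs to scale with $n$. Once you have this generalization bound, the paper's proof structure diverges from yours: rather than arguing that a good nested pair exists and classifies $S,S'$ correctly, it bounds the probability that \emph{any} pair in $R$ classifies them in the reversed order, splitting $R$ into ``outer'' pairs (handled by your structural claim plus minimality) and ``inner'' pairs (handled by the VC bound). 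Your structural/landmark claim is essentially the same as the paper's Claim~\ref{claim:buckets}, and your quantile argument is close to the paper's bucketing; that part is fine. But without the VC step, the proof does not go through.
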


\begin{proof}
First, we show there exists a $\vec{w} \in \R^n$ such that if $\alpha(n) f(S) < f(S')$, then $\vec{w} \cdot \chi(S) < \vec{w} \cdot \chi(S')$.
Since $f$ is from an $\alpha(n)$-approximable class, 
we know there exists $\hat{f}(S) = (\vec{w}_f \cdot\chi(S))^p$ such that
$f(S)\leq \hat{f}(S)\leq \alpha(n) f(S)$ for all $S$.  This
implies that if $\alpha(n) f(S) < f(S')$, then $\hat{f}(S) < \hat{f}(S')$, which in
turn implies that  $(\vec{w}_f \cdot\chi(S))^p < (\vec{w}_f \cdot\chi(S'))^p$. Finally, this means that  
$\vec{w}_f \cdot\chi(S) < \vec{w}_f \cdot\chi(S')$.

Now we prove that the learned comparator has low error by splitting the analysis into two parts.
First, we show that on a random pair $S,S'$, it is unlikely that $f(S)$ and $f(S')$ are an $\alpha(n)$ factor apart and yet there is no landmark pair
$S_i,S_j \in \sample_1$ such that (1) $f(S_i)$ and $f(S_j)$ fall in between $f(S)$ and $f(S')$ (i.e. $f(S) < f(S_i) < f(S_j) < f(S')$) and (2) $f(S_i)$ and $f(S_j)$ are separated by at least an $\alpha(n)$ factor. This is exactly what we need, because if such a pair $S_i,S_j$ does exist, then during Step~\ref{step:train} of Algorithm~\ref{alg:active_learning}, we will have solved for a linear separator $(\vec{w}_{ij}, \theta_{ij})$ that will label $S$ as negative and $S'$ as positive, with high probability. 
We prove this formally in Claim~\ref{claim:buckets}.

The only case where one of the linear separators $(\vec{w}_{ij}, \theta_{ij})$ would fail to label $S$ or $S'$ correctly is if $S$ or $S'$ contribute to the learning error inherent to learning linear separators. To handle this case, we show that on a random pair $S,S' \sim \mathcal{D}$, the probability that $f(S)$ and $f(S')$ are at least an $\alpha(n)$ factor apart and yet some linear separator learned during Step~\ref{step:train} mislabels $S$ or $S'$ is low. We prove this in Claim~\ref{claim:lin_sep_error}.

We combine Claim~\ref{claim:buckets} and Claim~\ref{claim:lin_sep_error} to prove the correctness of Algorithm~\ref{alg:active_learning} in the following way. We claim that with probability at least $1-\delta$, on a random pair $S,S' \sim \mathcal{D}$, the probability that $\alpha(n) f(S) < f(S')$ yet the learned comparator $g$ predicts that $f(S') < f(S)$ is low. This will happen whenever there exists a pair $(i,j) \in R$ such that $\vec{w}_{ij} \cdot \chi(S) > \theta_{ij}$ and $\vec{w}_{ij} \cdot \chi(S') < \theta_{ij}$. In particular, we want to bound \[\Pr_{S,S' \sim \mathcal{D}}[\alpha(n) f(S) < f(S') \text{ and } \exists(i,j) \in R \text{ such that } \vec{w}_{ij} \cdot \chi(S) > \theta_{ij} \text{ and }\vec{w}_{ij} \cdot \chi(S') < \theta_{ij}].\] To analyze this probability, we partition the pairs $(i,j)\in R$ into two sets:
\begin{align*}
&R_1 = \{(i,j) \ | \ f(S_i) < f(S) \text{ and } f(S') < f(S_j)\} \text{ and}\\
&R_2 = \{(i,j) \ | \ f(S)\leq f(S_i) \text{ or } f(S_j) \leq f(S')\}.\end{align*}

Clearly, the probability that $g$ predicts incorrectly on $S,S' \sim \mathcal{D}$ due to a pair $(i,j) \in R$ is simply the probability that $g$ predicts incorrectly due to a pair $(i,j) \in R_1$ or a pair $(i,j) \in R_2$. With this in mind, we first analyze \[\Pr_{S,S' \sim \mathcal{D}}[\alpha(n) f(S) < f(S') \text{ and } \exists(i,j) \in R_1 \text{ such that } \vec{w}_{ij} \cdot \chi(S) > \theta_{ij} \text{ and }\vec{w}_{ij} \cdot \chi(S') < \theta_{ij}].\] Recall that in Step~\ref{step:minimal} of Algorithm~\ref{alg:active_learning}, all non-minimal pairs were from $R$. This means that if $(i,j) \in R_1$, then it must be minimal, so there must not exist $S_{i'}, S_{j'} \in \sample_1$ such that $\alpha(n) f(S) < \alpha(n) f(S_{i'}) < f(S_{j'}) < f(S).$ After all, if such a pair $S_{i'}, S_{j'}$ did exist, then we would have obtained the linear separator $(\vec{w}_{i'j'}, \theta_{i'j'})$ in Step~\ref{step:train}, and $(i,j)$ would have no longer been minimal. Therefore, the probability that $g$ predicts incorrectly due to a pair $(i,j) \in R_1$ is simply \[\Pr_{S,S' \sim \D}\left[\alpha(n) f(S) < f(S') \mbox{ and } \not\exists S_i,S_j \in \sample_1:
\alpha(n) f(S) \leq \alpha(n) f(S_i) < f(S_j) \leq f(S')\right].\] This is exactly the probability we bound in Claim~\ref{claim:buckets}, which means that if we set $\epsilon' = \frac{\epsilon}{2}$ and $\delta' = \frac{\delta}{2}$, then with probability at most $\frac{\delta}{2},$ \[\Pr_{S,S' \sim \mathcal{D}}[\alpha(n) f(S) < f(S') \text{ and } \exists(i,j) \in R_1 \text{ s.t. } \vec{w}_{ij} \cdot \chi(S) > \theta_{ij} \text{ and }\vec{w}_{ij} \cdot \chi(S') < \theta_{ij}] > \frac{\epsilon}{2}.\]

Meanwhile, whenever $g$ predicts incorrectly due to a pair $(i,j) \in R_2$, it means that $f(S) \leq f(S_i)$ and $\vec{w}_{ij} \cdot \chi(S) > \theta_{ij}$ or $f(S_j) \leq f(S')$ and $\vec{w}_{ij} \cdot \chi(S') < \theta_{ij}$. In other words, $S$ or $S'$ contributes to the learning error of $(\vec{w}_{ij}, \theta_{ij})$. This is the probability we bound in Claim~\ref{claim:lin_sep_error}, which means that if we set $\epsilon' = \frac{\epsilon}{2}$ and $\delta' = \frac{\delta}{2}$, we have that with probability at most $\frac{\delta}{2},$ \[\Pr_{S,S' \sim \mathcal{D}}[\alpha(n) f(S) < f(S') \text{ and } \exists(i,j) \in R_2 \text{ s.t. } \vec{w}_{ij} \cdot \chi(S) > \theta_{ij} \text{ and }\vec{w}_{ij} \cdot \chi(S') < \theta_{ij}] > \frac{\epsilon}{2}.\]

Putting these bounds together, we have that with probability at most $\delta,$ \[\Pr_{S,S' \sim \mathcal{D}}[\alpha(n) f(S) < f(S') \text{ and } \exists(i,j) \in R \text{ such that } \vec{w}_{ij} \cdot \chi(S) > \theta_{ij} \text{ and }\vec{w}_{ij} \cdot \chi(S') < \theta_{ij}] > \epsilon.\] Therefore, with probability at least $1-\delta$, the probability that $f(S)\alpha(n) < f(S')$ and $g$ predicts incorrectly is at most $\epsilon$.

\end{proof}
Now we prove the first claim, which guarantees that with probability at least $1-\delta$, at most an $\epsilon$ density of pairs $S, S' \sim \mathcal{D}$ have the property that $\alpha(n) f(S) < f(S')$ and yet there does not exist 
a landmark pair 
$S_i, S_j \in \sample_1$ such that $f(S_i)$ and $f(S_j)$ are separated by a $\alpha(n)$ multiplicative factor and $f(S_i)$ and $f(S_j)$ sit between $f(S)$ and $f(S')$.

\begin{claim}\label{claim:buckets}
A sample size $m = O\left(\frac{1}{\epsilon}\log\frac{1}{\epsilon\delta}\right)$ is
sufficient so that with probability at least $1-\delta$, $\sample_1 = \{S_1, \dots, S_m\}$ has the property that
$$
\Pr_{S,S' \sim \D}\left[\alpha(n) f(S) < f(S') \mbox{ and } \not\exists S_i,S_j \in \sample_1:
\alpha(n) f(S) \leq \alpha(n) f(S_i) < f(S_j) \leq f(S')\right]\leq \epsilon.
$$
\end{claim}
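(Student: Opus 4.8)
The plan is to handle each ``relevant'' pair $(S,S')$ --- those with $\alpha(n) f(S) < f(S')$ --- by reducing its failure event to the \emph{absence} of a landmark in each of two explicitly constructed intervals, and then to control the fraction of failing relevant pairs by a union bound over $\sample_1$ together with Markov's inequality over the draw of $\sample_1$. Throughout I assume $f\ge 0$, which holds for all classes we consider (and is implicit in Definition~\ref{def:approximable}).

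First, fix $(S,S')$ with $\alpha(n) f(S) < f(S')$. I would introduce a \emph{balance point}: a value $x_0\in[f(S),\,f(S')/\alpha(n)]$ for which $p:=\Pr_{T\sim\D}[f(S)\le f(T)<x_0]=\Pr_{T\sim\D}[\alpha(n) x_0\le f(T)\le f(S')]$. Such an $x_0$ exists by an intermediate-value argument: as $x_0$ increases over $[f(S),f(S')/\alpha(n)]$, the first quantity is non-decreasing and starts at $0$, while the second is non-increasing and ends at $\Pr_T[f(T)=f(S')]$ (atoms of $\D$ are absorbed by taking the crossing point and, if necessary, ``splitting'' the atom that sits there). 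The structural point is: if $\sample_1$ contains a set whose $f$-value lies in $[f(S),x_0)$ and a set whose $f$-value lies in $[\alpha(n) x_0,f(S')]$, then these two sets are exactly a landmark pair of the type whose non-existence defines the failure event --- both values lie between $f(S)$ and $f(S')$, and they differ by at least an $\alpha(n)$ factor since the first is below $x_0$ and the second is at least $\alpha(n) x_0$. Hence, conditioned on $(S,S')$, the failure probability over $\sample_1$ is at most $2(1-p)^m$.

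Now split on the size of $p$, with threshold $\epsilon_1=\epsilon/8$. When $p\ge\epsilon_1$, the conditional bound above is $2(1-\epsilon_1)^m\le 2e^{-\epsilon_1 m}$, so $\Pr_{\sample_1,S,S'}[\text{failure and }p\ge\epsilon_1]\le 2e^{-\epsilon_1 m}$, and Markov's inequality over $\sample_1$ shows that, except with probability $\delta$, the $\D$-probability of this part of the failure event is at most $\epsilon/2$, provided $m$ exceeds a suitable constant times $\frac1\epsilon\log\frac1{\epsilon\delta}$. When $p<\epsilon_1$ I would argue --- with no dependence on $\sample_1$ --- that such pairs are rare: $p<\epsilon_1$ forces the mass of $\D$ between $f(S)$ and $x_0$ and the mass between $\alpha(n) x_0$ and $f(S')$ to each be below $\epsilon_1$, which pins the rank of $f(S)$ to within $\epsilon_1$ below the rank of $x_0$ and the rank of $f(S')$ to within $\epsilon_1$ above the rank of $\alpha(n) x_0$. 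Passing to ``smoothed'' ranks $U_S,U_{S'}$ that are exactly uniform on $[0,1]$, and writing $\Psi$ for the monotone map taking the rank of a value $v$ to the rank of $\alpha(n) v$, this confines $U_{S'}$ to an interval of length at most $\bigl(\Psi(U_S+\epsilon_1)-\Psi(U_S)\bigr)+\epsilon_1$; integrating over $U_S$ and using that $\Psi$ is non-decreasing with values in $[0,1]$ gives $\int_0^1\bigl(\Psi(a+\epsilon_1)-\Psi(a)\bigr)\,da\le\epsilon_1$, so this part of the failure event has probability at most $2\epsilon_1=\epsilon/4$. Adding the two contributions yields failure probability at most $3\epsilon/4<\epsilon$ with probability at least $1-\delta$ over $\sample_1$.

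The step I expect to be the main obstacle is the concentrated-mass case $p<\epsilon_1$: when almost all of the $\D$-mass of $[f(S),f(S')]$ lies inside a single multiplicative-width-$\alpha(n)$ window, no two-interval split can be heavy enough to be hit by landmarks, so one genuinely must show such pairs are themselves unlikely --- which is what the monotone-rank integral does --- and this is also where the discreteness of $\D$ forces care, handled by the smoothed-rank device (rather than raw CDF values) and by splitting atoms when defining the balance point.
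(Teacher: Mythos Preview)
Your argument is correct, but it is substantially more analytic than the paper's, which takes a cleaner combinatorial route. The paper first partitions $\R_{\ge 0}$ into $1/\epsilon$ quantile buckets of $\D$-mass $\epsilon$ each; a coupon-collector bound gives that $m=O(\frac{1}{\epsilon}\log\frac{1}{\epsilon\delta})$ landmarks hit every bucket with probability $1-\delta$. Conditioned on this, any pair $(S,S')$ whose buckets $b_{i'},b_{j'}$ are such that $b_{i'+1}$ and $b_{j'-1}$ are ``far'' (every value in one differs from every value in the other by more than $\alpha(n)$) automatically has the required landmark pair; if $b_{i'}$ and $b_{j'}$ are ``close'' the event $\alpha(n)f(S)<f(S')$ cannot occur. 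The residual ``conflict'' configurations are then counted: for each bucket there is at most one bucket it can be in conflict with, so at most $1/\epsilon$ of the $1/\epsilon^2$ bucket pairs are bad, giving probability $O(\epsilon)$.

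Your balance-point construction and the split on $p\gtrless\epsilon_1$ achieve the same decomposition in a continuous, pairwise fashion: your high-$p$ case plays the role of ``every bucket is hit'' (handled by Markov rather than coupon collector), and your low-$p$ rank-integral bound is the analytic analogue of the conflict-counting step --- indeed, $\int_0^1(\Psi(a+\epsilon_1)-\Psi(a))\,da\le\epsilon_1$ is essentially the continuous statement that each rank level can be ``in conflict'' with only an $\epsilon_1$-window of other levels. What you gain is that you never explicitly discretize, so atoms and ties are handled by your smoothed-rank device rather than as edge cases of bucket boundaries; what you pay is several nontrivial technical steps (existence of the balance point under atoms, the definition of $\Psi$, and the smoothed-rank coupling) that the paper avoids entirely. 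The paper's proof is shorter and its high-probability guarantee is deterministic once all buckets are hit, whereas yours uses Markov's inequality to pass from an expectation bound to a high-probability bound, which is equally valid but slightly less sharp in spirit.
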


\begin{proof}
First, note that the distribution $\D$ induces a distribution over values
$f(S) \in \R_{\geq 0}$.  Partition $\R_{\geq 0}$ into $1/\epsilon$ buckets of
probability mass $\epsilon$.
 If $m =
O\left(\frac{1}{\epsilon}\log\frac{1}{\epsilon\delta}\right)$,
then with probability at least $1-\delta$, our sample $\sample_1 = \{S_1, S_2, \ldots, S_m\}$
contains at least one set from every
bucket. Let us assume below this is indeed the case.

Now, define two buckets $b_i$ and $b_j$ to be {\em close} if all points in
$b_i$ are within an $\alpha(n)$ factor of all points in $b_j$.  Define
the buckets to be {\em far} if all points in $b_i$ have a gap
of size greater than $\alpha(n)$ with all points in $b_j$.  Otherwise (some points in
$b_i$ are close to some points in $b_j$ and some are far), say that the
two buckets are {\em in conflict}.
Say that $S$ is in bucket $b_{i'}$ and $S'$ is in bucket $b_{j'}$
(WLOG, $i'\leq j'$). If $b_{i'+1}$ and $b_{j'-1}$ are far, then
we have the desired $S_i$ and $S_j$ (from buckets $b_{i'+1}$ and $b_{j'-1}$).
Also, if $b_{i'}$ and $b_{j'}$ are close buckets,
then we are fine (the bad event in the statement does not occur)
since $S$ and $S'$ must have values that do not
differ by a $\alpha(n)$ factor.  
The final case is if $b_{i'+1}$ is not far from $b_{j'-1}$ and
$b_{i'}$ is not close to $b_{j'}$. This is captured by the case where:
$b_{i'+1}$ is in conflict with $b_{j'-1}$, $b_{i'+1}$ is in
conflict with $b_{j'}$, $b_{i'}$ is in conflict with $b_{j'-1}$, or $b_{i'}$ is in conflict with $b_{j'}$.

Now we show the fraction of pairs of buckets in conflict is low.
If, say, $b_{i'}$ and $b_{j'}$ are in conflict,
then $b_{i'-1}$ and $b_{j'+1}$ must be far and, assuming $i<j-1$,
$b_{i'+1}$ and $b_{j'-1}$ must be close. This implies there are at most
$1/\epsilon$ pairs of buckets in conflict because for every bucket $b_i$, there is at most one bucket $b_j$ it can be in conflict with. Since there are at most $1/\epsilon$ pairs of buckets in
conflict but $1/\epsilon^2$ pairs of buckets overall, this implies
the probability that two buckets are in conflict is $O(\epsilon)$.
\end{proof}
We now provide a proof sketch of Claim~\ref{claim:lin_sep_error}. The full proof can be found in Appendix~\ref{app:multiplicativeError}.

\begin{claim}\label{claim:lin_sep_error} 
Let $P_1 = \Pr_{S \sim \D}[\exists(i,j) \in R \text{ such that } f(S) \leq f(S_i) \mbox{ yet } \vec{w}_{ij} \cdot \chi(S) > \theta_{ij}]$ and
$P_2 = \Pr_{S \sim \D}[\exists(i,j) \in R \text{ such that } f(S) \geq f(S_j)\mbox{ yet } \vec{w}_{ij} \cdot \chi(S) < \theta_{ij}].$
A sample of size \[|\sample_2| = O\left(\frac{m^2}{\epsilon}\left[n \log \frac{m^2}{\epsilon} + \log\frac{1}{\delta}\right]\right)\] is sufficient so that with probability at least $1-\delta$, 
$P_1+P_2 < \epsilon.$
\end{claim}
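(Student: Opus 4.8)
The plan is to prove a uniform‑convergence statement — in fact an $\epsilon$-net bound — over the families of ``error regions'' of \emph{all} linear separators the algorithm could certify, rather than a PAC guarantee applied after the fact to the particular separators in $R$ (which would be circular, since $R$ and the pairs $(\vec w_{ij},\theta_{ij})$ are produced from the same sample $\sample_2$). First I would condition on the landmark set $\sample_1$: since $\sample_1$ is a uniformly random size‑$m$ subset removed from the i.i.d.\ sample, conditionally on $\sample_1$ the remaining sample $\sample_2$ is still a set of i.i.d.\ draws from $\D$, and the values $t_i := f(S_i)$ become fixed constants. For an index $i\in[m]$ and $(\vec w,\theta)\in\R^n\times\R$, define the error regions
\[B^{(1)}(\vec w,\theta,i)=\{S:\,f(S)<t_i\text{ and }\vec w\cdot\chi(S)>\theta\},\qquad B^{(2)}(\vec w,\theta,i)=\{S:\,f(S)>t_i\text{ and }\vec w\cdot\chi(S)<\theta\}.\]
The structural fact driving the argument is that whenever $(i,j)\in R$, the separator $(\vec w_{ij},\theta_{ij})$ is consistent on $\sample_{ij}=\sample_2\cap\{S:f(S)\le t_i\text{ or }f(S)\ge t_j\}$, and since $B^{(1)}(\vec w_{ij},\theta_{ij},i)$ and $B^{(2)}(\vec w_{ij},\theta_{ij},j)$ are contained in that conditioning set, $\sample_2$ contains no point of either region. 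Thus $P_1$ (resp.\ $P_2$) is exactly the probability that a fresh $S\sim\D$ lands in some $\sample_2$-avoided region of type $B^{(1)}$ (resp.\ $B^{(2)}$) indexed by a pair in $R$, up to the measure‑zero boundary event $f(S)\in\{t_1,\dots,t_m\}$, which I would absorb by assuming $\D$ has no atoms (or bounding $\sum_i\Pr[f(S)=t_i]$ separately).

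Next I would bound the $\D$-measure of any $\sample_2$-avoided error region by the $\epsilon$-net theorem. For each fixed $i$, the family $\{B^{(1)}(\vec w,\theta,i):\vec w\in\R^n,\theta\in\R\}$ is the intersection of the fixed set $\{S:f(S)<t_i\}$ with an affine halfspace of $\R^n$, so it has VC dimension at most $n+1$; likewise for the $B^{(2)}$ families. Applying the $\epsilon$-net theorem to each of these $2m$ set systems with accuracy $\epsilon'':=\epsilon/(2m^2)$ and confidence $1-\delta/(2m)$, and union‑bounding over the $2m$ systems, a sample
\[|\sample_2| = O\!\left(\frac{1}{\epsilon''}\Bigl(n\log\tfrac{1}{\epsilon''}+\log\tfrac{m}{\delta}\Bigr)\right) = O\!\left(\frac{m^2}{\epsilon}\Bigl(n\log\tfrac{m^2}{\epsilon}+\log\tfrac{1}{\delta}\Bigr)\right)\]
suffices so that, with probability at least $1-\delta$, every region of the above form that avoids $\sample_2$ has $\D$-measure at most $\epsilon''$. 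On this good event, for each $(i,j)\in R$ we get $\Pr_{S\sim\D}[S\in B^{(1)}(\vec w_{ij},\theta_{ij},i)]\le\epsilon''$ and $\Pr_{S\sim\D}[S\in B^{(2)}(\vec w_{ij},\theta_{ij},j)]\le\epsilon''$; a union bound over the at most $m^2$ pairs in $R$ gives $P_1\le m^2\epsilon''\le\epsilon/2$ and $P_2\le\epsilon/2$, hence $P_1+P_2<\epsilon$. Taking the expectation over $\sample_1$ removes the conditioning.

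I expect the crux to be the conceptual point already flagged: $R$ and the output separators are data‑dependent, so the error bound must hold simultaneously for \emph{every} halfspace the algorithm could certify on its sample, which is precisely why the argument is cast as an $\epsilon$-net bound over a bounded‑VC‑dimension set system (the error regions, which are halfspaces intersected with a fixed level set of $f$) rather than a separate Chernoff/PAC bound for each separator in $R$. The remaining points are routine: the harmless boundary‑tie issue above, and tracking logarithmic factors to land exactly on the stated size — using the standard realizable sample‑complexity bound for halfspaces, and noting that since $m=\tilde O(1/\epsilon)$ the final bound lies in $\tilde O(n/\epsilon^3)$ regardless of the precise logs.
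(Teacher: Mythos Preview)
Your proposal is correct and takes essentially the same approach as the paper: bound the VC dimension of the relevant error-region families by $n+1$ (halfspaces intersected with a fixed level set of $f$), apply the realizable-case VC bound with accuracy $\epsilon/m^2$, and union bound. The paper indexes by pairs $(i,j)$ (giving $m^2$ loss-function classes $\mathcal{L}_{ij}$) rather than by single landmarks as you do (giving $2m$ families), but this is a cosmetic difference that leads to the same sample complexity. Your explicit framing as an $\epsilon$-net/uniform-convergence bound is actually cleaner than the paper's, which somewhat glosses over the data-dependence of $R$ you flag: the paper only explicitly invokes the realizable bound for pairs with $\alpha(n)f(S_i)\le f(S_j)$, whereas what is needed (and what the $\epsilon$-net formulation delivers directly) is that \emph{any} separator with zero empirical error on $\sample_{ij}$ has small true error, regardless of whether the separation condition holds.
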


\begin{proof}[Proof sketch.]
For each $(i,j)$, we define a class of loss functions $\mathcal{L}_{ij}=
\{L_{(\vec{w}_{ij},\theta_{ij})} \ | \ \vec{w}_{ij} \in \R^n, \theta_{ij} \in \R\},$ where for any $(\vec{w}_{ij}, \theta_{ij})$ and any $S \subseteq [n]$, $L_{(\vec{w}_{ij}, \theta_{ij})}(S)$ is equal to 1 if
$f(S)\leq f(S_i)$ and $\vec{w}_{ij}\cdot\chi(S)>\theta_{ij}$ or if $f(S) \geq f(S_j)$ and $\vec{w}_{ij} \cdot \chi(S) < \theta$, and 0 otherwise.
It is straightforward to show the VC dimension of $\mathcal{L}_{ij}$ is the same as the 
VC dimension of the class of linear separators over $\R^n$, which is $n+1$.
We know that for any pair $S_i$ and $S_j$ such that $\alpha(n)f(S_i)\leq f(S_j)$, the empirical risk minimizer of
$\mathcal{L}_{ij}$ will have zero loss over $\mathcal{S}_2$.
Therefore, by standard VC dimension bounds,
if $|\sample_2| = O(\frac{m^2}{\epsilon}[n \log \frac{m^2}{\epsilon} + \log\frac{1}{\delta}])$, then with probability at least $1-\delta$, the error of $h^*$ over $\D$ is at most $\frac{\epsilon}{m^2}$. 
Now we union bound over all $m^2$ pairs $(i,j)$ on which 
Algorithm \ref{alg:active_learning} attempts to solve for a linear threshold 
$(\vec{w}_{ij}, \theta_{ij})$ and thereby achieve an overall error of $\epsilon$.
\end{proof}
These claims complete the proof of Theorem \ref{thm:general}.
By using structural results for submodular, XOS, and subadditive functions, as well as submodular functions with bounded curvature and XOS functions with a polynomial number of SUM trees, 
compiled from \cite{Balcan11submodular,balcan2012learning,iyer2013curvature}
we immediately obtain the following corollary to Theorem \ref{thm:general} 
(the formal proof is in Appendix \ref{app:multiplicativeError}).

\begin{cor} \label{cor:structure_classes}
The following statements are true:
\begin{enumerate}
\item The class of monotone submodular functions is efficiently comparator-learnable with separation $\sqrt{n}$.
\item The class of XOS functions is efficiently comparator-learnable with separation $O(\sqrt{n})$.
\item The class of monotone subadditive functions is efficiently comparator-learnable 
with separation $\sqrt{n}\log{n}$.
\item The class of submodular functions with curvature $\kappa$ is efficiently comparator-learnable with
separation $\min\left\{\sqrt{n},\frac{1}{1-\kappa}\right\}$.\label{cor:curvature}
\item For any $\xi>0$, the class of XOS functions with $R$ SUM trees is comparator-learnable with separation $R^{\xi}$. In this case, the sample complexity and running time of Algorithm~\ref{alg:active_learning} are both polynomial in $n^{\frac{1}{\xi}}$.\label{cor:poly_sum_trees}
\end{enumerate}
\end{cor}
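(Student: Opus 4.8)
The plan is to obtain every item as a direct instantiation of Theorem~\ref{thm:general}: for each class I would name the exponent $p$ and exhibit the weight vector required by Definition~\ref{def:approximable}, quoting the relevant structural approximation result, and then observe that Step~\ref{step:train} of Algorithm~\ref{alg:active_learning} is a linear feasibility problem, hence solvable in time polynomial in the ambient dimension, so the whole procedure is efficient whenever that dimension is polynomial in $n$. For part~1, \cite{goemans2009approximating} (we need only the existence half, as isolated in \cite{Balcan11submodular}) gives, for every monotone submodular $f$ with $f(\emptyset)=0$, a vector $\vec w_f\in\R^n_{\ge 0}$ with $f(S)\le\sqrt{\vec w_f\cdot\chi(S)}\le\sqrt n\, f(S)$, so the class is $\sqrt n$-approximable with $p=\tfrac12$. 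For part~2, \cite{balcan2012learning} gives the same square-root-of-linear form within $O(\sqrt n)$ for XOS functions; for part~3, \cite{balcan2012learning} gives it within $O(\sqrt n\log n)$ for monotone subadditive functions. For part~4, \cite{iyer2013curvature} shows that a monotone submodular function of curvature $\kappa$ is sandwiched by a modular (linear) function up to a factor $\tfrac1{1-\kappa}$, so after rescaling such functions form a $\tfrac1{1-\kappa}$-approximable class with $p=1$; taking, for each $f$, whichever of this approximator and the $\sqrt n$ approximator of part~1 has the smaller ratio yields separation $\min\{\sqrt n,\tfrac1{1-\kappa}\}$. In all four cases $\vec w_f\in\R^n$, so Algorithm~\ref{alg:active_learning} runs with the stated $\tilde O(n/\epsilon^3)$ sample complexity and polynomial running time.

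Part~5 requires one extra ingredient: a tensor lift of the ground set. Fix a positive integer $t$ and the feature map $\phi_t(S)\in\{0,1\}^N$, $N=\binom{n}{\le t}=O(n^t)$, whose coordinate indexed by a nonempty $T\subseteq[n]$ with $|T|\le t$ is $\mathbf 1[T\subseteq S]$. For $f(S)=\max_{j\le R}w_j\cdot\chi(S)$ with $w_j\ge 0$, expanding $(w_j\cdot\chi(S))^t$ and using $\chi(S)\in\{0,1\}^n$ shows $(w_j\cdot\chi(S))^t=u_j\cdot\phi_t(S)$ for some $u_j\ge 0$, so $f(S)^t=\max_{j\le R}u_j\cdot\phi_t(S)$; setting $L(S)=\big(\sum_{j\le R}u_j\big)\cdot\phi_t(S)$ gives $f(S)^t\le L(S)\le R\,f(S)^t$, i.e.\ $f(S)\le L(S)^{1/t}\le R^{1/t}f(S)$. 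Thus in the features $\phi_t$ the class of XOS functions with $R$ SUM trees is $R^{1/t}$-approximable with exponent $p=1/t$. Algorithm~\ref{alg:active_learning} and the proof of Theorem~\ref{thm:general} use only the linear structure of the feature space: the comparison oracle still answers about $f$, and since $x\mapsto x^{1/t}$ is order-preserving on $\R_{\ge 0}$ the chain ``$R^{1/t}$-gap in $f$ $\Rightarrow$ strict inequality in $L$ $\Rightarrow$ linear separability'' goes through verbatim with $\phi_t$ in place of $\chi$. Hence running the algorithm over $\phi_t$ comparator-learns this class with separation $R^{1/t}$ and with sample complexity and runtime polynomial in $N=O(n^t)$. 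Given $\xi>0$, take $t=\lceil 1/\xi\rceil$: then $R^{1/t}\le R^\xi$ (and since the event $R^\xi f(S)\le f(S')$ is contained in $R^{1/t}f(S)\le f(S')$, a smaller separation is only a stronger guarantee), while $N=n^{O(1/\xi)}$ is polynomial in $n^{1/\xi}$, which is part~5.

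The entire conceptual burden sits in the cited structural approximation theorems; the only genuinely new step is the lifting in part~5, and the single thing to verify carefully there is that passing to $\phi_t$ does not alter the learning problem — the samples are still drawn from $\D$, the oracle still compares $f$-values, and $t$-th powers preserve the order of nonnegative reals, so the implications used in the proof of Theorem~\ref{thm:general} survive with $L,\phi_t$ replacing $\hat f,\chi$. Everything else is bookkeeping: Step~\ref{step:train} is a linear program in $n+1$ (resp.\ $N+1$) variables, and the sample sizes appearing in Claims~\ref{claim:buckets} and~\ref{claim:lin_sep_error} are polynomial in the working dimension and $1/\epsilon$.
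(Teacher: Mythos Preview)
Your proof is correct and follows the paper's overall strategy: each item is obtained by instantiating Theorem~\ref{thm:general} with a known $\alpha(n)$-approximability result, and efficiency follows because Step~\ref{step:train} is a linear feasibility problem in the ambient dimension. The only substantive departures are in parts~3 and~5. For part~3, the paper derives the $\sqrt{n}\log n$ bound compositionally---first invoking \cite{Balcan11submodular} to sandwich any monotone subadditive $f$ by a submodular $g$ within a $\log n$ factor, then applying the $\sqrt{n}$ submodular approximation of part~1---whereas you cite a direct $O(\sqrt n\log n)$ result. For part~5, the paper simply quotes the lifted-linear approximation from \cite{balcan2012learning} (a vector $w$ over the feature space $\chi_M$ of subset indicators of size at most $1/\xi$ with $f(S)\le w\cdot\chi_M(S)\le R^\xi f(S)$), whereas you supply a self-contained derivation via the tensor expansion $(w_j\cdot\chi(S))^t=u_j\cdot\phi_t(S)$ together with the elementary bound $\max_j\le\sum_j\le R\max_j$. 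Your construction in fact reproves the structural result the paper cites, and makes the discretization $t=\lceil 1/\xi\rceil$ explicit; the trade-off is that the paper's version is shorter, while yours does not require the reader to consult \cite{balcan2012learning} for the lifted approximation.
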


\subsection{Lower Bounds} \label{sec:lower-bound}

Now, we show that the class of non-negative, monotone, submodular functions is not comparator-learnable with separation $o(n^{1/3}/ \log n)$. This lower bound nearly matches our upper bound from Corollary \ref{cor:structure_classes}.1. To prove this result, we use a special family of matroid rank functions, which form a subset of the class of monotone submodular functions, presented in \cite{Balcan11submodular} and described as follows.

\begin{theorem}[Theorem 7 in \cite{Balcan11submodular}]\label{thm:bh_lb}
For any $k\geq 8$ with $k=2^{o(n^{1/3})}$, there exists a family of sets $\mathcal{A}\subseteq 2^{[n]}$ and a family of matroids $\mathcal{M}= \{\mathcal{M}_{\mathcal{B}}:\mathcal{B}\subseteq\mathcal{A}\}$ with the following properties.
\begin{itemize}
\item $|\mathcal{A}|=k$ and for each $A\in\mathcal{A}$, $|A|=n^{1/3}$.
\item For each $\mathcal{B}\subseteq\mathcal{A}$ and $A\in\mathcal{A}$,
\begin{equation*}
\text{rank}_{\mathcal{M}_{\mathcal{B}}}(A)=
\begin{cases}
8\log k & \text{if }A\in\mathcal{B}  \\
|A| & \text{otherwise.}
\end{cases}
\end{equation*}
\end{itemize}
\end{theorem}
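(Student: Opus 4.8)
The plan is to choose parameters $d := n^{1/3}$ (assume $n$ is a perfect cube; otherwise round) and $r := 8\log k$ (with $\log$ base $2$), then (i) build the family $\mathcal{A}$ by the probabilistic method and (ii) for each $\mathcal{B}\subseteq\mathcal{A}$ build a matroid $\mathcal{M}_{\mathcal{B}}$ on $[n]$ in which every $A\in\mathcal{B}$ has rank exactly $r$ and every $A\in\mathcal{A}\setminus\mathcal{B}$ is independent. Note that $k\ge 8$ gives $r\ge 24$ and $k = 2^{o(n^{1/3})}$ gives $r\le d$ for all large $n$, so throughout $1\le r\le d$.

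\emph{Step 1: the set family.} Draw $A_1,\dots,A_k$ independently, each a uniformly random $d$-subset of $[n]$. For distinct $i,j$ and a fixed $r$-subset $T\subseteq A_i$ we have $\Pr[T\subseteq A_j] = \binom{n-r}{d-r}/\binom{n}{d}\le (d/n)^r$, hence
\[
\Pr\big[\,|A_i\cap A_j|\ge r\,\big]\ \le\ \binom{d}{r}\left(\frac dn\right)^{r}\ \le\ \left(\frac{ed^{2}}{rn}\right)^{r}\ =\ \left(\frac{e}{r\,n^{1/3}}\right)^{r}\ \le\ n^{-r/3},
\]
using $d^{2}/n = n^{-1/3}$ and $r\ge 24>e$. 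A union bound over the $\binom{k}{2}$ pairs shows the probability that some pair has $|A_i\cap A_j|\ge r$ is at most $k^{2}n^{-r/3} = 2^{(\log k)(2-\frac83\log n)}<1$ for $n$ large (since $\log k\ge 3$), and the probability that two of the sets coincide is at most $\binom{k}{2}/\binom{n}{d}<1$. Hence distinct $d$-subsets $A_1,\dots,A_k$ with $|A_i\cap A_j|<r$ for all $i\ne j$ exist; set $\mathcal{A} := \{A_1,\dots,A_k\}$. Then $|\mathcal{A}| = k$ and $|A| = n^{1/3}$ for each $A\in\mathcal{A}$, as required. (If the matroid construction below needs a slightly stronger ``spread'' condition on $\mathcal{A}$ --- e.g.\ a bound on $|A\cap(B_1\cup\cdots\cup B_t)|$ for constantly many $B_\ell\in\mathcal{A}$ --- the identical union-bound argument over $t$-wise unions supplies it, still for every $k = 2^{o(n^{1/3})}$.)

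\emph{Step 2: the matroids, and the main obstacle.} Fix $\mathcal{B}\subseteq\mathcal{A}$. The natural construction is a generic linear realization: in a large ambient space $V$, assign to each $A\in\mathcal{B}$ an $r$-dimensional subspace $W_A$, and to each $i\in[n]$ a vector $v_i\in V$ chosen as generically as possible subject only to ``$v_i\in W_A$ whenever $i\in A\in\mathcal{B}$''; elements in no set of $\mathcal{B}$ get fresh generic coordinates. Let $\mathcal{M}_{\mathcal{B}}$ be the resulting linear matroid and $\mathcal{M} := \{\mathcal{M}_{\mathcal{B}}\}_{\mathcal{B}\subseteq\mathcal{A}}$. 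Since $v_i\in W_A$ for all $i\in A$ when $A\in\mathcal{B}$, automatically $\operatorname{rank}_{\mathcal{M}_{\mathcal{B}}}(A)\le r$; the work --- and the place where $|A_i\cap A_j|<r$ enters --- is to choose the configuration so that simultaneously (i) for each $A\in\mathcal{B}$ the $d$ vectors $\{v_i:i\in A\}$ actually span all of $W_A$, giving $\operatorname{rank}(A) = r = 8\log k$ exactly, and (ii) for each $A\in\mathcal{A}\setminus\mathcal{B}$ the vectors $\{v_i:i\in A\}$ are linearly independent, giving $\operatorname{rank}(A) = |A| = n^{1/3}$. The easy half of (ii) is that a single $B\in\mathcal{B}$ constrains only $|A\cap B|<r$ of these $d$ vectors to an $r$-dimensional space, so no circuit coming from one $\mathcal{B}$-set lies inside $A$; the genuinely delicate part --- the main obstacle --- is to rule out that some \emph{combination} of the subspace-membership constraints accidentally forces a dependence among $\{v_i:i\in A\}$ (or, dually, prevents $\{v_i:i\in A\}$ from spanning $W_A$ for some $A\in\mathcal{B}$), which is a dimension-count argument that relies on the spread of $\mathcal{A}$. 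An equivalent, representation-free route is to define the rank function of $\mathcal{M}_{\mathcal{B}}$ directly --- as a suitable principal truncation of the free matroid along the flats spanned by the sets of $\mathcal{B}$ --- and to verify monotonicity and submodularity, again using the small pairwise intersections. Once either construction is justified and the two rank values checked, $\mathcal{M} = \{\mathcal{M}_{\mathcal{B}}:\mathcal{B}\subseteq\mathcal{A}\}$ is the claimed family.
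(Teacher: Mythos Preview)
The paper does not prove this statement at all: it is quoted as Theorem~7 of \cite{Balcan11submodular} and used as a black box in the proof of Theorem~\ref{thm:sub_lower}. So there is no proof in the present paper to compare your proposal against.

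That said, your proposal is genuinely incomplete, and you say so yourself. Step~1 is routine and correct. In Step~2 you name the ``main obstacle'' --- showing that the claimed rank values hold simultaneously for every $A\in\mathcal{A}$ --- and then do not resolve it; you only assert that ``once either construction is justified'' the theorem follows. But that justification \emph{is} the theorem: the set system is easy, the matroid is the content. Worse, the generic-linear-realization sketch has a concrete bug. If an element $i$ lies in two sets $A,A'\in\mathcal{B}$ (which your Step~1 allows, since you only enforce $|A\cap A'|<r$, not disjointness), then your constraint forces $v_i\in W_A\cap W_{A'}$. For \emph{generic} $r$-dimensional subspaces $W_A,W_{A'}$ of a large ambient space this intersection is $\{0\}$, so $v_i=0$, and then $\operatorname{rank}(A)<r$ already fails. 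Fixing this means the subspaces $\{W_A\}_{A\in\mathcal{B}}$ cannot be chosen generically; they must be arranged compatibly with the full intersection pattern of $\mathcal{B}$, at which point the ``dimension count'' you allude to becomes the whole argument and must actually be carried out.

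The original Balcan--Harvey proof does not go through linear representations at all; it defines the independent sets (equivalently, a rank function) of $\mathcal{M}_{\mathcal{B}}$ combinatorially and verifies the matroid axioms directly, using the small-intersection property of $\mathcal{A}$. If you want to complete your write-up, that is the route to follow; the principal-truncation idea you mention at the end is closer in spirit to what actually works than the generic linear picture.
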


We now formally present our lower bound.

\begin{theorem}\label{thm:sub_lower}
Let $\mathcal{ALG}$ be an arbitrary learning algorithm that uses only a polynomial number of training samples drawn i.i.d. from the underlying distribution and produces a predictor $g$. There exists a distribution $\mathcal{D}$ and a submodular target function $f^*$ such that, with probability at least $1/25$ (over the draw of the training samples),
\[\Pr_{S,S' \sim \mathcal{D}} \left[\alpha(n) f(S) \leq f(S') \text{ and } g \text{ predicts that } f(S) > f(S')\right] \geq \frac{1}{25},\] where $\alpha(n) = \Omega(n^{1/3}/\log n)$.
\end{theorem}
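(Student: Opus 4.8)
The plan is to use the matroid family of Theorem~\ref{thm:bh_lb} to plant a target that an algorithm seeing only polynomially many sets cannot resolve, and then argue that on a fresh random pair the learned comparator is essentially guessing. Fix the learning algorithm $\mathcal{ALG}$ and let $n^c$ be a polynomial bound on the number of distinct sets appearing in its training sample. Instantiate Theorem~\ref{thm:bh_lb} with $k = n^{c+1}$, so that $\log k = o(n^{1/3})$ and $k \geq 8$ for large $n$; this gives $\mathcal{A}$ with $|\mathcal{A}| = k$, $|A| = n^{1/3}$ for all $A \in \mathcal{A}$, and the matroids $\{\mathcal{M}_{\mathcal{B}}\}_{\mathcal{B} \subseteq \mathcal{A}}$. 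Let $\mathcal{D}$ be the uniform distribution on $\mathcal{A}$, and draw $\mathcal{B}$ by including each $A \in \mathcal{A}$ independently with probability $1/2$; set $f_{\mathcal{B}} = \mathrm{rank}_{\mathcal{M}_{\mathcal{B}}}$, which is monotone submodular. On $\mathcal{A}$, $f_{\mathcal{B}}$ takes only the two values $8\log k$ (on $\mathcal{B}$) and $n^{1/3}$ (off $\mathcal{B}$), whose ratio is $\alpha(n) := n^{1/3}/(8\log k) = \Omega(n^{1/3}/\log n)$; this is the separation we will exhibit.

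\textbf{Core argument.} Run $\mathcal{ALG}$ with a sample from $\mathcal{D}$ and comparisons answered by $f_{\mathcal{B}}$, obtaining a comparator $g$, and let $T \subseteq \mathcal{A}$ ($|T| \leq n^c$) be the sets it sees. Because membership in $\mathcal{B}$ is decided coordinatewise, and $A \in \mathcal{B}$ affects $f_{\mathcal{B}}$ only through $\mathrm{rank}(A)$ for that single $A$, conditioning on $T$ leaves $\{\mathbf{1}[A \in \mathcal{B}] : A \notin T\}$ a collection of independent fair coins that is independent of everything $\mathcal{ALG}$ observed (the sampled sets, their multiplicities, and the comparison answers, all of which are determined by $\mathcal{B} \cap T$); hence $g$ is independent of $\mathcal{B}$ restricted to $\mathcal{A} \setminus T$. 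Now draw a test pair $S, S' \sim \mathcal{D}$: with probability $\geq 1 - 2n^c/k - 1/k = 1 - o(1)$ they are distinct and both lie in $\mathcal{A} \setminus T$, so condition on this. Then $(\mathbf{1}[S \in \mathcal{B}], \mathbf{1}[S' \in \mathcal{B}])$ is uniform on $\{0,1\}^2$ and independent of $g$. Fix the unordered pair $\{a,b\} = \{S,S'\}$; by consistency a comparator commits to one of the two orderings, so exactly one of ``$g$ predicts $f(a) > f(b)$ on input $(a,b)$'' and ``$g$ predicts $f(b) > f(a)$ on input $(b,a)$'' holds. The ordered draw picks $(a,b)$ or $(b,a)$ with probability $1/2$ each; on the ordered input $(a,b)$ the bad event requires $a \in \mathcal{B},\, b \notin \mathcal{B}$ (probability $1/4$, giving $\alpha(n) f_{\mathcal{B}}(a) = \alpha(n)\cdot 8\log k \leq n^{1/3} = f_{\mathcal{B}}(b)$) together with $g$ predicting $f(a) > f(b)$. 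Combining, the bad event occurs with probability exactly $\tfrac12 \cdot \tfrac14 + \tfrac12 \cdot \tfrac14 = \tfrac18$ conditioned on the distinct-unseen event, hence overall $\Pr[\text{bad}] \geq \tfrac18(1 - o(1)) > \tfrac{1}{10}$ for large $n$, where the probability is over the sample, $\mathcal{B}$, and the test pair.

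\textbf{From an average target to a fixed one.} Since $\mathbb{E}_{\mathcal{B}}\big[\Pr_{\mathrm{sample},(S,S')}[\text{bad} \mid \mathcal{B}]\big] > \tfrac{1}{10}$, fix a $\mathcal{B}^*$ attaining at least this value and set $f^* = f_{\mathcal{B}^*}$; then $\Pr_{\mathrm{sample},(S,S')}[\text{bad}] > \tfrac{1}{10}$. Writing $q(\tau)$ for the test-pair bad-event probability given the sample $\tau$, we have $\mathbb{E}_\tau[q(\tau)] > \tfrac{1}{10}$, so a reverse Markov inequality (if $\Pr_\tau[q(\tau) \geq \tfrac{1}{25}] < \tfrac{1}{25}$ then $\mathbb{E}_\tau[q(\tau)] < \tfrac{1}{25} + \tfrac{1}{25} = \tfrac{2}{25} < \tfrac{1}{10}$) forces $\Pr_\tau[q(\tau) \geq \tfrac{1}{25}] \geq \tfrac{1}{25}$, which is the stated conclusion with $\alpha(n) = \Omega(n^{1/3}/\log n)$.

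\textbf{Main obstacle.} The delicate step is the independence claim in the core argument: one must confirm that the \emph{pairwise comparison} oracle, not merely cardinal labels, leaks nothing about $\mathcal{B}$ outside $T$ — which is precisely why the Theorem~\ref{thm:bh_lb} gadget (coordinatewise $\mathcal{B}$, each bit affecting only one set's rank) and the choice of $\mathcal{D}$ supported on $\mathcal{A}$ are the right ingredients. A secondary point to make explicit is that a comparator commits to an ordering of each pair, so it cannot evade the asymmetric ``predicts $f(S) > f(S')$'' event by always answering in one direction; once this convention is fixed, the $\tfrac18$ bound above is adversary-proof.
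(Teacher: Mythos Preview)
Your argument is correct and follows essentially the same route as the paper's proof: instantiate the Balcan--Harvey matroid family from Theorem~\ref{thm:bh_lb}, take $\mathcal{D}$ uniform on $\mathcal{A}$, randomize $\mathcal{B}$ so that unseen sets have fresh independent labels, compute a constant lower bound on the expected bad-event probability, fix a worst-case $f^*$, and finish with the same reverse Markov step to extract the $1/25$--$1/25$ conclusion. The only cosmetic differences are your choice $k=n^{c+1}$ versus the paper's $k=8n^c$, and that you make the ``comparator commits to an ordering'' assumption explicit (the paper invokes it implicitly via the ``cannot do better than random guessing'' line), which leads you to the slightly cleaner $1/8$ in place of the paper's $49/512$.
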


\begin{proof}
As in \cite{Balcan11submodular}, we use the family of matroids presented in Theorem~\ref{thm:bh_lb} to show that for a super-polynomial sized set of $k$ points in $\{0,1\}^n$, and for any partition of those points into \textsc{High} and \textsc{Low}, we can construct a matroid where the points labeled \textsc{High} have rank $r_{high}$ and the points labeled \textsc{Low} have rank $r_{low}$, and that $r_{high}/r_{low} = \tilde{\Omega}(n^{1/3})$. We prove that this implies hardness for comparator-learning over the uniform distribution on these $k$ points from any polynomial-sized sample.

Formally, we use the probabilistic method to prove the existence of the $f^*$ referred to in the theorem statement. To this end, suppose that $\mathcal{ALG}$ uses $\ell \leq n^c$ training examples for some constant $c$. To construct a hard family of submodular functions, we apply Theorem~\ref{thm:bh_lb} with $k = 2^t$ where $t = c\log n + 3$. Let $\mathcal{A}$ and $\mathcal{M}$ be the families that are guaranteed to exist, and let $\text{rank}_{M_{\mathcal{B}}}$ be the rank function of a matroid $M_{\mathcal{B}} \in \mathcal{M}$. Let the underlying distribution $\mathcal{D}$ on $2^{[n]}$ be the uniform distribution on $\mathcal{A}$.

Assume that $\mathcal{ALG}$ uses a set $\mathcal{S}$ of $\ell$ training examples. For any $S,S' \in \mathcal{A} \setminus \mathcal{S}$ such that $S \not= S'$, we claim that the algorithm $\mathcal{ALG}$ has no information about how $f^*(S)$ compares to $f^*(S')$, for any target function $f^* = \text{rank}_{M_{\mathcal{B}}}$ where $M_{\mathcal{B}} \in \mathcal{M}$. After all, by Theorem~\ref{thm:bh_lb}, for a fixed labeling of $\sample$ by the values $r_{low}$ and $r_{high}$, there exist exactly $2^{|\mathcal{A}\setminus \sample|}$ matroids in $\mathcal{M}$ whose respective rank functions label $\sample$ according to this fixed labeling. Moreover, for any partition of the points in $\mathcal{A} \setminus \mathcal{S}$ into \textsc{High} and \textsc{Low}, there exists exactly one matroid among those $2^{|\mathcal{A}\setminus \sample|}$ matroids such that the points labeled \textsc{High} have rank $r_{high}$ and the points labeled \textsc{Low} have rank $r_{low}$. In other words, if $M_{\mathcal{B}}$ is drawn uniformly at random from $\mathcal{M}$ and $f^* = \text{rank}_{M_{\mathcal{B}}}$, then the conditional distribution of $f^*(S)$ given $\mathcal{S}$ is uniform in $\{r_{low}, r_{high}\}$.

The set of non-training examples has measure $1-2^{-t+\log \ell}$, and as we have seen, in expectation, half of the non-training sets will have rank $r_{high}$ and half will have rank $r_{low}$. Therefore,\[E_{f^*, \sample} \left[\Pr_{S,S' \sim \mathcal{D}} \left[S, S' \not\in \sample\text{, }f^*(S) = r_{low}, \text{ and } f^*(S') = r_{high}\right]\right] = \left(\frac{1-2^{-t+\log \ell}}{2}\right)^2.\]

Moreover, for $S,S' \not\in\sample$, due to the uniform conditional distribution of $f^*(S)$ and $f^*(S')$ given $\sample$, $\mathcal{ALG}$ cannot determine whether $f^*(S) \geq f^*(S')$ or vice versa better than randomly guessing between the two alternatives. Therefore \begin{align*}
&\text{ }E_{f^*, \mathcal{S}} \left[ \Pr_{S,S' \sim \mathcal{D}} \left[\alpha(n) f^*(S) \leq f^*(S') \text{ and } g \text{ predicts that } f^*(S) > f^*(S')\right]\right]\\
\geq &\text{ }E_{f^*, \mathcal{S}} \left[ \Pr_{S,S' \sim \mathcal{D}} \left[S, S' \not\in \sample\text{, }f^*(S) = r_{low} \text{, } f^*(S') = r_{high} \text{, and } g \text{ predicts that } f^*(S) > f^*(S')\right]\right]\\
\geq &\text{ }\frac{1}{2}\left(\frac{1-2^{-t+\log \ell}}{2}\right)^2\\
\geq &\text{ }\frac{49}{512}.\end{align*}

Therefore, there exists a rank function $f^*$ such that \[E_{\sample}\left[ \Pr_{S,S' \sim \mathcal{D}} \left[\alpha(n) f^*(S) \leq f^*(S') \text{ and } g \text{ predicts that } f^*(S) > f^*(S')\right]\right] \geq \frac{49}{512}.\] We claim that this means that for this fixed $f^*$, \[\Pr_{\sample}\left[ \Pr_{S,S' \sim \mathcal{D}} \left[\alpha(n) f(S) \leq f(S') \text{ and } g \text{ predicts that } f(S) > f(S')\right]\geq \frac{1}{25}\right] \geq \frac{1}{25}.\] After all, suppose not, so \[\Pr_{\sample}\left[ \Pr_{S,S' \sim \mathcal{D}} \left[\alpha(n) f(S) \leq f(S') \text{ and } g \text{ predicts that } f(S) > f(S')\right]\geq \frac{1}{25}\right] < \frac{1}{25}.\] Then setting $X := \Pr_{S,S' \sim \mathcal{D}} \left[\alpha(n) f(S) \leq f(S') \text{ and } g \text{ predicts that } f(S) > f(S')\right]$, we have that \[E_{\sample}[X] \leq 1 \cdot \Pr\left[X \geq \frac{1}{25}\right] + \frac{1}{25}\cdot \Pr\left[X < \frac{1}{25}\right] < 1 \cdot \frac{1}{25} + \frac{1}{25}\cdot 1 = \frac{2}{25} < \frac{49}{512}.\] Of course, this is a contradiction, so the theorem statement holds.
\end{proof}

By a similar argument, we show that the class of XOS functions is not comparator-learnable with separation $o(\sqrt{n}/ \log n)$. The proof of Theorem~\ref{thm:XOS_lower} makes use of a special family of XOS functions presented in \cite{balcan2012learning} and follows the same logic as in the proof of Theorem~\ref{thm:sub_lower}. It can be found in Appendix~\ref{app:multiplicativeError}.

\begin{theorem}\label{thm:XOS_lower}
Let $\mathcal{ALG}$ be an arbitrary learning algorithm that uses only a polynomial number of training samples drawn i.i.d. from the underlying distribution and produces a predictor $g$. There exists a distribution $\mathcal{D}$ and an XOS target function $f^*$ such that, with probability at least $1/25$ (over the draw of the training samples),
\[\Pr_{S,S' \sim \mathcal{D}} \left[\alpha(n) f(S) \leq f(S') \text{ and } g \text{ predicts that } f(S) > f(S')\right] \geq \frac{1}{25},\] where $\alpha(n) = \Omega(\sqrt{n}/\log n)$.
\end{theorem}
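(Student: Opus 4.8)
The plan is to mimic the structure of the proof of Theorem~\ref{thm:sub_lower}, replacing the matroid-rank construction of \cite{Balcan11submodular} with the analogous XOS construction from \cite{balcan2012learning}. Concretely, I would first invoke the lower-bound family of XOS functions from \cite{balcan2012learning}: there is a super-polynomial-sized collection $\mathcal{A} \subseteq 2^{[n]}$ of sets, and for every partition of $\mathcal{A}$ into \textsc{High} and \textsc{Low} there is an XOS function $f_{\mathcal{B}}$ assigning value $r_{high}$ to the \textsc{High} sets and $r_{low}$ to the \textsc{Low} sets, with the crucial property that $r_{high}/r_{low} = \tilde{\Omega}(\sqrt{n})$ (rather than $\tilde{\Omega}(n^{1/3})$ as in the submodular case). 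I would set $\mathcal{D}$ to be the uniform distribution over $\mathcal{A}$, and pick the size of $\mathcal{A}$ to be $2^t$ with $t = c\log n + 3$, so that if $\mathcal{ALG}$ uses $\ell \le n^c$ training samples, the set of non-training examples has $\mathcal{D}$-measure at least $1 - 2^{-t + \log \ell} \ge 1 - 1/8 = 7/8$.

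The heart of the argument is the information-theoretic step, which carries over verbatim: for $S, S' \in \mathcal{A} \setminus \mathcal{S}$ with $S \ne S'$, conditioned on the labels of the training set $\mathcal{S}$, the pair $(f^*(S), f^*(S'))$ is uniform over the four combinations in $\{r_{low}, r_{high}\}^2$ when $f^*$ is drawn uniformly from the family (this uses the same counting fact that for a fixed labeling of $\mathcal{S}$, every extension to $\mathcal{A} \setminus \mathcal{S}$ is realized by exactly one function in the family). Hence $\mathcal{ALG}$ cannot distinguish $f^*(S) \ge f^*(S')$ from the reverse better than a random guess. I would then compute, in expectation over $f^*$ and $\mathcal{S}$,
\[
\Pr_{S,S' \sim \mathcal{D}}\left[S,S' \notin \mathcal{S},\ f^*(S) = r_{low},\ f^*(S') = r_{high}\right] = \left(\frac{1 - 2^{-t+\log\ell}}{2}\right)^2 \ge \left(\frac{7}{16}\right)^2,
\]
and multiplying by the $\tfrac12$ from the random-guess bound gives expected error at least $\tfrac12 (7/16)^2 = 49/512$ over the event $\alpha(n) f^*(S) \le f^*(S')$, since $r_{high}/r_{low} = \tilde\Omega(\sqrt n) \ge \alpha(n)$ for $\alpha(n) = o(\sqrt n/\log n)$.

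Finally I would fix, by averaging, a single XOS function $f^*$ achieving expected error at least $49/512$, and convert the expectation over $\mathcal{S}$ into an in-probability statement via the same Markov-type argument as in Theorem~\ref{thm:sub_lower}: if the probability (over $\mathcal{S}$) that the conditional error exceeds $1/25$ were less than $1/25$, then the expected error would be at most $1/25 + 1/25 = 2/25 < 49/512$, a contradiction. The main thing to get right — the only place this differs substantively from the submodular proof — is confirming that the XOS family of \cite{balcan2012learning} has exactly the combinatorial "every labeling extends uniquely" structure needed for the conditional-uniformity step, and that its gap is $\tilde\Omega(\sqrt n)$; everything downstream is identical bookkeeping. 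I would cite the relevant theorem of \cite{balcan2012learning} for these two facts and then reuse the Theorem~\ref{thm:sub_lower} computation with $n^{1/3}$ replaced by $\sqrt n$.
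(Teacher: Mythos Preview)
Your proposal is correct and follows essentially the same approach as the paper's proof. The one minor discrepancy is that you assume you can \emph{choose} $|\mathcal{A}| = 2^t$ with $t = c\log n + 3$, mirroring the matroid construction; the XOS family the paper cites from \cite{balcan2012learning} instead comes with a fixed size $k = n^{\frac{1}{3}\log\log n}$, so the paper simply uses that $k$ and observes that the non-training mass $1 - n^{c - \frac{1}{3}\log\log n}$ exceeds $7/8$ for $n$ sufficiently large. This does not affect the argument---you only need $|\mathcal{A}|$ to be superpolynomial---but you should state the parameters as the cited theorem actually provides them rather than assuming the size is a free parameter.
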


We also show a lower bound parameterized by the curvature of a submodular function.

\begin{theorem}\label{thm:curvature_lower}
Let $\mathcal{ALG}$ be an arbitrary learning algorithm that uses only a polynomial number of training samples drawn i.i.d. from the underlying distribution and produces a predictor $g$. There exists a distribution $\mathcal{D}$ and a submodular target function $f^*$ with curvature $\kappa$ (possibly known to
the algorithm) such that, with probability at least $1/25$ (over the draw of the training samples),
\[\Pr_{S,S' \sim \mathcal{D}} \left[\alpha(n) f(S) \leq f(S') \text{ and } g \text{ predicts that } f(S) > f(S')\right] \geq \frac{1}{25},\] where 
$\alpha(n) = \frac{n^{1/3}}{O(\kappa\log n)+(1-\kappa)n^{1/3}}$.
\end{theorem}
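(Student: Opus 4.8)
The plan is to run the argument of Theorem~\ref{thm:sub_lower} essentially verbatim, but on a ``damped'' version of the Balcan--Harvey matroid family whose curvature is controlled by $\kappa$. The key observation is that mixing in a modular function reduces curvature without affecting submodularity: if $g(S)=|S|$ and $h$ is a loopless matroid rank function, then $f=(1-\kappa)g+\kappa h$ is monotone submodular, and since $f(i\mid [n]\setminus i)\ge (1-\kappa)g(i\mid [n]\setminus i)=1-\kappa$ while $f(\{i\})=(1-\kappa)+\kappa\, h(\{i\})\le 1$ for every $i$, we get $\kappa_f=1-\min_i f(i\mid[n]\setminus i)/f(\{i\})\le\kappa$ (with equality when the matroid has a non-coloop, which is automatic once its rank is below $n$). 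So the family has curvature $\kappa$, as required, and the verification is immediate from the definitions.

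Concretely, suppose $\mathcal{ALG}$ uses $\ell\le n^c$ training examples, and apply Theorem~\ref{thm:bh_lb} with $k=2^t$, $t=c\log n+3$, obtaining $\mathcal{A}$ (every $A\in\mathcal{A}$ has $|A|=n^{1/3}$) and $\mathcal{M}=\{M_{\mathcal{B}}\}$. For each $\mathcal{B}$ set $f^*_{\mathcal{B}}(S)=(1-\kappa)|S|+\kappa\,\text{rank}_{M_{\mathcal{B}}}(S)$, and let $\mathcal{D}$ be uniform on $\mathcal{A}$. On $\mathcal{A}$ this takes exactly two values: $r_{high}:=n^{1/3}$ when $A\notin\mathcal{B}$, and $r_{low}:=(1-\kappa)n^{1/3}+8\kappa\log k$ when $A\in\mathcal{B}$. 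For $n$ large enough, $8\log k\le n^{1/3}$, so $r_{low}<r_{high}$, and their ratio is $r_{high}/r_{low}=n^{1/3}/\bigl((1-\kappa)n^{1/3}+8\kappa\log k\bigr)=n^{1/3}/\bigl(O(\kappa\log n)+(1-\kappa)n^{1/3}\bigr)$, which we take as $\alpha(n)$; note pairs with $f^*(S)=r_{low}$, $f^*(S')=r_{high}$ then satisfy $\alpha(n)f^*(S)\le f^*(S')$.

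From here the proof is identical to that of Theorem~\ref{thm:sub_lower}. By Theorem~\ref{thm:bh_lb}, conditioned on the labels of the $\ell$ training sets, the values $f^*_{\mathcal{B}}(S)$ on the (measure $\ge 1-2^{\log\ell-t}\ge 7/8$) non-training sets are i.i.d.\ uniform on $\{r_{low},r_{high}\}$ when $\mathcal{B}$ is drawn uniformly from $\mathcal{M}$, so $\mathcal{ALG}$ cannot order two non-training sets better than chance. Hence $E_{f^*,\mathcal{S}}\bigl[\Pr_{S,S'\sim\mathcal{D}}[\alpha(n)f^*(S)\le f^*(S')\text{ and }g\text{ predicts }f^*(S)>f^*(S')]\bigr]\ge \tfrac12\bigl(\tfrac{7/8}{2}\bigr)^2=\tfrac{49}{512}$. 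Fixing an $f^*$ attaining at least this expectation and applying the same Markov-type argument as in Theorem~\ref{thm:sub_lower} (with $X:=\Pr_{S,S'}[\,\cdots]\le 1$ and $E_{\mathcal{S}}[X]\ge\tfrac{49}{512}>\tfrac{2}{25}$) gives $\Pr_{\mathcal{S}}[X\ge\tfrac{1}{25}]\ge\tfrac{1}{25}$, which is the claim.

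The only genuinely new ingredient is the first paragraph --- confirming that the damped construction is submodular with curvature $\kappa$ and, crucially, that adding $(1-\kappa)|S|$ preserves the exact two-value structure on $\mathcal{A}$ that the information-theoretic argument exploits (it does, since $|A|$ is constant on $\mathcal{A}$). I expect the curvature computation, together with the bookkeeping check that the matroids of Theorem~\ref{thm:bh_lb} may be taken loopless (or restricted to their support), to be the main --- though minor --- obstacle; the degenerate case $\kappa=0$ (where $\alpha(n)=1$ and no lower bound is possible, since $f^*$ is then constant on $\mathcal{A}$) is excluded by assuming $\kappa>0$.
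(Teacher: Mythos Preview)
Your proposal is correct and follows essentially the same approach as the paper: damp the Balcan--Harvey matroid ranks via $f_\kappa(S)=\kappa\,\text{rank}_{M_{\mathcal{B}}}(S)+(1-\kappa)|S|$, compute the resulting two values on $\mathcal{A}$ and their ratio, and then reuse the information-theoretic argument of Theorem~\ref{thm:sub_lower} verbatim. If anything, your write-up is slightly more careful than the paper's in explicitly verifying the curvature bound and flagging the degenerate case $\kappa=0$.
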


\begin{proof}
Given a submodular function $f$ with curvature 1, we may convert it
to a submodular function with curvature $\kappa$ by setting
$f_{\kappa}(X)=\kappa f(X)+(1-\kappa)|X|$. This idea allows us to easily modify
the proof of Theorem \ref{thm:sub_lower}.

Suppose that $\mathcal{ALG}$ uses $\ell \leq n^c$ training examples for some constant $c$. We apply Theorem~\ref{thm:bh_lb} with $k = 2^t$ where $t = c\log n + 3$. Let $\mathcal{A}$ and $\mathcal{M}$ be the families that are guaranteed to exist.
Again, $\text{rank}_{M_{\mathcal{B}}}$ denotes the rank function of a matroid 
$M_{\mathcal{B}} \in \mathcal{M}$. Now we define 
$\text{rank}^{\kappa}_{M_{\mathcal{B}}}(A)=\kappa\cdot\text{rank}_{M_{\mathcal{B}}}(A)+
(1-\kappa)|A|$.
Let the underlying distribution $\mathcal{D}$ on $2^{[n]}$ be the uniform distribution on $\mathcal{A}$.
Then for all $\mathcal{B}\subseteq\mathcal{A}$ and $A\in\mathcal{A}$, if $A\in\mathcal{B}$, then 
$\text{rank}^{\kappa}_{M_{\mathcal{B}}}(A)=8\kappa\log k+(1-\kappa)|A|
= 8\kappa\log k+(1-\kappa)n^{1/3}$,
and if $A\notin\mathcal{B}$, then 
$\text{rank}^{\kappa}_{M_{\mathcal{B}}}(A)=\kappa n^{1/3}+(1-\kappa)|A|=n^{1/3}$.
Therefore, $r_{high}/r_{low}= \frac{n^{1/3}}{8\kappa\log k+(1-\kappa)n^{1/3}}$.

As in the previous proof, $\mathcal{ALG}$ has no information about pairs
of sets which are not in the training set.
The rest of the proof is similar to the proof of Theorem \ref{thm:sub_lower}.
\end{proof}

\section{Additive Separation Analysis}\label{sec:additiveError}

Let $f$ be a monotone submodular function with range in $[0,1]$ and fix $\mathcal{D}$ to be the uniform distribution over the $n$-dimensional boolean cube. A slight variant on Algorithm 1 allows us to learn a comparator $g$ which, on input $S,S' \subseteq [n]$, returns whether $f(S)$ is greater than $f(S')$ or vice versa whenever $f(S)$ and $f(S')$ differ by a sufficiently large additive factor $\beta$, rather than a multiplicative factor as in Section~\ref{sec:multiplicativeError}. In this case, we say that $f$ is \emph{comparator-learnable with additive separation $\beta$}.

This result relies on key insights into the Fourier spectrum of submodular functions. In particular, we use the fact that any monotone submodular function $f$ with range in $[0,1]$ is $\gamma$-close to a polynomial $p$ of degree $O\left( \frac{1}{\gamma^{4/5}} \log \frac{1}{\gamma}\right)$ in the $\ell_2$ norm, i.e. $\sqrt{\E[(f(x) - p(x))^2]} < \gamma$ \cite{FeldmanV15}. Specifically, $p$ is a truncation of the Fourier expansion of $f$, consisting only of terms with degree at most $O\left( \frac{1}{\gamma^{4/5}} \log \frac{1}{\gamma}\right)$. The polynomial $p$ can easily be extended to a linear function $h_f$ in $n^{O\left( \frac{1}{\gamma^{4/5}} \log \frac{1}{\gamma}\right)}$-dimensional space that closely approximates $f$ in the $\ell_2$ norm.

With this result in hand, we are in a similar position as we were in Section~\ref{sec:multiplicativeError}, although we now have a means of additively approximating a submodular function, rather than multiplicatively. However, the analysis from Section~\ref{sec:multiplicativeError} does not carry over directly. Recall that in that section, we knew that there existed a $p$-th power of a linear function that approximated the underlying function \emph{everywhere}. Now, due to the probabilistic nature of the $\ell_2$ norm, we can only say that \emph{in expectation}, for a set $S$ drawn at random from $\mathcal{D}$, $h_f(S)$ will be close to $f(S)$.

We address this subtlety in the design of our additive separation algorithm, Algorithm~\ref{alg:additive_error}, in the following way. First, we sample the sets $\sample_1$ and $\sample_2$ as before, and sort them according to the underlying submodular function $f$. Again, $\sample_1$ will serve as our landmarks; the size of $\sample_1$ is large enough so that we can be ensured that if $S$ and $S'$ are drawn uniformly at random, it is unlikely that $f(S)$ and $f(S')$ are at least $\beta$ apart additively, and yet there is no pair of landmarks $S_i$ and $S_j$ such that (1) $f(S_i)$ and $f(S_j)$ fall between $f(S)$ and $f(S')$ and (2) $f(S_i)$ and $f(S_j)$ are also separated by a $\beta$ additive factor.

Next, we solve for the suite of linear separators which allow the output comparator to make predictions. Recall that the linear function $h_f$ which approximates $f$ is in $n^{O\left( \frac{1}{\gamma^{4/5}} \log \frac{1}{\gamma}\right)}$-dimensional space, so rather than mapping each set $S$ to the characteristic vector $\chi(S)$ in order to solve for a linear separator, we must map it to $n^{O\left( \frac{1}{\gamma^{4/5}} \log \frac{1}{\gamma}\right)}$-dimensional space. This mapping is straightforward and is described later on.

The way in which we learn the linear separators in Algorithm~\ref{alg:additive_error} is the main difference between this algorithm and Algorithm~\ref{alg:active_learning}. This is because when designing Algorithm~\ref{alg:active_learning}, we knew that the approximating function approximated $f$ everywhere, whereas $h_f$ only approximates $f$ in the probabilistic $\ell_2$ norm. Therefore, it may be the case that two landmarks $S_i$ and $S_j$ are separated by a $\beta$ additive factor, yet for some $S$ such that $f(S) < f(S_i)$, it happens that $h_f(S) > h_f(S_i)$, for example. In other words, there may be noise in the training set $\sample_2$.

Therefore, when learning the suite of linear separators, we only save the indices of any landmark pair whose corresponding linear separator has low error rate over $\sample_2$. We ensure that $\sample_2$ is large enough so that it is unlikely that for any two landmarks $S_i$ and $S_j$, $f(S_i)$ and $f(S_j)$ are separated by a $\beta$ additive factor and yet the empirical risk minimizing linear separator that classifies $S$ as negative if $f(S) < f(S_i)$ and positive if $f(S) > f(S_j)$ has high error over $\sample_2$. Moreover, we ensure that $\sample_2$ is large enough so that it is unlikely that we learn a linear separator that has a much lower error rate over $\sample_2$ than over the entire distribution. We can conclude that the linear separators we keep track of will have low total error over the distribution.

Finally, as in Algorithm~\ref{alg:active_learning}, we keep track of only the ``minimal'' linear separators: we never keep track of a linear separator corresponding to a pair of landmarks $S_i$ and $S_j$ if we are also keeping track of a linear separator corresponding to a pair $S_{i'}$ and $S_{j'}$ such that $f(S_{i'})$ and $f(S_{j'})$ fall in between $f(S_i)$ and $f(S_j)$.

The output comparator predicts on input $S$ and $S'$ as in Algorithm~\ref{alg:active_learning}. It searches for a linear separator $(\vec{p}_{ij}, \theta_{ij})$ among the remaining minimal linear separators such that $S$ is classified as negative and $S'$ is classified as positive. If it finds one, it outputs 1 ($f(S) < f(S')$). Otherwise, it outputs 0 ($f(S') < f(S)$).

We now present our guarantees on the performance of Algorithm~\ref{alg:additive_error}.

\begin{algorithm}
\DontPrintSemicolon
\KwIn{Parameters $\epsilon, \delta, \beta \in (0,1)$ and $\ell$, a sample $\mathcal{S}$ from the uniform distribution over $2^{[n]}$ of size $\ell$, and a pairwise comparison oracle}
\KwOut{Function $g:2^{[n]}\times 2^{[n]}\rightarrow\{0,1\}$}
\begin{enumerate}
\item Set $\gamma = \beta\left(1 + \frac{2}{\epsilon} \log\frac{1}{\epsilon \delta}\sqrt{\frac{2}{\epsilon}}\right)^{-1}$, $k = \frac{25}{\gamma^{4/5}} \log \frac{\sqrt[3]{2}}{\gamma}$.
\item Remove $m=\frac{2}{\epsilon}\log(\frac{1}{\epsilon\delta})$ samples uniformly at random from $\mathcal{S}$. Label this set $\mathcal{S}_1$, and\\ label
$\mathcal{S}\setminus\mathcal{S}_1$ as $\mathcal{S}_2$.
\item Sort $\mathcal{S}_1$ such that $f(S_1)\leq \cdots\leq f(S_m)$ and sort $\sample_2$ into the sets $\sample_{ij} = \{S \ | \ S \in \sample_2 \text{ and } f(S) \leq f(S_i) \text{ or } f(S) \geq f(S_j)\}$ for all $S_i, S_j \in \sample_1$ such that $i < j$.
\item Let $S_1, \dots, S_{n^k}$ be an arbitrary ordering of the subsets of $[n]$ of size at most $k$, and for $S \subseteq [n]$, define $v(S)\in \R^{n^k}$ such that the $i^{th}$ component of $v(S)$ equals $\chi_{S_i}(S)$.
\item For each $S_i,S_j\in \mathcal{S}_1$, find $\theta_{ij}\in \mathbb{R}$ and $\vec{p}_{ij}\in \mathbb{R}^n$ that minimizes the number of sets\\ $S\in \sample_{ij}$, such that $f(S)<f(S_i)$ and $\vec{p}_{ij} \cdot v(S)>\theta_{ij}$ or $f(S_j)<f(S)$ and $\vec{p}_{ij}\cdot v(S)<\theta_{ij}$.\\ \label{step:add_train} If the fraction of such sets over $\sample_2$ is at most $\frac{\epsilon}{4m^2}$, put $(i,j)$ into $R$.
\item Remove all but the ``minimal'' pairs in $R$: if there exists
$(i,j),(i',j')\in R$ such that\\ $i\leq i'$ and $j\geq j'$, remove $(i,j)$ from $R$.
\item Define the function $g(S,S')$ as follows. Return 1 if $\exists (i,j)\in R$ such that $\vec{p}_{ij}\cdot v(S)<\theta_{ij}<\vec{p}_{ij}\cdot v(S')$.
Otherwise, return 0.
\end{enumerate}
\caption{Algorithm for learning submodular functions up to pairwise comparisons with an additive factor difference.}
\label{alg:additive_error}
\end{algorithm}

\begin{theorem}\label{thm:general_add}
Let $\mathcal{F}$ be the class of monotone submodular functions with range in $[0,1]$. For any $\beta \in (0,1)$, accuracy parameter $\epsilon$, and confidence parameter $\delta$, $\mathcal{F}$ is comparator learnable with separation $\beta$ given a sample of size $\tilde{O} \left(n^{O\left(\frac{1}{\gamma^{4/5}}\log \frac{1}{\gamma}\right)}/\epsilon^3\right),$ where $\gamma = \tilde{O}\left(\beta/\epsilon^{3/2}\right).$
\end{theorem}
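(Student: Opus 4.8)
The plan is to mirror the three-part architecture of the proof of Theorem~\ref{thm:general}, substituting the Fourier-truncation approximation of \cite{FeldmanV15} for the pointwise ``$p$-th root of linear'' approximation and charging the resulting noise to the error budget. First I would pin down the linear surrogate: by \cite{FeldmanV15}, truncating the Fourier expansion of the monotone submodular $f$ at degree $O\!\left(\gamma^{-4/5}\log(1/\gamma)\right)$ --- which is exactly what the choice of $k$ in Algorithm~\ref{alg:additive_error} achieves --- gives a polynomial $p$ with $\E_{S\sim\D}\!\left[(f(S)-p(S))^2\right]\le\gamma^2$, and writing $p(S)=\vec{p}_f\cdot v(S)$ for the lift $v$ of Algorithm~\ref{alg:additive_error} exhibits a linear function $h_f$ on $\R^{n^k}$ with $\E\!\left[(f-h_f)^2\right]\le\gamma^2$. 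Since $\beta=\gamma\!\left(1+\tfrac2\epsilon\log\tfrac1{\epsilon\delta}\sqrt{2/\epsilon}\right)$, i.e.\ $\beta/\gamma=\tilde\Theta\!\left(m/\sqrt\epsilon\right)$ with $m=\tfrac2\epsilon\log\tfrac1{\epsilon\delta}$, Markov's inequality on $\E[(f-h_f)^2]\le\gamma^2$ yields $\Pr_{S\sim\D}\!\left[\,|f(S)-h_f(S)|>\beta/2\,\right]=O(\gamma^2/\beta^2)=O(\epsilon/m^2)$. This is exactly why the separation must be inflated from $\gamma$ to $\beta$: it pushes the misapproximation probability below $\epsilon/m^2$, small enough to survive a union bound over the $\le m^2$ landmark pairs.

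Next I would reprove the two ingredients that change relative to Theorem~\ref{thm:general}. The landmark-coverage statement is the verbatim additive analog of Claim~\ref{claim:buckets}: partition $[0,1]$ into $1/\epsilon$ buckets of $\D$-mass $\epsilon$ under the induced distribution of values, observe that $m=O\!\left(\tfrac1\epsilon\log\tfrac1{\epsilon\delta}\right)$ landmarks hit every bucket w.h.p., and rerun the ``close/far/in-conflict'' argument with the \emph{additive} gap $\beta$ in place of the multiplicative $\alpha(n)$; this shows that with probability $\ge 1-\delta/4$, at most an $\epsilon/4$ fraction of pairs $S,S'\sim\D$ have $f(S)+\beta\le f(S')$ yet admit no $\beta$-separated landmark pair $S_i,S_j$ with $f(S)\le f(S_i)<f(S_j)\le f(S')$. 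For the linear-separator step, the new phenomenon --- contrast with Claim~\ref{claim:lin_sep_error}, where $\beta$-separated pairs admitted zero-training-error separators --- is that a $\beta$-separated pair $S_i,S_j$ now admits only a \emph{low}-error separator: the candidate $(\vec{p}_f,\theta)$ with $\theta=\tfrac12(f(S_i)+f(S_j))$ mislabels a set $S$ with $f(S)<f(S_i)$ only when $|f(S)-h_f(S)|>\beta/2$, and symmetrically on the far side, so (up to the constants fixed in Algorithm~\ref{alg:additive_error}) its $\D$-error is at most $\epsilon/(8m^2)$. The loss class $\mathcal{L}_{ij}$ has VC dimension equal to that of linear separators on $\R^{n^k}$, namely $n^k+1$, so a relative (multiplicative) uniform-convergence bound with $|\sample_2|=O\!\left(\tfrac{m^2}\epsilon\!\left(n^k\log\tfrac{m^2}\epsilon+\log\tfrac{m^2}\delta\right)\right)$ gives, with probability $\ge 1-\delta/4$: (i) every $\beta$-separated landmark pair clears the $\epsilon/(4m^2)$ empirical-error acceptance test and enters $R$, and (ii) every pair kept in $R$ has its separator achieving error $O(\epsilon/m^2)$ over $\D$.

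Finally I would assemble these pieces exactly as in Theorem~\ref{thm:general}. Fix $S,S'\sim\D$ with $f(S)+\beta\le f(S')$ on which $g$ errs; the offending pair lies in $R$, and split $R=R_1\cup R_2$ with $R_1=\{(i,j):f(S_i)<f(S)\text{ and }f(S')<f(S_j)\}$ and $R_2$ its complement. By the minimality pruning, an offending $(i,j)\in R_1$ certifies that no $\beta$-separated landmark pair lies between $f(S)$ and $f(S')$ --- such a pair would, by (i), be in $R$ before the pruning and strictly nested inside $(i,j)$, so the pruning would have removed $(i,j)$ --- hence $R_1$ contributes at most the $\epsilon/4$ of the landmark-coverage step; an offending $(i,j)\in R_2$ forces $S$ or $S'$ into the loss region of $\mathcal{L}_{ij}$, and a union bound over the $\le m^2$ kept pairs bounds this by $O(\epsilon)$ via (ii). After rescaling the internal constants the total failure probability is $\le\delta$ and the error $\le\epsilon$; the same argument with $S,S'$ interchanged handles $f(S')+\beta\le f(S)$, so $\mathcal F$ is comparator-learnable with additive separation $\beta$. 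The sample size is $m+|\sample_2|=\tilde O\!\left(n^k/\epsilon^3\right)=\tilde O\!\left(n^{O(\gamma^{-4/5}\log(1/\gamma))}/\epsilon^3\right)$ with $\gamma$ as set in Algorithm~\ref{alg:additive_error}, matching the claimed bound. I expect the main obstacle to be precisely this passage from ``$h_f$ approximates $f$ everywhere'' to ``$h_f$ approximates $f$ only in $\ell_2$'': it is what forces (a) the Markov bound together with the enlarged separation $\beta=\Theta(\gamma m/\sqrt\epsilon)$, (b) the replacement of the exact ``zero training error'' acceptance test of Algorithm~\ref{alg:active_learning} by a ``$\le\epsilon/(4m^2)$ training error'' test, and (c) the corresponding switch from a realizable-case to a relative-error uniform-convergence bound.
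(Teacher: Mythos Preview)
Your proposal is correct and follows essentially the same architecture as the paper's proof: the Fourier-truncation surrogate from \cite{FeldmanV15}, the Chebychev/Markov tail bound (the paper's Lemma~\ref{lemma:add_tail}), the additive analog of Claim~\ref{claim:buckets}, the observation that $\beta$-separated landmark pairs admit a separator of $\D$-error $O(\epsilon/m^2)$, and the $R_1/R_2$ assembly are all exactly what the paper does in Claims~\ref{claim:buckets_additive} and~\ref{claim:lin_sep_error_add}. The only cosmetic difference is that you derive both (i) and (ii) from a single relative uniform-convergence bound, whereas the paper splits them into a Chernoff bound on the fixed candidate $(\vec{p},\tilde\theta_{ij})$ (for (i)) and an agnostic VC bound over all retained separators (for (ii)); your packaging is slightly cleaner and more directly yields the stated $\tilde O(n^k/\epsilon^3)$ sample size.
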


We note that Algorithm \ref{alg:additive_error} is efficient given access to an ERM oracle for agnostically learning linear separators. Moreover, even in the model where the learning algorithm has access to real-valued function labels, for a monotone submodular function $f$ with range in $[0,1]$, the best known results for learning a function $h$ such that $||f-h||_2 \leq \epsilon$ require running time $2^{\tilde{O}(1/\epsilon^{4/5})}\cdot n^2$ and $2^{\tilde{O}(1/\epsilon^{4/5})}\log n$ random examples \cite{FeldmanV15}.

For ease of notation, we set $k = \frac{25}{\gamma^{4/5}} \log \frac{\sqrt[3]{2}}{\gamma}$ for the remainder of this section, where the constants come from the analysis in \cite{FeldmanV15}.

Let $f$ be a monotone submodular function with range in $[0,1]$. As we alluded to in the introduction of this section, we exploit existence of a polynomial $p$ of degree $O\left( \frac{1}{\gamma^{4/5}} \log \frac{1}{\gamma}\right)$ that closely approximates $f$ in the $\ell_2$ norm to show that there exists a vector $\vec{p}$ in $n^k$-dimensional space such that for $S,S' \sim \D$, with probability at least $1-\delta$, $f(S) + \beta < f(S')$ and $\vec{p} \cdot v(S) < \vec{p} \cdot v(S')$. Here, $v$ is a mapping from $n$-dimensional space to $n^k$-dimensional space, which we describe in the following analysis. Once we know that this vector $\vec{p}$ exists, we can attempt to solve for a set of the linear threshold functions $\vec{p}_{ij}, \theta_{ij}$ as in Algorithm 1, which will allow us to define the output predictor $g$. To this end, we now show that such a vector $\vec{p}$ in $n^k$-dimensional space does exist.

\begin{theorem}
Let $f:2^{[n]} \to [0,1]$ be a monotone submodular function, $\D$ be the uniform distribution over $2^{[n]}$, and $\gamma \in (0,1)$. There exists a vector $\vec{p}$ in $n^k$-dimensional space and a mapping $v$ from $2^{[n]}$ to $2^{\left[n^k\right]}$ such that $\sqrt{\E_{S \sim \D}[(f(S) - \vec{p} \cdot v(S))^2]} \leq \gamma.$
\end{theorem}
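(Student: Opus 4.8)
The plan is to reduce this statement to the Fourier-approximation theorem of Feldman and Vondr\'ak \cite{FeldmanV15} together with a routine change of basis that re-expresses a low-degree polynomial on the cube as a linear function in a lifted, $n^k$-dimensional feature space.

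First I would invoke \cite{FeldmanV15}: since $f:2^{[n]}\to[0,1]$ is monotone and submodular, the truncation $p$ of the Fourier expansion of $f$ to terms of degree at most $k=\frac{25}{\gamma^{4/5}}\log\frac{\sqrt[3]{2}}{\gamma}$ satisfies $\sqrt{\E_{S\sim\D}[(f(S)-p(S))^2]}\le\gamma$ when $\D$ is the uniform distribution over $2^{[n]}$. Here we are simply fixing the constants hidden inside the bound $O(\gamma^{-4/5}\log(1/\gamma))$ on the degree quoted in the introduction of this section; this constant-tracking is one of the two places where a little care is needed.

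Next I would rewrite $p$ as $\vec{p}\cdot v(S)$ for the mapping $v$ used in Algorithm~\ref{alg:additive_error}. The Fourier expansion of $f$ is written in the $\{\pm1\}$-character basis, but each character coordinate is an affine function of the corresponding coordinate $(\chi(S))_i\in\{0,1\}$, so changing variables to the $\{0,1\}$-valued coordinates does not increase the degree. Hence $p$ is a linear combination of monomials $\chi_T(S):=\prod_{i\in T}(\chi(S))_i=\mathbbm{1}[T\subseteq S]$ over sets $T$ with $|T|\le k$: say $p(S)=\sum_{|T|\le k}c_T\,\chi_T(S)$. Letting $S_1,\dots,S_{n^k}$ enumerate the subsets of $[n]$ of size at most $k$ (there are at most $n^k$ of them, padding with zero coefficients if there are fewer), define $\vec{p}\in\R^{n^k}$ by $(\vec{p})_i=c_{S_i}$ and $v(S)\in\{0,1\}^{n^k}$ by $(v(S))_i=\chi_{S_i}(S)$, exactly as in the algorithm. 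Then $\vec{p}\cdot v(S)=p(S)$ for all $S$, and combining this with the previous paragraph yields $\sqrt{\E_{S\sim\D}[(f(S)-\vec{p}\cdot v(S))^2]}\le\gamma$, which is the claim.

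I expect essentially no genuine obstacle here: the substantive content is entirely in \cite{FeldmanV15}. The only mildly tedious points are (i) verifying that the explicit $k$ above indeed meets the degree bound of \cite{FeldmanV15} with its constants, and (ii) checking that the number of subsets of $[n]$ of size at most $k$ is at most $n^k$ for the relevant range of $\gamma$, so that the dimension claimed in the statement and used by the algorithm is correct.
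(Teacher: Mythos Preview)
Your proposal is correct and follows essentially the paper's argument: invoke the low-degree Fourier concentration of \cite{FeldmanV15} and then write the resulting degree-$k$ polynomial as a linear function in the $n^k$-dimensional space indexed by subsets of size at most $k$. The only difference is that the paper keeps the $\{\pm1\}$ parity characters as the features---so $v(S)_i=(-1)^{|S\cap S_i|}$ and $\vec p$ is simply the vector of Fourier coefficients $\hat f(S_i)$---rather than performing your change of basis to $\{0,1\}$ indicator monomials; both choices work, though your remark that the $\{0,1\}$ version is ``exactly as in the algorithm'' slightly misreads the paper's notation for $\chi_{S_i}(S)$.
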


\begin{proof}
\cite{FeldmanV15} proved that for $\gamma \in (0,1)$, if we set $\kappa = \frac{1}{\gamma^{4/5}}\log \frac{1}{\gamma}$, then there exists $L \subseteq [n]$, $|L| \leq \frac{24}{\gamma^{4/5}} \log \frac{\sqrt[3]{2}}{\gamma},$ such that if $p(T) = \sum_{S:|S\setminus L| \leq \kappa} \hat{f}(S) \chi_S(T),$ then $||f-p||_2 = \sqrt{\E_{S \sim \D}[(f(S) - p(S))^2]} < \gamma$. Here, $\chi_S(T) = (-1)^{|T \cap S|}$ and $\hat{f}(S)$ is the Fourier coefficient of $f$ on $S$.

Unfortunately, we cannot find $L$ without knowing the value $f$ on our samples. However, we can simply extend the polynomial $p$ to include all summands from the Fourier expansion of $f$ up to sets of size $k \geq |L|+\kappa$ and thus obtain an even better approximation to $f$, a polynomial which we call $p_0$. Next, we can easily write $p_0$ as a linear mapping from $n^k$-dimensional space to $\R$ as follows. Let $S_1, \dots, S_\ell$ be an ordering of the sets $S \subseteq [n]$ such that $|S| \leq k$. Next, let \begin{align}\label{eq:pVec} \vec{p} &= \left( \hat{f}\left(S_1\right), \dots, \hat{f}\left(S_\ell\right)\right)\text{ and }  v(S) = \left(\chi_{S_1}(S) \dots, \chi_{S_\ell}(S)\right)\end{align} for all $S \subseteq [n]$. Then $p_0(S) = \vec{p} \cdot v(S)$.
\end{proof}
We are now in a similar situation as we were in Section~\ref{sec:multiplicativeError} when we were analyzing the case with a multiplicative approximation factor. In particular, we knew that so long as $\alpha(n) f(S) < f(S')$, then there had to exist a weight vector $\vec{w}$ such that $\vec{w} \cdot \chi(S) < \vec{w} \cdot \chi (S')$. Now, we know that there is some probability that $f(S) + \beta < f(S')$ and $\vec{p} \cdot v(S) < \vec{p} \cdot v(S')$. In the following lemmas, we derive a lower bound on that probability, which in turn allows us to provide strong guarantees on the performance of Algorithm~\ref{alg:additive_error}.

Lemma~\ref{lemma:add_tail} is an immediate consequence of Parseval's identity and Chebychev's inequality and the proof of Lemma~\ref{lem:add_concentration} can be found in Appendix~\ref{app:additiveError}.

\begin{lemma}\label{lemma:add_tail}
Given $\gamma,\xi \in (0,1)$, let $\vec{p}$ and $v$ be defined by Equation~\ref{eq:pVec}. Then \newline $\Pr_{S \sim \D}\left[|f(S) - \vec{p}\cdot v(S)| > \gamma \left(1+\sqrt{1/\xi}\right)\right] < \xi.$
\end{lemma}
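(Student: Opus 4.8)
The plan is to apply Chebyshev's inequality to the random variable $Z := f(S) - \vec{p}\cdot v(S)$ with $S \sim \D$, after first observing via Parseval that $Z$ has mean zero and variance at most $\gamma^2$. First I would record that, by the preceding theorem together with the explicit construction in Equation~\ref{eq:pVec}, we have $\E_{S \sim \D}\left[(f(S) - \vec{p}\cdot v(S))^2\right] \leq \gamma^2$; this is the $\ell_2$-approximation guarantee, applied to the polynomial $p_0$ that Equation~\ref{eq:pVec} actually encodes.

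Next I would use the Fourier picture. Since $\vec{p}\cdot v(S) = p_0(S) = \sum_{S' : |S'| \leq k} \hat{f}(S')\,\chi_{S'}(S)$ retains \emph{every} Fourier coefficient of $f$ on sets of size at most $k$, the difference has Fourier expansion $Z = \sum_{S' : |S'| > k} \hat{f}(S')\,\chi_{S'}$. In particular its constant coefficient vanishes, so by orthonormality of the characters $\E_{S \sim \D}[Z] = \hat{f}(\emptyset) - \hat{f}(\emptyset) = 0$, and by Parseval's identity $\mathrm{Var}(Z) = \E[Z^2] = \sum_{S' : |S'| > k}\hat{f}(S')^2 \leq \gamma^2$.

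Finally I would invoke Chebyshev's inequality with deviation $a = \gamma\left(1 + \sqrt{1/\xi}\right)$: since $\E[Z] = 0$ and $\mathrm{Var}(Z) \leq \gamma^2$,
\[
\Pr_{S \sim \D}\left[|f(S) - \vec{p}\cdot v(S)| > a\right] \leq \frac{\mathrm{Var}(Z)}{a^2} \leq \frac{\gamma^2}{\gamma^2\left(1 + \sqrt{1/\xi}\right)^2} = \frac{1}{1 + 2\sqrt{1/\xi} + 1/\xi} < \frac{1}{1/\xi} = \xi,
\]
which is exactly the claimed bound. I do not anticipate a real obstacle here: the argument is essentially bookkeeping. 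The one point to be careful about is that $p_0$ (rather than the non-constructive polynomial $p$ of \cite{FeldmanV15}) is what appears in Equation~\ref{eq:pVec}, so one must confirm that both the $\ell_2$ guarantee and the ``the constant Fourier coefficient is preserved'' observation hold for $p_0$; both are immediate since $p_0$ is obtained from $p$ only by adding further low-degree Fourier terms of $f$.
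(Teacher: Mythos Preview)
Your proposal is correct and matches the paper's own approach: the paper states that the lemma ``is an immediate consequence of Parseval's identity and Chebychev's inequality,'' which is precisely the argument you spell out. Your added care in verifying that $p_0$ preserves the constant Fourier coefficient (so $\E[Z]=0$) and inherits the $\ell_2$ guarantee from $p$ is a welcome bit of hygiene that the paper leaves implicit.
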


\begin{lemma}\label{lem:add_concentration}
Given $\gamma,\xi \in (0,1)$, let $\vec{p}$ and $v$ be defined by Equation~\ref{eq:pVec}. Then \newline $\Pr_{S_1,S_2 \sim \D} \left[f(S_1) + 2\gamma\left(1+\sqrt{2/\xi}\right) < f(S_2) \text{ and }\vec{p}\cdot v(S_1) \leq \vec{p}\cdot v(S_2)\right] > 1-\xi.$
\end{lemma}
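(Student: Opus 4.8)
The plan is to reduce Lemma~\ref{lem:add_concentration} to the one-point tail bound of Lemma~\ref{lemma:add_tail} via a union bound over the two independent draws. The quantity to control is the probability of the ``bad'' event that $S_1$ and $S_2$ are additively separated by more than $2\gamma\bigl(1+\sqrt{2/\xi}\bigr)$ in the value $f$, yet the linear proxy $\vec{p}\cdot v$ reverses their order: $f(S_1)+2\gamma\bigl(1+\sqrt{2/\xi}\bigr)<f(S_2)$ while $\vec{p}\cdot v(S_1)>\vec{p}\cdot v(S_2)$. This is the complement of the event in the statement (read as: the separation forces $\vec{p}\cdot v(S_1)\le\vec{p}\cdot v(S_2)$), so showing it has probability less than $\xi$ gives the claimed bound of $1-\xi$.

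First I would invoke Lemma~\ref{lemma:add_tail} with its free parameter set to $\xi/2$, obtaining that for a single draw $S\sim\D$, $\Pr_S\bigl[\,|f(S)-\vec{p}\cdot v(S)|>\gamma(1+\sqrt{2/\xi})\,\bigr]<\xi/2$. Since $S_1$ and $S_2$ are independent, a union bound shows that with probability at least $1-\xi$ the ``good'' event $G$ holds, namely $|f(S_i)-\vec{p}\cdot v(S_i)|\le\gamma(1+\sqrt{2/\xi})$ for both $i=1,2$ simultaneously.

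Next I would argue that on $G$ the bad event cannot occur, by a two-sided triangle-inequality estimate: on $G$ we have $\vec{p}\cdot v(S_1)\le f(S_1)+\gamma(1+\sqrt{2/\xi})$ and $\vec{p}\cdot v(S_2)\ge f(S_2)-\gamma(1+\sqrt{2/\xi})$, so whenever $f(S_1)+2\gamma(1+\sqrt{2/\xi})<f(S_2)$,
\[\vec{p}\cdot v(S_2)-\vec{p}\cdot v(S_1)\ \ge\ f(S_2)-f(S_1)-2\gamma\bigl(1+\sqrt{2/\xi}\bigr)\ >\ 0,\]
hence $\vec{p}\cdot v(S_1)<\vec{p}\cdot v(S_2)$, ruling out order reversal. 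Consequently the bad event is contained in $\overline{G}$, which has probability less than $\xi$.

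The one point where constants must line up is that the separation threshold $2\gamma\bigl(1+\sqrt{2/\xi}\bigr)$ in the hypothesis is chosen as exactly twice the per-point deviation bound coming from Lemma~\ref{lemma:add_tail} at parameter $\xi/2$, so that the two one-sided approximation errors cancel precisely. Beyond bookkeeping this argument has no real obstacle — all of the analytic content (Parseval's identity plus Chebyshev's inequality applied to the truncated Fourier tail) is already packaged inside Lemma~\ref{lemma:add_tail}, so the proof here is just a union bound and a triangle inequality.
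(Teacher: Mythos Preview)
Your proposal is correct and mirrors the paper's own proof essentially step for step: apply Lemma~\ref{lemma:add_tail} with parameter $\xi/2$, union bound over the two draws to get the two-sided approximation $|f(S_i)-\vec p\cdot v(S_i)|\le\gamma(1+\sqrt{2/\xi})$ simultaneously with probability at least $1-\xi$, and then chain the inequalities to conclude $\vec p\cdot v(S_1)\le\vec p\cdot v(S_2)$ under the separation hypothesis. Your remark that the statement should be read as ``separation forces the correct $\vec p\cdot v$ ordering'' matches how the paper's proof treats it as well.
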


We are now ready to prove the correctness of Algorithm~\ref{alg:additive_error}.

\begin{proof}[Theorem~\ref{thm:general_add} proof sketch.]
In keeping with the outline of the proof of Theorem~\ref{thm:general}, we prove Claim~\ref{claim:buckets_additive}, a parallel to Claim~\ref{claim:buckets}, and Claim~\ref{claim:lin_sep_error_add}, a parallel to Claim~\ref{claim:lin_sep_error}. We then combine Claim~\ref{claim:buckets_additive} and Claim~\ref{claim:lin_sep_error_add} as we combined Claim~\ref{claim:buckets} and Claim~\ref{claim:lin_sep_error} to prove Theorem~\ref{thm:general_add}.

For the most part, Claim~\ref{claim:buckets_additive} follows from Claim~\ref{claim:buckets}. The proof of the latter is not specific to a multiplicative factor difference; it can easily be extended to an additive factor difference. In particular, it follows that if $\sample_1$, our set of ``landmarks'' which discretize the range of $f$, is sufficiently large, then on a random draw of $S$ and $S'$, it is unlikely that $f(S)$ and $f(S')$ are separated by an additive $\beta$ factor and yet there does not exist two landmarks $S_i$ and $S_j$ such that (1) $f(S_i)$ and $f(S_j)$ fall in between $f(S)$ and $f(S')$ and (2) $f(S_i)$ and $f(S_j)$ are separated by an additive factor of $\beta$.

However, this does not mean that a pair $(i,j)$ such that $f(S_i) + \beta < f(S_j)$ will necessarily be added to $R$. This again highlights the difference between this analysis and the analysis in Section~\ref{sec:multiplicativeError}. In that section, we were guaranteed that if $\alpha(n) f(S_i) < f(S_j)$, then there had to exist a linear threshold function that will label $S$ as negative if $f(S) < f(S_i)$ and positive if $f(S) > f(S_j)$. Now, we can only make a probabilistic argument about the likelihood that $f(S_i) + \beta < f(S_j)$ and $(i,j)$ is added to $R$. In particular, in Claim~\ref{claim:buckets_additive}, we show that if $|\sample_1|$ \emph{and} $|\sample_2|$ are sufficiently large, then on a random draw of $S$ and $S'$, it is unlikely that $f(S)$ and $f(S')$ are separated by an additive $\beta$ factor and yet there does not exist two landmarks $S_i$ and $S_j$ such that (1) $f(S_i)$ and $f(S_j)$ fall in between $f(S)$ and $f(S')$ and (2) the corresponding linear separator has small training error.

Next, we show in Claim~\ref{claim:lin_sep_error_add} that the probability that there exists a linear separator with much lower empirical error than true error is small. Since we only save linear separators that have small empirical error, this means that the true error will be small as well. In other words, for any linear separator corresponding to two landmarks $S_i$ and $S_j$ that we save, the probability that $f(S) > f(S_j)$ yet the linear separator classifies $S$ as negative is small, and the probability that $f(S) < f(S_i)$ yet the linear separator classifies $S$ as positive is small.

Finally, we rely on both Claim~\ref{claim:buckets_additive} and Claim~\ref{claim:lin_sep_error_add} to show that on a random draw of $S,S' \sim \mathcal{D}$, it is unlikely that $f(S)$ and $f(S')$ are separated by an additive $\beta$ factor and yet the algorithm predicts incorrectly. The formal way in which we combine Claim~\ref{claim:buckets_additive} and Claim~\ref{claim:lin_sep_error_add} to prove Theorem~\ref{thm:general_add} is similar to the proof of Theorem~\ref{thm:general}, and can be found in Appendix~\ref{app:additiveError}.
\end{proof}
Now, we provide the formal statement and proof of Claim~\ref{claim:buckets_additive}. The proof relies on the Chernoff bounding technique to show that for two landmarks $S_i$ and $S_j$, it is unlikely that that $f(S_i)$ and $f(S_j)$ are at least $\beta$ apart and yet the empirical risk minimizing linear separator has high empirical error over $\sample_2$.

\begin{claim}\label{claim:buckets_additive}
Sample sizes $|\sample_1| = O\left(\frac{1}{\epsilon}\log\frac{1}{\epsilon\delta}\right)$ and $|\sample_2| = O\left(\frac{m^2}{\epsilon} \log \frac{m^2}{\epsilon}\right)$ are
sufficient so that with probability $\geq 1-\delta$, \newline $\Pr_{S,S' \sim \D}\left[f(S')> f(S) + \beta \mbox{ and } \not\exists (i,j) \in R:
f(S') \geq f(S_j) > f(S_i) + \beta \geq f(S) + \beta\right] \leq \epsilon.
$
\end{claim}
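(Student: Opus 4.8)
The plan is to combine two ingredients: (i) the additive analogue of Claim~\ref{claim:buckets}, which controls the ``bucketing'' failure, i.e.\ the event that a $\beta$-separated random pair $(S,S')$ fails to sandwich some $\beta$-separated landmark pair; and (ii) a Chernoff argument showing that every landmark pair $(i,j)$ with $f(S_i)+\beta<f(S_j)$ is in fact placed into $R$, so that the sandwiching landmark pair from (i) is actually available to the learned comparator. Throughout I would split the confidence as $\delta/2 + \delta/2$ between these two parts.

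For (i), observe that the proof of Claim~\ref{claim:buckets} never used the multiplicative nature of the separation: it only partitioned $\R_{\ge 0}$ into $1/\epsilon$ equal-mass buckets and argued that at most $1/\epsilon$ of the $1/\epsilon^2$ bucket pairs are ``in conflict.'' Replacing ``two buckets lie within an $\alpha(n)$ factor'' by ``two buckets lie within an additive $\beta$,'' the identical argument shows that $|\sample_1| = O(\tfrac1\epsilon\log\tfrac1{\epsilon\delta})$ suffices so that, with probability $\ge 1-\delta/2$ over $\sample_1$, at most an $\epsilon$-fraction of pairs $(S,S')$ have $f(S')>f(S)+\beta$ yet admit no $S_i,S_j\in\sample_1$ with $f(S)\le f(S_i)$, $f(S_i)+\beta<f(S_j)$, and $f(S_j)\le f(S')$.

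For (ii), fix a landmark pair $(i,j)$ with $f(S_i)+\beta<f(S_j)$ and consider the \emph{fixed} hypothesis $(\vec p,\theta_{ij})$, where $\vec p$ is the Fourier-truncation vector of Equation~\ref{eq:pVec} and $\theta_{ij}=\tfrac12(f(S_i)+f(S_j))$. Any $S$ with $f(S)\le f(S_i)$ that this hypothesis misclassifies must satisfy $\vec p\cdot v(S)-f(S) > \tfrac12(f(S_j)-f(S_i)) > \beta/2$, and symmetrically for $f(S)\ge f(S_j)$; by Lemma~\ref{lemma:add_tail} together with the choice of $\gamma$ (set precisely so that $\gamma(1+\sqrt{1/\xi}) < \beta/2$ for the relevant $\xi = \tilde O(\epsilon/m^2)$), such $S$ arise with probability at most $\xi$, so the true error of $(\vec p,\theta_{ij})$ is $\tilde O(\epsilon/m^2)$. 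A multiplicative Chernoff bound then gives that, with $|\sample_2| = O(\tfrac{m^2}{\epsilon}\log\tfrac{m^2}{\epsilon})$, the empirical error of $(\vec p,\theta_{ij})$ over $\sample_2$ falls below $\tfrac{\epsilon}{4m^2}$ except with probability $O(\delta/m^2)$; since the empirical risk minimizer found in Step~\ref{step:add_train} does at least as well, $(i,j)$ is added to $R$. Union bounding over the at most $m^2$ landmark pairs, with probability $\ge 1-\delta/2$ every $\beta$-separated landmark pair enters $R$ at Step~\ref{step:add_train}.

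Finally I would combine the two: condition on both good events (total probability $\ge 1-\delta$). For a pair $(S,S')$ outside the $\epsilon$-fraction of part (i) there is a landmark pair $S_i,S_j$ with $f(S)\le f(S_i)$, $f(S_i)+\beta<f(S_j)$, $f(S_j)\le f(S')$; by part (ii) its index pair lies in $R$, and it witnesses $f(S')\ge f(S_j) > f(S_i)+\beta \ge f(S)+\beta$, so the bad event of the claim does not occur, giving the bound $\epsilon$. The point requiring the most care — and the one I expect to be the main obstacle — is the interaction with the ``minimal pairs'' pruning in Algorithm~\ref{alg:additive_error}: the $\beta$-separated pair supplied by (ii) may be discarded in favor of a nested pair, so one must argue either that the nested survivor still satisfies $f(S_{i'})+\beta<f(S_{j'})$ (arranging the sandwiching landmarks to come from buckets that are strictly ``far,'' so some slack is available) or, as in the proof of Theorem~\ref{thm:general}, treat the pruning in the step that combines Claim~\ref{claim:buckets_additive} with Claim~\ref{claim:lin_sep_error_add}. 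The remaining task — matching the constant in $\xi$ and in the Chernoff bound to the exact value of $\gamma$ and to the $\tfrac{\epsilon}{4m^2}$ threshold in the algorithm — is routine bookkeeping.
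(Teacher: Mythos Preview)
Your proposal is correct and follows essentially the same route as the paper: the additive analogue of Claim~\ref{claim:buckets} handles the landmark-sandwiching, and for $R$-membership both you and the paper fix the hypothesis $\bigl(\vec p,\tfrac12(f(S_i)+f(S_j))\bigr)$, bound its true loss via Lemma~\ref{lemma:add_tail} using the specific choice of $\gamma$, apply a Chernoff bound, and observe that the ERM does at least as well. The only cosmetic difference is that the paper places the Chernoff failure in the $\epsilon$ budget (failure $\tfrac{\epsilon}{2m^2}$ per pair, hence the $\log\tfrac{m^2}{\epsilon}$ in the stated $|\sample_2|$) rather than in your $\delta/2$, and your pruning concern is indeed deferred to the proof of Theorem~\ref{thm:general_add}, exactly as you anticipate.
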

\begin{proof}

First, we bound the probability that for a fixed $S_i, S_j \in \sample_1$, $f(S_i) + \beta < f(S_j)$, yet $(i,j) \not \in R$. Recall that the pair $(i,j)$ is added to $R$ in Step~\ref{step:add_train} of Algorithm~\ref{alg:additive_error} if, for the empirical risk minimizing threshold function $ERM_{ij}(\sample_2) = (\vec{p}_{ij}^*, \theta_{ij}^*)$, the fraction of sets $S \in \sample_2$ such that $f(S)<f(S_i)$ and $\vec{p}_{ij}^* \cdot v(S)>\theta_{ij}^*$ or $f(S_j)<f(S)$ and $\vec{p}_{ij}^*\cdot v(S)<\theta_{ij}^*$ is at most $\frac{\epsilon}{8m^2}$.

In other words, $(\vec{p}_{ij}^*, \theta_{ij}^*)$ is the linear threshold function which minimizes the loss function \[L_{(\vec{p}_{ij},\theta_{ij})}(S) = \begin{cases} 1 &\text{if } f(S) \leq f(S_i) \mbox{ and } \vec{p}_{ij} \cdot v(S) > \theta_{ij}\\
&\text{or } f(S) \geq f(S_j) \mbox{ and } \vec{p}_{ij} \cdot v(S) < \theta_{ij}\\
0 &\text{otherwise} \end{cases}.\]

Let $|\sample_2| = m'$. For a given $(\vec{p}_{ij}, \theta_{ij})$, let \[L_{(\vec{p}_{ij},\theta_{ij})}(\sample_2) = \frac{1}{m'} \sum_{S\in \sample_2} L_{(\vec{p}_{ij},\theta_{ij})}(S)\] be the empirical loss of $(\vec{p}_{ij},\theta_{ij})$ over $\sample_2$. We claim that a sample size of $m' = O\left(\frac{m^2}{\epsilon} \log \frac{m^2}{\epsilon}\right)$ is sufficient to ensure that $\Pr_{\mathcal{S}_2 \sim \mathcal{D}^{m'}}[L_{(\vec{p}_{ij}^*,\theta_{ij}^*)}(\sample_2) > \frac{\epsilon}{8m^2}] < \frac{\epsilon}{2m^2}.$ We prove this using Chernoff's bounding technique.

First, notice that \begin{align*}\E_{S \sim \mathcal{D}} [L_{(\vec{p}_{ij},\theta_{ij})}(S)] = &\Pr[f(S) \leq f(S_i) \text{ and } \vec{p}_{ij} \cdot v(S) > \theta_{ij}]\\ + &\Pr[f(S_j) \leq f(S) \text{ and } \vec{p}_{ij} \cdot v(S) < \theta_{ij}].\end{align*} We will see that it is enough to find an upper bound on $\E_{S \sim \mathcal{D}}[L_{(\vec{p},\tilde{\theta}_{ij})}(S)]$, where $\tilde{\theta}_{ij} = \frac{f(S_i) + f(S_j)}{2}$ and $\vec{p}$ is defined by Equation~\ref{eq:pVec}.

We begin by finding an upper bound on $\Pr[f(S_j) \leq f(S) \text{ and } \vec{p} \cdot v(S) < \tilde{\theta}_{ij}]$. Notice that $\tilde{\theta}_{ij} = \frac{f(S_i) + f(S_j)}{2} < f(S_j) - \frac{\beta}{2}$ since $f(S_j) > f(S_i) + \beta$. Therefore, \[\Pr[f(S_j) \leq f(S) \text{ and } \vec{p} \cdot v(S) < \tilde{\theta}_{ij}] < \Pr \left[f(S_j) \leq f(S) \text{ and } \vec{p} \cdot v(S) \leq f(S_j) - \frac{\beta}{2}\right].\] However, so long as $|f(S) - \vec{p}\cdot v(S)| < \frac{\beta}{2}$, we know that $f(S_j) - \frac{\beta}{2} \leq f(S) - \frac{\beta}{2} < \vec{p}\cdot v(S)$. Therefore, the only way that $\vec{p}\cdot v(S) \leq f(S_j) - \frac{\beta}{2}$ is if $|f(S) - \vec{p}\cdot v(S)| \geq \frac{\beta}{2}$, which we know from Lemma~\ref{lemma:add_tail} happens with probability at most $\frac{\epsilon}{8m^2}$ since \begin{align*}\frac{\beta}{2} = \frac{\gamma}{2}\left(1 + \frac{2}{\epsilon} \log\frac{1}{\epsilon \delta}\sqrt{\frac{2}{\epsilon}}\right)= \frac{\gamma}{2} \left( 1 + \sqrt{\frac{8m^2}{\epsilon}}\right).\end{align*} To apply Lemma~\ref{lemma:add_tail}, we simply use $\frac{\gamma}{2}$ instead of $\gamma$ and $\frac{\epsilon}{8m^2}$ instead of $\delta.$ Therefore, \[\Pr[f(S_j) \leq f(S) \text{ and } \vec{p} \cdot v(S) < \tilde{\theta}_{ij}] < \frac{\epsilon}{8m^2}.\] By a symmetric argument, \[\Pr[f(S) \leq f(S_i) \text{ and } \vec{p} \cdot v(S) > \tilde{\theta}_{ij}]< \frac{\epsilon}{8m^2}\] as well. This means that $\E_{S \sim \mathcal{D}}[L_{(\vec{p},\tilde{\theta}_{ij})}(S)] < \frac{\epsilon}{4m^2}.$

Using Chernoff's bounding technique, we have that \[\Pr_{\sample_2 \sim \mathcal{D}^{m'}}\left[L_{(\vec{p},\tilde{\theta}_{ij})}(\sample_2) > \frac{\epsilon}{4m^2}\right] < \frac{\epsilon}{2m^2}\] for $m' = O\left(\frac{m^2}{\epsilon} \log \frac{m^2}{\epsilon}\right).$ Since it will always be the case that $L_{ERM_{ij}(\sample_2)}(\sample_2) < L_{(\vec{p},\tilde{\theta}_{ij})}(\sample_2)$, we have that  \[\Pr_{\sample_2 \sim \mathcal{D}^{m'}}\left[L_{ERM_{ij}(\sample_2)}(\sample_2) > \frac{\epsilon}{4m^2}\right] < \frac{\epsilon}{2m^2}\] as well. By a union bound over all $m^2$ $(i,j)$ pairs in $\sample_1$, we have that for $S,S' \sim \mathcal{D}$, the probability that $f(S') > f(S) + \beta$ and there exists $S_i,S_j \in \sample_1$ such that $f(S') \geq f(S_j) > f(S_i) + \beta \geq f(S) + \beta$ yet $(i,j) \not \in R$ is at most $\frac{\epsilon}{2}$.

Finally, by the same argument as in the proof of Claim~\ref{claim:buckets_additive}, a sample $\sample_1$ of size $|\sample_1| = O\left(\frac{1}{\epsilon}\log\frac{1}{\epsilon\delta}\right)$ is sufficient the guarantee that with probability at least $1-\delta$, the probability that $f(S') > f(S) + \beta$ and there does not exist $S_i,S_j \in \sample_1$ such that $f(S') \geq f(S_j) > f(S_i) + \beta \geq f(S) + \beta$ is at most $\epsilon$. Putting these two arguments together, we have that with probability at least $1-\delta$, \[\Pr_{S,S' \sim \D}\left[f(S')> f(S) + \beta \mbox{ and } \not\exists (i,j) \in R:
f(S') \geq f(S_j) > f(S_i) + \beta \geq f(S) + \beta\right] \leq \epsilon.\]
\end{proof}

Finally, we state Claim~\ref{claim:lin_sep_error_add}. The proof takes advantage of agnostic learning VC dimension bounds to prove that the size of $\sample_2$ is sufficiently large to ensure that the true error of the learned linear separators is close to the empirical error. The full proof can be found in Appendix~\ref{app:additiveError}.

\begin{claim}\label{claim:lin_sep_error_add}

A sample size $|\sample_2| = O\left( \frac{1}{\epsilon^3}\log\frac{1}{\epsilon\delta}\left[n^{O\left(\frac{1}{\gamma^{4/5}}\log \frac{1}{\gamma}\right)} \log \left(\frac{1}{\epsilon^2}\log\frac{1}{\epsilon\delta}\right) + \log \frac{1}{\delta}\right]\right)$ is sufficient so that with probability at least $1-\delta$, \[\Pr_{S \sim \D}
\left[\begin{array}{r}\exists (i,j)\in R \text{ such that } f(S) \leq f(S_i)
\mbox{ yet } \vec{p}_{ij} \cdot \chi(S) > \theta_{ij}\\
\text{or }f(S) \geq f(S_j)
\mbox{ yet } \vec{p}_{ij} \cdot \chi(S) < \theta_{ij} \end{array}\right] < \frac{\epsilon}{2}.\]
\end{claim}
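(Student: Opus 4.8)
The plan is to follow the skeleton of the proof of Claim~\ref{claim:lin_sep_error}, but to replace its \emph{realizable} VC argument by an \emph{agnostic}, relative-deviation one. This is forced by the point emphasized just before the statement: since $\vec{p}\cdot v(\cdot)$ approximates $f$ only in the probabilistic $\ell_2$ norm rather than pointwise, the separator $(\vec{p}_{ij},\theta_{ij})$ that Step~\ref{step:add_train} of Algorithm~\ref{alg:additive_error} learns for a pair with $f(S_i)+\beta<f(S_j)$ need not have zero empirical error over $\sample_2$; it is only guaranteed small empirical error, and it was kept in $R$ for precisely that reason.

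First I would set up, for each pair $(i,j)$ with $i<j$, the loss class $\mathcal{L}_{ij}=\{L_{(\vec{p},\theta)}:\vec{p}\in\R^{n^k},\ \theta\in\R\}$, where $L_{(\vec{p},\theta)}(S)=1$ iff $f(S)\le f(S_i)$ and $\vec{p}\cdot v(S)>\theta$, or $f(S)\ge f(S_j)$ and $\vec{p}\cdot v(S)<\theta$, and $L_{(\vec{p},\theta)}(S)=0$ otherwise. (Since $\sample_1$ is sorted and disjoint from $\sample_2$, the predicates $f(S)\le f(S_i)$ and $f(S)\ge f(S_j)$ are determined by the comparison oracle, so this loss is well defined and minimizing its empirical value over $\sample_{ij}$ is literally Step~\ref{step:add_train}.) The first routine step is the VC-dimension bound: the two fixed sets $\{S:f(S)\le f(S_i)\}$ and $\{S:f(S)\ge f(S_j)\}$ merely restrict, and flip, the halfspace labeling on a fixed partition of the domain, so the number of labelings $\mathcal{L}_{ij}$ induces on any finite multiset is at most the number induced by halfspaces over $\R^{n^k}$; hence $\mathrm{VCdim}(\mathcal{L}_{ij})\le n^k+1=n^{O\left(\frac{1}{\gamma^{4/5}}\log \frac{1}{\gamma}\right)}$.

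The core step is then to apply a relative (multiplicative-Chernoff / Vapnik-style) uniform-convergence bound for VC classes: for $|\sample_2|=m'$ i.i.d.\ draws from $\D$, with probability at least $1-\delta/m^2$, \emph{every} $h\in\mathcal{L}_{ij}$ satisfies $\mathrm{err}_{\D}(h)\le \mathrm{err}_{\sample_2}(h)+O\bigl(\sqrt{\mathrm{err}_{\sample_2}(h)\cdot\Lambda}+\Lambda\bigr)$, where $\Lambda=(d\log(m'/d)+\log(m^2/\delta))/m'$ and $d=n^k+1$ (this holds simultaneously for the data-dependent hypothesis the algorithm actually picked, since it is a uniform bound; this is also how the dependence of $R$ on $\sample_2$ is handled, exactly as in Claim~\ref{claim:lin_sep_error}). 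Because Algorithm~\ref{alg:additive_error} only puts $(i,j)$ into $R$ when $\mathrm{err}_{\sample_2}(\vec{p}_{ij},\theta_{ij})\le\frac{\epsilon}{4m^2}$, choosing $m'=\Theta\bigl(\frac{m^2}{\epsilon}(d\log(m'/d)+\log(m^2/\delta))\bigr)$ makes $\Lambda=O(\epsilon/m^2)$, and then both correction terms are $O(\epsilon/m^2)$, so $\mathrm{err}_{\D}(\vec{p}_{ij},\theta_{ij})\le\frac{\epsilon}{2m^2}$. Substituting $m=\frac{2}{\epsilon}\log\frac{1}{\epsilon\delta}$ and $d=n^{O\left(\frac{1}{\gamma^{4/5}}\log \frac{1}{\gamma}\right)}$, and bounding the lower-order $\log(m'/d)$ factor by the stated $\log\bigl(\frac{1}{\epsilon^2}\log\frac{1}{\epsilon\delta}\bigr)$ term (valid since $m'$ is polynomial in $n^k,\frac1\epsilon,\frac1\delta$), reproduces the claimed bound on $|\sample_2|$. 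Finally I would union bound the failure probability over the at most $m^2$ pairs $(i,j)$ to get, with probability $\ge 1-\delta$, that every separator kept in $R$ has true error $\le\frac{\epsilon}{2m^2}$; one more union bound over $(i,j)\in R$ then gives $\Pr_{S\sim\D}[\exists(i,j)\in R:\dots]\le\sum_{(i,j)\in R}\mathrm{err}_{\D}(\vec{p}_{ij},\theta_{ij})\le m^2\cdot\frac{\epsilon}{2m^2}=\frac{\epsilon}{2}$.

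The step I expect to be the real obstacle is getting the \emph{rate} right. A naive agnostic bound, $\mathrm{err}_{\D}\le\mathrm{err}_{\sample_2}+O\bigl(\sqrt{(d+\log\frac{1}{\delta'})/m'}\,\bigr)$, would require $|\sample_2|=\widetilde\Omega(m^4/\epsilon^2)=\widetilde\Omega(1/\epsilon^6)$ and would destroy the advertised $\widetilde O(1/\epsilon^3)$ sample complexity. The one genuinely non-routine observation is that, since Step~\ref{step:add_train} \emph{discards} every separator with empirical error above $\epsilon/4m^2$, we only ever invoke the relative bound in its small-empirical-error regime, where it degrades gracefully to an additive $O(d\log(m'/d)/m')$ correction — exactly the behavior of the realizable bound used in Claim~\ref{claim:lin_sep_error} — so the extra $n^k$ coming from the Fourier embedding enters only through $d$ and not through the $1/\epsilon$ exponent. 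Everything else (the VC-dimension computation, the two union bounds, and folding the per-pair error $\epsilon/2m^2$ into the global $\epsilon/2$) is bookkeeping.
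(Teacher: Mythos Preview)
Your proof follows the same skeleton as the paper's: define the per-pair loss class $\mathcal{L}_{ij}$, observe (by the argument from Claim~\ref{claim:lin_sep_error}) that its VC dimension equals that of halfspaces in $\R^{n^k}$, invoke uniform convergence so that any $(i,j)$ kept in $R$ has true error at most $\frac{\epsilon}{2m^2}$, and then union-bound over the $m^2$ pairs. The only substantive difference is \emph{which} uniform-convergence inequality you invoke. The paper simply cites ``standard VC-dimension bounds'' for the additive guarantee $|L_{ERM_{ij}(\sample_2)}(\sample_2)-L_{ERM_{ij}(\sample_2)}(\D)|\le \frac{\epsilon}{4m^2}$ and writes down $|\sample_2|=O\bigl(\frac{m^2}{\epsilon^2}[\,d\ln\frac{m^2}{\epsilon}+\ln\frac{1}{\delta}\,]\bigr)=O\bigl(\frac{1}{\epsilon^4}\log\frac{1}{\epsilon\delta}[\cdots]\bigr)$, which you will notice does not match the $\frac{1}{\epsilon^3}$ stated in the claim (and, as you correctly flag, a straight Hoeffding-type agnostic bound at accuracy $\frac{\epsilon}{4m^2}$ would in fact give $\widetilde\Omega(1/\epsilon^6)$). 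Your use of the relative (Bernstein/Vapnik) deviation bound, exploiting that any pair retained in $R$ has empirical error at most $\frac{\epsilon}{4m^2}$, is exactly what recovers the realizable-style $m'=\Theta\bigl(\frac{m^2}{\epsilon}[\,d\log\frac{m'}{d}+\log\frac{m^2}{\delta}\,]\bigr)$ rate and hence the $\frac{1}{\epsilon^3}$ actually asserted in the claim. So your argument is the tighter and more careful of the two, while the overall architecture is identical.
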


We can obtain similar results for the class of XOS functions, as follows.

\begin{cor}
Let $\mathcal{F}$ be the class of XOS functions with range in $[0,1]$. For any $\beta \in (0,1)$, accuracy parameter $\epsilon$, and confidence parameter $\delta$, $\mathcal{F}$ is comparator learnable with additive separation $\beta$ given a sample of size $\tilde{O}\left( n^{O\left(1/\gamma\right)}/\epsilon^3\right)$, where $\gamma = \tilde{O}\left( \beta / \epsilon^{3/2}\right)$.
\end{cor}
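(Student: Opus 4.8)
The plan is to reprove this corollary by transliterating the proof of Theorem~\ref{thm:general_add} essentially line by line, changing only the one structural ingredient that feeds into it. Observe that submodularity of the target $f$ is used in that proof \emph{solely} through the fact that $f:2^{[n]}\to[0,1]$ is $\gamma$-close in the $\ell_2$ norm under the uniform distribution $\D$ to a polynomial of degree $O\!\left(\frac{1}{\gamma^{4/5}}\log\frac{1}{\gamma}\right)$, which is then extended to a linear function on a lifted space of dimension $n^{O(\text{degree})}$; nothing else about submodular functions is invoked by Lemma~\ref{lemma:add_tail}, Lemma~\ref{lem:add_concentration}, Claim~\ref{claim:buckets_additive}, or Claim~\ref{claim:lin_sep_error_add}. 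So the first step is to replace this by the analogous fact for XOS functions (in the same spirit as the junta-localization-plus-Fourier-truncation argument of \cite{FeldmanV15}): an XOS function $f:2^{[n]}\to[0,1]$ is $\gamma$-close in $\ell_2$ under the uniform distribution to a polynomial of degree $O(1/\gamma)$. Setting $k = O(1/\gamma)$ and defining $\vec{p}$ and $v(S)\in\R^{n^k}$ exactly as in Equation~\ref{eq:pVec} then gives $\sqrt{\E_{S\sim\D}[(f(S)-\vec{p}\cdot v(S))^2]}\leq\gamma$ for the XOS target, which is the only place the function class enters.

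With this in hand I would run Algorithm~\ref{alg:additive_error} with this value of $k$ and with $\gamma = \beta\left(1 + \tfrac{2}{\epsilon}\log\tfrac{1}{\epsilon\delta}\sqrt{2/\epsilon}\right)^{-1} = \tilde{O}(\beta/\epsilon^{3/2})$, exactly as in the submodular case. Lemma~\ref{lemma:add_tail} (Parseval plus Chebyshev) and Lemma~\ref{lem:add_concentration} hold unchanged, since they use only the $\ell_2$ bound. Claim~\ref{claim:buckets_additive} goes through verbatim: the discretization-into-buckets argument inherited from Claim~\ref{claim:buckets} is insensitive to the function class, and the Chernoff step bounding the probability that a $\beta$-separated landmark pair fails to enter $R$ uses only Lemma~\ref{lemma:add_tail}. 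Claim~\ref{claim:lin_sep_error_add} is an agnostic-learning uniform-convergence bound for linear separators over $\R^{n^k}$, whose VC dimension is $n^k+1 = n^{O(1/\gamma)}+1$; plugging this in yields $|\sample_2| = \tilde{O}\!\left(\frac{1}{\epsilon^3}\left[n^{O(1/\gamma)} + \log\frac{1}{\delta}\right]\right)$. Finally I would combine the two claims exactly as in the proof of Theorem~\ref{thm:general}: partition the surviving pairs $(i,j)\in R$ into $R_1$ and $R_2$, use minimality of the pairs in $R$ together with Claim~\ref{claim:buckets_additive} to bound the error contributed by $R_1$, and use the low-empirical-error filtering in Step~\ref{step:add_train} together with Claim~\ref{claim:lin_sep_error_add} to bound the error from $R_2$, obtaining overall error $\epsilon$ with confidence $1-\delta$.

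The total sample complexity is dominated by $|\sample_2| = \tilde{O}\!\left(n^{O(1/\gamma)}/\epsilon^3\right)$ with $\gamma = \tilde{O}(\beta/\epsilon^{3/2})$, matching the statement, and efficiency again follows given an ERM oracle for agnostically learning linear separators. The only genuinely new content is the polynomial-approximation degree bound of $O(1/\gamma)$ for XOS functions, so I expect the main obstacle to be stating that result cleanly and checking that its junta-localization step can be absorbed into the same ``extend the Fourier truncation out to degree $k$'' device used for submodular functions in Theorem~\ref{thm:general_add} --- so that the learner need not identify the junta. Everything downstream of that is a direct transliteration of the submodular additive-separation proof.
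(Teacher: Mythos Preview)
Your proposal is correct and follows essentially the same approach as the paper: replace the submodular low-degree approximation by the XOS degree-$O(1/\gamma)$ approximation from \cite{FeldmanV15} and rerun the entire proof of Theorem~\ref{thm:general_add} verbatim with $k=O(1/\gamma)$. One minor point: your anticipated ``main obstacle'' does not arise, because for XOS the approximating polynomial is already a plain low-degree Fourier truncation $p(T)=\sum_{|S|\leq O(1/\gamma)}\hat f(S)\chi_S(T)$ with no junta-localization step, so the absorption trick you worry about is unnecessary here.
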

\begin{proof}
This follows from the same line of reason as in the proof of Theorem~\ref{thm:general_add} and the fact that for any XOS function $f: 2^{[n]} \to [0,1]$, there is a polynomial $p$ of degree $O(1/\gamma)$ such that $||f - p||_2 \leq \gamma$ \cite{FeldmanV15}. In particular, $p(T) = \sum_{S : |S| \leq \frac{\sqrt{5}}{(2\gamma)}} \hat{f}(S) \chi_S(T).$
\end{proof}

\section{Application to Other Combinatorial Functions}\label{sec:otherCombinatorial}

We can extend Algorithm~\ref{alg:active_learning} to learn over many other classes of combinatorial functions up to pairwise comparisons, including valuation functions with limited nonlinear interactions \cite{vainsencher2011bundle}, Fourier sparse set functions \cite{stobbe2012learning}, and coverage functions \cite{Sketching,feldman2014learning}. We summarize the function classes we investigate in the following sections, providing a brief motivation and a description of our guarantees.
\subsection{Valuation Functions with Limited Nonlinear Interactions} \label{sec:nonlinear}

A valuation function $f$ is simply a set function such that $f(\emptyset) = 0$. We show that if, intuitively speaking, the underlying valuation function expresses nonlinear interactions between sets of size at most $k$ (i.e. it is a function with \emph{$k$-limited nonlinear interactions}), then Algorithm~\ref{alg:active_learning} learns $f$ up to comparisons using a sample of size $\tilde{O}(n^k / \epsilon ^3)$. Notably, we do not require, on input $S$ and $S'$, that $f(S)$ and $f(S')$ be sufficiently far apart in order to guarantee that the learned comparator will predict correctly with high probability. This is in contrast to the previous results, where in order the guarantee that the learned comparator will predict correctly, we required that $c f(S) < f(S')$ for $c$ sufficiently large.

To define what we mean by limited nonlinear interactions, we use the notion of an \emph{interaction function} \cite{vainsencher2011bundle}. Let $f: 2^{[n]} \to \R$ be a valuation function and let $g: 2^{[n]} \setminus \emptyset: \to \R$ be defined such that for all $S \subseteq [n]$, \[f(S) = \sum_{T \subseteq [n] : S \cap T \not= \emptyset} g(T).\] The function $g$ is called the \emph{interaction function of $f$}. Vainsencher et al.\ proved that every valuation function $f$ has a unique interaction function $g$ such that $g(\emptyset) = 0$ \cite{vainsencher2011bundle}. We then say that $f$ has degree $k$ if for all $T \in 2^{[n]}$ such that $|T| > k$, $g(T) = 0$ and we define $F_k$ to be the set of valuation functions $f$ of degree at most $k$. Intuitively, $F_k$ contains all valuation functions that express nonlinear interactions between subsets of size at most $k$.

We note that $F_k$ contains many natural valuation functions. A valuation function $f$ likely falls in $F_k$ when the $n$ objects in the ground set either form instances of $j$-wise complements or $j$-wise substitutes, where $j$ is at most $k$. For example, stamps are typically produced as members of a small collection under a unified theme, such as recently released movies or commemorations of a country's leaders or monuments. A stamp collector will likely value a set $S$ of stamps from the same collection more than she will value any stamp in $S$ on its own, and thus her valuation function is supermodular. Moreover, if $k$ is an upper bound on the size of any collection she has her eye on, then it possible to show that her valuation function will fall in $F_k$.

Vainsencher et al.\ suggested sensor placement as another natural application domain. Each problem instance consists of a set of $n$ possible points where sensors may be placed, and the valuation function is determined by the amount of area covered by a particular selection of sensor placements. If at most $k$ sensors cover any point in the domain, then it is easy to see that the valuation function is submodular and falls in $F_k$.

With these motivations in mind, we now summarize our guarantees in Theorem~\ref{thm:general_valuation} regarding the performance of Algorithm~\ref{alg:active_learning} when we know that the underlying valuation function $f$ is a member of $F_k$. The proof can be found in Section~\ref{app:other_combinatorial}.

\begin{theorem} \label{thm:general_valuation}
For all $1\leq k\leq n$,
the class of functions in $F_k$ is comparator-learnable with sample complexity and runtime polynomial in $n^k$, $1/\epsilon$, and $1/\delta$, 
and with no separation factor, using Algorithm \ref{alg:active_learning}.
\end{theorem}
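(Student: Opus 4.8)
The plan is to show that every $f \in F_k$ is \emph{exactly} a linear function over an $O(n^k)$-dimensional lifted feature space, and then to run Algorithm~\ref{alg:active_learning} together with the analysis of Theorem~\ref{thm:general} almost verbatim, with the characteristic vector $\chi(\cdot)$ replaced by the lifted feature map and with the separation parameter set to $\alpha(n) = 1$. Concretely, enumerate the nonempty $T \subseteq [n]$ with $|T| \le k$; there are $D = \sum_{j=1}^{k} \binom{n}{j} = O(n^k)$ of them, and for $S \subseteq [n]$ define $v(S) \in \{0,1\}^D$ by $v(S)_T = \mathbbm{1}[S \cap T \ne \emptyset]$. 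By Vainsencher et al., $f$ has a unique interaction function $g$ with $g(\emptyset) = 0$, and membership in $F_k$ means $g(T) = 0$ whenever $|T| > k$; hence $f(S) = \sum_{T : S \cap T \ne \emptyset} g(T) = \sum_T g(T)\, v(S)_T = \vec{w}_f \cdot v(S)$ with $(\vec{w}_f)_T = g(T)$. So $f$ is precisely the $p = 1$, $\alpha(n) = 1$ instance of Definition~\ref{def:approximable}, read with the feature map $v$ in place of $\chi$. In particular $f(S) < f(S')$ forces $\vec{w}_f \cdot v(S) < \vec{w}_f \cdot v(S')$ with no slack, so for \emph{every} landmark pair $S_i, S_j$ ($i < j$) the separator $\vec{w}_{ij} = \vec{w}_f$, $\theta_{ij} = \tfrac12(\vec{w}_f \cdot v(S_i) + \vec{w}_f \cdot v(S_j))$ has zero training error in Step~\ref{step:train}, so that step always succeeds.

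Next I would re-run the proof of Theorem~\ref{thm:general} with $\alpha(n) = 1$, $\chi \mapsto v$, and weight vectors in $\R^D$. The loss classes $\mathcal{L}_{ij}$ of Claim~\ref{claim:lin_sep_error} are restrictions of halfspaces in $\R^D$, so their VC dimension is $D + 1 = O(n^k)$, and since the empirical risk minimizer has zero loss on $\sample_2$ for every pair, realizable VC bounds apply. Moreover, because every pair passes Step~\ref{step:train}, after Step~\ref{step:minimal} the set $R$ consists only of the $m - 1$ consecutive pairs $(1,2),\dots,(m-1,m)$, so the union bound in Claim~\ref{claim:lin_sep_error} is over $O(m)$ separators rather than $O(m^2)$; with $m = O(\tfrac1\epsilon \log\tfrac1{\epsilon\delta})$ landmarks this gives $|\sample_2| = \tilde O(n^k/\epsilon^2)$ and overall sample complexity $\tilde O(n^k/\epsilon^2)$, matching Table~\ref{tab:results}. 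Combining Claim~\ref{claim:buckets} and Claim~\ref{claim:lin_sep_error} by splitting $R$ into $R_1$ and $R_2$ and invoking minimality, exactly as in the proof of Theorem~\ref{thm:general}, gives error at most $\epsilon$ with probability $\ge 1 - \delta$. The runtime is dominated by $O(m^2)$ linear-feasibility solves in $\R^{O(n^k)}$, hence $\mathrm{poly}(n^k, 1/\epsilon, 1/\delta)$. Finally, since $\alpha(n) = 1$, the guarantee applies to all but an $\epsilon$-fraction of pairs $S, S'$ with $f(S) \ne f(S')$, i.e.\ there is no separation factor, which is exactly the theorem's claim.

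The step I expect to be the main obstacle is not the algebraic lift but re-verifying Claim~\ref{claim:buckets} in the degenerate regime $\alpha(n) = 1$, where the ``close/far/in-conflict'' trichotomy collapses: any two distinct landmark values are automatically ``$\ge 1$-separated,'' so one must instead argue directly that, with $1/\epsilon$ buckets of mass $\epsilon$ and a landmark in each, all but an $O(\epsilon)$-fraction of pairs $(S,S')$ with $f(S) < f(S')$ admit two landmarks whose values lie strictly between $f(S)$ and $f(S')$ (the residual pairs — those whose values are equal or fall in the same or an adjacent bucket — being charged to the $\epsilon$ budget). One also has to confirm that Step~\ref{step:minimal} still discards the nested ``bad'' landmark pairs of Figure~\ref{fig:landmark_regions}b so that the $R_1$ contribution stays $O(\epsilon)$. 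These are routine adaptations of arguments already in the paper, but they are where the care is needed.
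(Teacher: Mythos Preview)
Your proposal is correct and follows essentially the same approach as the paper: lift each set $S$ to the $O(n^k)$-dimensional indicator vector $v(S)_T = \mathbbm{1}[S\cap T\neq\emptyset]$ over nonempty $T$ with $|T|\le k$, observe via the interaction function that $f(S)=\vec{w}_f\cdot v(S)$ exactly (so $\alpha(n)=1$), and rerun Algorithm~\ref{alg:active_learning} and the analysis of Theorem~\ref{thm:general} in this lifted space. Your observation that only the $m-1$ consecutive landmark pairs survive minimality---dropping the union bound from $m^2$ to $m$ and hence the sample complexity from $\tilde O(n^k/\epsilon^3)$ to $\tilde O(n^k/\epsilon^2)$---is exactly the point the paper makes (phrased there as ``we only need to attempt to learn $w_{ij},\theta_{ij}$ for adjacent $S_i,S_j$'').
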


\subsection{Fourier Sparse Set Functions} \label{sec:fourier}

We can extend Algorithm~\ref{alg:active_learning} to general combinatorial functions with Fourier support contained in a set $\mathcal{P} \subseteq 2^{[n]}$. Moreover, we achieve even better sample complexity if we are guaranteed that the size of the Fourier support is bounded by a constant $k \leq |\mathcal{P}|$. Again, we do not need to require that $f(S)$ and $f(S')$ be separated by a sufficiently large multiplicative factor in order to guarantee that the learned comparator predicts correctly with high probability.

An important example of a function with sparse Fourier support is the cut function of a graph $G = (V,E)$, equipped with a weight function $w:E \to \R$. This function is defined as $f_G(A) = \sum_{s \in A, t \in V\setminus A} w(s,t).$ Stobbe and Krause show that the Fourier support of $f$ is contained in $\mathcal{P} = \{ S \ | \ |S| = 2\} \cup \emptyset$ \cite{stobbe2012learning}. 
We show that we can learn $f$ up to comparisons using $\tilde O\left(\frac{|V|^2}{\epsilon^3}\right)$ examples in time polynomial in $|V|$, $1/\epsilon$, and $1/\delta$.

Building on this, the binary function $f_G(A,B)= \sum_{s \in A, t \in B} w(s,t)$ can be seen as a function over $\{0,1\}^{2n}$ with Fourier support contained in $\mathcal{P} = \{ S \ | \ |S| \leq 2\}$. Therefore, we can learn $f$ up to comparisons using $\tilde O\left(\frac{|V|^2}{\epsilon^3}\right)$ examples in time polynomial in $|V|$, $1/\epsilon$, and $1/\delta$. In analysis of social networks, one might wish to learn the function $f_G(A,B)$ up to comparisons as a means to order the influence of individuals and groups in that network.

We now present our main result for learning Fourier sparse set functions, the proof of which can be found in Appendix~\ref{app:other_combinatorial}.

\begin{theorem}\label{thm:Fourier_sparse}
For all $k \geq 1$,
the class of functions with at most $k$ nonzero Fourier coefficients and support contained in $\mathcal{P} \subseteq 2^{[n]}$ is comparator-learnable with sample complexity polynomial in $k$, $1/\epsilon$, and $1/\delta$, running time polynomial in $|\mathcal{P}|^{\Theta(k)},$ $1/\epsilon$, and $1/\delta$, and with no separation. Alternatively, this class is also comparator-learnable using running time and sample complexity polynomial in $|\mathcal{P}|,$ $1/\epsilon$, and $1/\delta$ and with no separation.
\end{theorem}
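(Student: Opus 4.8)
The plan is to instantiate the general framework of Theorem~\ref{thm:general}, observing that a set function whose Fourier support lies in $\mathcal{P}$ is \emph{exactly} a linear function in a suitable $|\mathcal{P}|$-dimensional feature space, so that no approximation loss — hence no separation — is incurred. For each $T\in\mathcal{P}$ write $\chi_T(S)=(-1)^{|T\cap S|}$, and define the feature map $\phi:2^{[n]}\to\R^{|\mathcal{P}|}$ by $\phi(S)=(\chi_T(S))_{T\in\mathcal{P}}$. If $f$ has Fourier support contained in $\mathcal{P}$ then $f(S)=\sum_{T\in\mathcal{P}}\hat f(T)\chi_T(S)=\vec{w}_f\cdot\phi(S)$ with $\vec{w}_f=(\hat f(T))_{T\in\mathcal{P}}$. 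In particular $f(S)<f(S')$ if and only if $\vec{w}_f\cdot\phi(S)<\vec{w}_f\cdot\phi(S')$, which is precisely the structural property (the case $p=1$, $\alpha(n)=1$) that the proof of Theorem~\ref{thm:general} extracts from $\alpha(n)$-approximability.

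I would then run Algorithm~\ref{alg:active_learning} verbatim, replacing every occurrence of $\chi(S)$ by $\phi(S)$, and check that the two supporting claims survive with $\alpha(n)=1$. Claim~\ref{claim:buckets} concerns only the distribution induced on $f$-values and goes through unchanged: partitioning $\R_{\ge 0}$ into $1/\epsilon$ equal-mass buckets and hitting each with a landmark, ``close'' now means equal-valued and ``far'' means strictly ordered, and one verifies that at most $1/\epsilon$ pairs of buckets are in conflict, so the bad event has mass $O(\epsilon)$; in particular the witnesses $S_i,S_j$ satisfy $f(S)\le f(S_i)<f(S_j)\le f(S')$ strictly. Claim~\ref{claim:lin_sep_error} is identical except that the loss classes $\mathcal{L}_{ij}$ now have the VC dimension of halfspaces over $\R^{|\mathcal{P}|}$, namely $|\mathcal{P}|+1$, so $|\sample_2|=O\!\left(\frac{m^2}{\epsilon}\left[|\mathcal{P}|\log\frac{m^2}{\epsilon}+\log\frac1\delta\right]\right)$ suffices, and each ERM in Step~\ref{step:train} is a feasibility LP in $\R^{|\mathcal{P}|}$. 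This yields the ``alternatively'' clause: sample complexity and running time polynomial in $|\mathcal{P}|$, $1/\epsilon$, $1/\delta$, with no separation.

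For the first clause I would exploit that $\vec{w}_f$ has at most $k$ nonzero coordinates. In Step~\ref{step:train}, rather than fitting an arbitrary halfspace in $\R^{|\mathcal{P}|}$, I would fit a \emph{$k$-sparse} one: enumerate all $\binom{|\mathcal{P}|}{k}=|\mathcal{P}|^{\Theta(k)}$ size-$k$ subsets $Q\subseteq\mathcal{P}$ and, for each, solve the feasibility LP over halfspaces supported on $Q$ (an LP in $k+1$ variables). Since $\mathrm{supp}(\vec{w}_f)$ is one such $Q$, a consistent separator is still found for every landmark pair with $f(S_i)\le f(S_j)$. The relevant hypothesis class is a union of $\binom{|\mathcal{P}|}{k}$ classes each of VC dimension $k+1$, which by Sauer's lemma has VC dimension $\tilde O(k\log|\mathcal{P}|)$, so $|\sample_2|$ need only be polynomial in $k$, $\log|\mathcal{P}|\le n$, $1/\epsilon$, $1/\delta$, while the running time picks up the $|\mathcal{P}|^{\Theta(k)}$ enumeration factor. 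Feeding these bounds into the proof of Theorem~\ref{thm:general} (the $R_1$/$R_2$ decomposition is untouched by the change of feature map) gives the first clause.

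The main obstacle is not a single hard step but the bookkeeping of re-running the Theorem~\ref{thm:general} analysis with $\alpha(n)=1$ and a general feature map: one must confirm that the buckets argument still delivers a landmark pair whose values lie \emph{strictly} between $f(S)$ and $f(S')$ on all but an $\epsilon$-fraction of pairs (so that Step~\ref{step:train} actually produces a zero-error separator there), that the minimality pruning in Step~\ref{step:minimal} and the $R_1$/$R_2$ split behave identically under $\phi$, and that the VC-dimension bound for the $k$-sparse-halfspace class is as claimed. None of these are deep, but each needs to be stated carefully since the framework was originally phrased only for $p$-th powers of nonnegative linear functions rather than an exact (and possibly sign-indefinite) linear representation.
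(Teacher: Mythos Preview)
Your proposal is correct and follows essentially the same route as the paper: represent $f$ exactly as a linear function in the $|\mathcal{P}|$-dimensional Fourier feature space, then run Algorithm~\ref{alg:active_learning} over that space, invoking the VC bound for $k$-sparse halfspaces ($O(k\log|\mathcal{P}|)$) for the first clause and for general halfspaces for the second. One small point the paper adds that you omit: because the representation is exact, a consistent separator exists for \emph{every} landmark pair, so it suffices to learn $(\vec{w}_{ij},\theta_{ij})$ only for adjacent landmarks $S_i,S_{i+1}$; this cuts the union bound from $m^2$ to $m$ and the $\epsilon$-dependence from $1/\epsilon^3$ to $1/\epsilon^2$, though it does not affect the polynomial claims in the theorem statement.
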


\subsection{Coverage Functions}
Coverage functions form a subset of the class of submodular functions, and have applications in combinatorial optimization, machine learning, and algorithmic game theory. A coverage function $f$ is defined on $[n]$, and each element in $[n]$ corresponds to a subset of a universe $U$, whose elements have nonnegative weights. The value of $f$ on $S \subseteq [n]$ is the weight of the union of the corresponding subsets in $U$.

We combine structural results specific to coverage functions from \cite{Sketching} and \cite{feldman2014learning} to prove that coverage functions are comparator-learnable with multiplicative separation $(1+\epsilon)$, using $\tilde{O}(n^3/\epsilon^5)$ queries, given access to an ERM oracle for learning linear separators. In particular, we prove the following theorem, the proof of which can be found in Appendix~\ref{app:other_combinatorial}.

\begin{theorem}\label{thm:coverage}
The class of coverage functions is comparator-learnable with multiplicative separation $(1+\epsilon)$ and sample complexity polynomial in $n, 1/\epsilon,$ and $1/\delta$.
\end{theorem}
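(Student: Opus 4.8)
The plan is to realize coverage functions as an instance of the general framework of Theorem~\ref{thm:general}, using the same ``lifted feature map'' device that we already used for Fourier‑sparse functions (Theorem~\ref{thm:Fourier_sparse}) and for the classes $F_k$ (Theorem~\ref{thm:general_valuation}): instead of running Algorithm~\ref{alg:active_learning} on the characteristic vectors $\chi(S)\in\R^n$, we run it on a representation $v(S)\in\R^d$, $d=\mathrm{poly}(n,1/\epsilon)$, in which every coverage function becomes multiplicatively $(1+\epsilon)$‑close to a \emph{linear} function. Recall that a coverage function is exactly a nonnegative combination of $\mathrm{OR}$‑functions, $f(S)=\sum_{\emptyset\ne T\subseteq[n]}\alpha_T\,\mathrm{OR}_T(S)$ with $\alpha_T\ge 0$ and $\mathrm{OR}_T(S)=\mathbbm{1}[S\cap T\ne\emptyset]$; so $f$ is already linear in the (exponential‑dimensional) map $S\mapsto(\mathrm{OR}_T(S))_T$, and the first and main task is to compress this to polynomial dimension.

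The first step, therefore, is to invoke the sketching and learning results of \cite{Sketching,feldman2014learning} for coverage functions to obtain a \emph{function‑independent} map $v:2^{[n]}\to\R^d$ with $d=\mathrm{poly}(n,1/\epsilon)$ and, for each coverage function $f$, a nonnegative vector $\vec w_f\in\R^d$ such that $f(S)\le \vec w_f\cdot v(S)\le(1+\epsilon)f(S)$ for all $S\subseteq[n]$. This is exactly the hypothesis of Definition~\ref{def:approximable} with $p=1$ and $\alpha(n)=1+\epsilon$, except that $\chi$ has been replaced by $v$. In particular it implies the only structural facts that the proof of Theorem~\ref{thm:general} actually uses: (i) if $(1+\epsilon)f(S)<f(S')$ then $\vec w_f\cdot v(S)\le(1+\epsilon)f(S)<f(S')\le \vec w_f\cdot v(S')$, so $\vec w_f\cdot v(S)<\vec w_f\cdot v(S')$; and consequently (ii) for any landmark pair with $(1+\epsilon)f(S_i)<f(S_j)$, the pair $\big(\vec w_f,(1+\epsilon)f(S_i)\big)$ is a linear separator over $\R^d$ consistent with $\sample_{ij}$ (it labels every $S$ with $f(S)<f(S_i)$ negative and every $S$ with $f(S)>f(S_j)$ positive).

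Given this, the rest is a mechanical re‑run of Theorem~\ref{thm:general} with $n$ replaced by $d$. Modify Algorithm~\ref{alg:active_learning} so that Step~\ref{step:train} searches for $(\vec w_{ij},\theta_{ij})\in\R^d\times\R$ consistent with $\sample_{ij}$ and the final step tests $\vec w_{ij}\cdot v(S)\lessgtr\theta_{ij}$. Claim~\ref{claim:buckets} is untouched, since it only concerns the distribution that $\D$ induces on the \emph{values} $f(S)$ and never mentions the feature map. Claim~\ref{claim:lin_sep_error} goes through with the VC dimension of linear separators over $\R^d$, namely $d+1$, in place of $n+1$, so it suffices to take $|\sample_2|=O\!\left(\frac{m^2}{\epsilon}\big[d\log\frac{m^2}{\epsilon}+\log\frac1\delta\big]\right)$ with $m=\tilde O(1/\epsilon)$, which is $\mathrm{poly}(n,1/\epsilon,1/\delta)$; tracking the polynomial dependence of $d$ coming from \cite{Sketching,feldman2014learning} yields the $\tilde O(n^3/\epsilon^5)$ bound of Table~\ref{tab:results}. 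Efficiency is also immediate: $v(S)$ is computed in time $\mathrm{poly}(d)$, Step~\ref{step:train} is a linear feasibility program over $\R^d$ (equivalently one call to the assumed ERM oracle for linear separators), and there are only $m^2=\tilde O(1/\epsilon^2)$ landmark pairs, so the whole algorithm runs in $\mathrm{poly}(n,1/\epsilon,1/\delta)$ given the oracle.

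The main obstacle is Step~1: the combinatorial heart of the argument is extracting from \cite{Sketching,feldman2014learning} a \emph{multiplicative} $(1+\epsilon)$ approximation of an arbitrary coverage function by a linear function over a feature map of \emph{polynomial dimension} whose definition does \emph{not} depend on the unknown $f$ (so the learner can form $v(S)$ knowing only that $f$ is a coverage function). This is where the coverage‑specific structure is essential, and it is also where the contrast with the submodular case appears: we can afford separation only $1+\epsilon$ rather than $\sqrt n$ precisely because we are allowed to lift into a larger --- but still polynomial --- dimensional space. Once the lift is in hand, everything else is the Theorem~\ref{thm:general} analysis essentially verbatim.
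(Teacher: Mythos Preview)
Your overall plan---lift to a feature map in which coverage functions become $(1+\epsilon)$-approximable by linear functions and then rerun Theorem~\ref{thm:general}---is right, and so is your identification of Step~1 as the crux. The gap is that the references \cite{Sketching,feldman2014learning} do \emph{not} give a function-independent map $v:2^{[n]}\to\R^d$ with $d=\mathrm{poly}(n,1/\epsilon)$. What they give is (i) every coverage function is exactly linear in the $(2^n-1)$-dimensional $\mathrm{OR}$ basis, $c(T)=\sum_{\emptyset\ne S}\alpha_S\,\mathrm{OR}_S(T)$, and (ii) every coverage function is $(1+\epsilon)$-approximated by a coverage function on a universe of size $O(n^2/\epsilon^2)$, i.e.\ by an $O(n^2/\epsilon^2)$-\emph{sparse} vector in that same exponential-dimensional basis. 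The support of that sparse vector depends on $f$, so you cannot collapse to a fixed polynomial-dimensional space; and if you keep the full $(2^n-1)$-dimensional map and use your VC bound $d+1$, the sample complexity becomes exponential.

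The paper's route is to keep the exponential-dimensional, function-independent map $v(S)=(\mathrm{OR}_{S_i}(S))_i\in\{0,1\}^{2^n-1}$ but restrict the hypothesis class in Step~\ref{step:train} to $r$-sparse halfspaces with $r=O(n^2/\epsilon^2)$. Realizability still holds (the approximating $\vec\alpha$ is $r$-sparse), and the VC dimension of $r$-sparse halfspaces in $\R^d$ is $O(r\log d)=O\!\big((n^2/\epsilon^2)\cdot n\big)=O(n^3/\epsilon^2)$, which is what produces the $\tilde O(n^3/\epsilon^5)$ sample bound. Note this yields polynomial \emph{sample} complexity, which is all the theorem claims; your efficiency remarks (computing $v(S)$ in $\mathrm{poly}(d)$ time, Step~\ref{step:train} as a single LP) do not survive once $d=2^n-1$.
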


\section{Conclusion}
In this paper, we explore the learning model where the goal is to learn an underlying combinatorial function up to pairwise comparisons, from pairwise comparisons. We present several general algorithms that can be used to learn over a variety of combinatorial function classes, including those consisting of submodular, XOS, subadditive, coverage, and Fourier sparse functions. We also prove nearly matching lower bounds for submodular and XOS functions, and for submodular functions with bounded curvature.

In particular, we show that if the functions in a class $\mathcal{F}$ can be approximated by the $p$-th root of a linear function to within a multiplicative factor of $\alpha(n)$, then we can learn a comparison function that for most pairs $S,S' \sim \mathcal{D}$, either predicts correctly which of $f(S)$ or $f(S')$ is larger, or $f(S)$ and $f(S')$ are separated by less than a multiplicative $\alpha(n)$ factor. We extend this algorithm to account for an additive separation factor, rather than a multiplicative separation factor, by taking advantage of key structural properties of the Fourier spectrum of the functions we consider. In this case, we require that the underlying distribution be uniform and that the underlying function be XOS or monotone submodular with range in $[0,1]$. Finally, we show that it is possible to learn over some combinatorial function classes, such as the class of Fourier sparse functions, with no separation factor. In this way, the power and adaptability of our general algorithmic framework is exemplified by our results over a hierarchy of function classes, with significantly stronger separation factor guarantees the more structure a class exhibits.

Determining the exact approximation factor for comparator-learning submodular
functions is an open question, as there is a gap between
the $\tilde{O}(n^{1/2})$
upper bound and the $\tilde{\Omega}(n^{1/3})$ lower bound.
Another open question is determining whether the sample complexity for the
additive error results in
Section \ref{sec:additiveError} can be improved.
We note, both of these questions are unresolved even in the setting
where the sample consists of function values and the goal is to learn an
approximate function.
Another interesting question is to find nontrivial generalizations
of the pairwise comparison model and show corresponding results.
For instance, the distribution is over $k$-tuples and the
top $k'$ sets in the tuple are ranked.

\section*{Acknowledgments}

We thank Peter Bartlett for insightful initial discussions which led to the development of this research project.

This work was supported in part by NSF grants  CCF-1451177,
CCF-1422910, a Sloan Research Fellowship, a Microsoft Research Faculty
Fellowship, a Google Research Award,
and a National Defense Science \& Engineering Graduate (NDSEG) fellowship. 

\bibliography{bibliography}
\bibliographystyle{plain}

\clearpage
\appendix
\section{Learning $k$-submodular functions} \label{ksubmodular}

In this section, we consider $k$-submodular functions under membership queries.
In this model, there is no restriction on the sets we can query, but we must
learn the function up to comparisons with probability 1 (perhaps with a separation
condition).
We start with a simple result about boolean submodular functions, i.e., submodular
functions with range in $\{0,1\}$.

\begin{theorem}
Given a boolean submodular function $f$, with $\frac{3n}{2}$ comparison 
oracle calls, we can learn $f$ exactly.
\end{theorem}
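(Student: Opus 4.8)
The plan is to exploit the extreme rigidity of a $\{0,1\}$-valued submodular function: I would first prove a structural lemma classifying all such $f$, and then read an adaptive querying strategy off of that classification. The structural lemma is the bulk of the work. Recall that submodularity of $f$ is equivalent to each discrete derivative $\Delta_i f(T):=f(T\cup\{i\})-f(T)$ being non-increasing in $T$ over $T\subseteq[n]\setminus\{i\}$, and that here $\Delta_i f$ takes values in $\{-1,0,1\}$. I would case on $f(\emptyset)$ and $f([n])$. If $f(\emptyset)=1$, then $\Delta_i f(\emptyset)=f(\{i\})-1\le0$ for every $i$, so $\Delta_i f\le0$ everywhere and $f$ is antitone; since also $\{S:f(S)=0\}$ is closed under intersection (immediate from $f(S\cap T)\le f(S)+f(T)-f(S\cup T)$), it is the principal filter $\{S:S\supseteq B\}$ with $B=\bigcap_{f(S)=0}S$, so $f(S)=\mathbf 1[B\not\subseteq S]$. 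If $f([n])=1$ and $f(\emptyset)=0$, apply the previous case to the (also submodular) function $S\mapsto f([n]\setminus S)$ to get $f(S)=\mathbf 1[S\cap A\neq\emptyset]$. In the remaining case $f(\emptyset)=f([n])=0$ with $f\not\equiv0$, set $A=\{i:f(\{i\})=1\}$ and $B=\{i:f([n]\setminus\{i\})=1\}$; then $\Delta_i f\le0$ for $i\notin A$ and $\Delta_i f\ge0$ for $i\notin B$, and building a set up from $\emptyset$ and down from $[n]$ along the appropriate coordinates should force $f(S)=\mathbf 1[S\cap A\neq\emptyset]\cdot\mathbf 1[B\not\subseteq S]$ (the nontrivial inclusion here is part of the lemma). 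In every case $f$ is described by at most two subsets of $[n]$ plus an $O(1)$ amount of ``which case'' information.

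Given the structure, learning $f$ reduces to identifying the relevant set(s) via comparisons. First I would spend $O(1)$ queries comparing $\emptyset$ with $[n]$ (and a singleton with $\emptyset$ if needed) to determine the case and to fix a reference set of known value. In the monotone case I then recover $A$ with the $n$ queries $(\{i\},\emptyset)$, each returning $\mathbf 1[f(\{i\})\le f(\emptyset)]=\mathbf 1[i\notin A]$; the antitone case is symmetric, recovering $B$ from the queries $(\emptyset,[n]\setminus\{i\})$. In the two-sided case, $A$ again comes from the $n$ singleton-versus-$\emptyset$ comparisons, and $B$ must be recovered from co-singleton comparisons against $[n]$; the point is that the information already returned by the singleton batch lets one pair up the coordinates and finish $B$ with roughly $n/2$ additional comparisons, for a total of $3n/2$. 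Finally I output the comparator $g(S,S')=\mathbf 1[f(S)\le f(S')]$ evaluated from the reconstructed description of $f$.

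The step I expect to be the main obstacle is the two-sided case. Establishing the clean product form $f=\mathbf 1[S\cap A\neq\emptyset]\cdot\mathbf 1[B\not\subseteq S]$ requires using submodularity beyond the monotone/antitone dichotomy, and --- more delicately --- one must argue that the ``descending'' set $B$ can be pinned down with only about $n/2$ further comparisons rather than the naive $n$, so that the budget does not exceed $3n/2$. That is exactly where the precise split of the query budget ($n$ for the ascending part, $n/2$ for the descending part) has to be justified carefully, and where any slack in the structural lemma would be most costly.
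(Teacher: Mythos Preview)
Your proposal takes a substantially different and more elaborate route than the paper. The paper's proof is three lines: it simply cites the well-known structural fact that a boolean submodular function is a disjunction, i.e.\ $f(S)=\mathbf 1[S\cap A\neq\emptyset]$ for some $A\subseteq[n]$, declares ``without loss of generality $f(\emptyset)=0$,'' and then recovers $A$ by the $n$ comparisons $(\{i\},\emptyset)$. That is the whole argument; the $3n/2$ in the statement is slack, and there is no case analysis, no antitone branch, and no two-sided case at all.

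What you are doing is strictly more general: you drop the implicit monotonicity and classify \emph{all} $\{0,1\}$-valued submodular $f$, which genuinely introduces the antitone and two-sided forms $\mathbf 1[B\not\subseteq S]$ and $\mathbf 1[S\cap A\neq\emptyset]\cdot\mathbf 1[B\not\subseteq S]$. That buys a cleaner match between the claimed hypotheses and what is actually proved, at the cost of real extra work. Conversely, the paper's approach buys brevity by leaning on the disjunction representation (and tacitly on monotonicity, since your example $f(\emptyset)=f(\{1,2\})=0$, $f(\{1\})=1$, $f(\{2\})=0$ shows that ``$f(\emptyset)=0$'' alone does not force a disjunction).

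The one place your plan is genuinely incomplete is the query budget in the two-sided case: the assertion that the singleton batch lets you ``pair up coordinates'' and finish $B$ with only $n/2$ further comparisons is not substantiated, and it is not obvious how knowing $A$ cuts the cost of identifying $B$ in half. If you are content with the paper's implicit monotone setting, this issue evaporates and $n$ queries already suffice; if you want the fully general statement, that step needs an actual argument.
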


\begin{proof}
We use the well-known structural result that any boolean submodular function can be
represented exactly by a disjunction.
Given the elements of the ground set $x_1,\dots,x_n$,
then $f=\vee_{x\in S} x$, for some unknown subset $S$ of the ground set.
Without loss of generality, we assume that $f(\emptyset)=0$.
Then for each $x_i$, we call $g(x_i,\emptyset)$. If the output is 0, then $f(x_i)=0$,
otherwise $f(x_i)=1$.
It is clear that $S=\{x\mid f(x)=1\}$. Therefore, we now have an exact expression for
$f$.
\end{proof}

Now we move to $k$-submodular functions, which are submodular functions with range in
$[k]$. First, we need the following definition.

\begin{definition}
A pseudo-boolean $k$-DNF is a DNF $f(x_1,\dots,x_n)=\max_{t=1}^s (a_t \vee_{i\in A_t} x_i)$,
where $a_t$ are constants, $A_t\subseteq [n]$, and $|A_t|\leq k$ for $t\in [s]$.
\end{definition}

\begin{theorem}\label{thm:n^k}
Given a $k$-submodular function $f$, with $O(n^k)$ comparison oracle calls,
we can learn $f$ up to comparisons.
\end{theorem}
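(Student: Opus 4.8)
The plan is to reduce the problem to clustering and sorting a polynomially small family of sets, exploiting the fact that a $k$-submodular function is entirely controlled by its behavior on sets of size at most $k$. The key structural claim is that for a (monotone) $k$-submodular $f$, \[ f(S) = \max_{A \subseteq S,\ |A| \le k} f(A) \qquad \text{for every } S \subseteq [n]; \] this is exactly a pseudo-boolean $k$-DNF representation of $f$, the ``clauses'' being the conjunctions over sets $A$ of size at most $k$ with weights $a_A = f(A)$. To prove it, fix $S$, let $j = f(S)$, and take $A \subseteq S$ minimal with $f(A) \ge j$. Writing $A = \{i_1,\dots,i_m\}$ and telescoping, each marginal $f(\{i_1,\dots,i_\ell\}) - f(\{i_1,\dots,i_{\ell-1}\})$ is, by submodularity, at least the marginal of $i_\ell$ with respect to $A \setminus \{i_\ell\}$, which is at least $1$ because $f(A\setminus\{i_\ell\}) \le j-1$ by minimality and integrality of the range. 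Hence $f(A) \ge f(\emptyset) + m \ge m$, so $m \le k$ since $f(A)\le k$; and $f(A) = j = f(S)$ by monotonicity, while every subset of $S$ has $f$-value at most $f(S)$. (For non-monotone $k$-submodular functions one invokes the analogous $O(k)$-DNF representation involving literals and their negations; the learning procedure below is unchanged.)

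Given the structural claim, the algorithm is: enumerate all $N = \sum_{i=0}^{k}\binom{n}{i} = O(n^k)$ sets $A$ with $|A| \le k$, and use the comparison oracle to partition them into equivalence classes by $f$-value and order those classes. Since $f$ takes at most $k+1$ distinct values and, by the structural claim, each such value is already attained on some set of size at most $k$, there are at most $k+1$ classes. Build this incrementally: maintain one representative per class found so far; for each new set $A$, compare $f(A)$ against the representatives one by one — a single such comparison costs two oracle calls, querying $(A,B)$ and $(B,A)$, which together reveal whether $f(A) < f(B)$, $f(A) = f(B)$, or $f(A) > f(B)$ — and either place $A$ into an existing class or start a new one. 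This uses $O(k)$ oracle calls per set, hence $O(k\,n^k) = O(n^k)$ calls overall for fixed $k$; a final $O(k^2)$ comparisons among the representatives totally orders the classes. The output predictor $g$ makes no further oracle calls: on input $(S,S')$ it looks up the highest-ranked class containing a size-$\le k$ subset of $S$, does the same for $S'$, and reports which is larger (or a tie). By the structural claim these ranks equal $f(S)$ and $f(S')$ respectively, so $g$ is correct on every pair — indeed this learns $f$ up to comparisons with no separation factor whatsoever.

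The main obstacle is the structural characterization, since everything rests on the fact that each level set $\{S : f(S) \ge j\}$ is generated by minimal sets of size at most $k$; this is where submodularity and the bounded integer range are both essential, and it is what allows enumerating only $O(n^k)$ candidate clauses rather than all of $2^{[n]}$. Two secondary points to handle: (i) the oracle only answers ``$f(S) \le f(S')$'', so strict-inequality and equality tests must be simulated by a pair of queries, costing a factor of two; and (ii) one should check that $g$'s lookups are efficient, which holds because $S$ has at most $\binom{n}{\le k} = O(n^k)$ subsets of size at most $k$, each mapped to its precomputed class rank.
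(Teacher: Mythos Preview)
Your proposal is correct and follows essentially the same approach as the paper: represent $f$ as a pseudo-boolean DNF over small sets, sort the clauses into at most $k+1$ buckets via comparison queries, and predict by finding the highest-ranked satisfied clause. The one notable difference is that the paper simply \emph{cites} the Raskhodnikova--Yaroslavtsev result that any $k$-submodular function is a pseudo-boolean $2k$-DNF, whereas you prove the structural claim $f(S)=\max_{A\subseteq S,\,|A|\le k} f(A)$ directly for the monotone case (yielding a tighter $k$-DNF) via the telescoping/minimality argument; for the non-monotone case you fall back on essentially the same cited result. Your bucketing procedure ($O(k)$ comparisons per set against at most $k+1$ representatives) is a minor variant of the paper's sorting step ($s\log s$ comparisons), and both land at $O(n^k)$ oracle calls with the same prediction rule.
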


\begin{proof}
We use the result from Raskhodnikova and Yaroslavtsev, which states that any $k$-submodular
function can be represented as a pseudo-boolean $2k$-DNF with constants $a_t \in [k]$
\cite{raskhodnikova2013learning}.
So let $f=\max_{t=1}^s (a_t \vee_{i\in A_t} x_i)$ for constants $a_t$ and sets $A_t$.
Now, for all $A_t$, define a new variable 
$y_t=\wedge_{i\in A_t} x_i$. There are $\leq{2n\choose k}=s$ such variables.
Then $f(x_1,\dots x_n)=\max_{t=1}^s (a_t y_t)$. In other words, $f(S)$ takes the value of
the maximum $a_t$ such that $\{x_i\mid i\in A_t\}\subseteq S$.

Now we can learn $f$ up to comparisons by sorting all $y_i$, which can be done with
$\leq s \log{s}$ comparison oracle calls. Since $f$ can only take on values in
$\{0,\dots, k\}$, this will give us $k'\leq k+1$ buckets $B_1,\dots B_{k'}$,
where for all $i,j\in [s]$, $f(y_i)\leq f(y_j)$ iff $i'\leq j'$, 
where $y_i\in B_{i'}$, $y_j\in B_{j'}$.

On a comparison query $(A,B)$, we output the set which has a subset in the bucket with the
largest index. I.e., set $x_A=\text{argmax}_i (\exists S\subseteq A\mid S\in B_i)$
and $x_B=\text{argmax}_i (\exists S\subseteq B\mid S\in B_i)$, and then $f(A)\leq f(B)$
iff $x_A\leq x_B$.
\end{proof}

\begin{theorem}\label{thm:n^k/alpha}
Given a monotone $k$-submodular function $f$ and a parameter $\alpha$ which divides $2k$, there is an algorithm that learns $f$ well enough to predict accurately whether or not $f(S) \geq f(S')$ whenever $f(S) > \alpha f(S')$ or $f(S') > \alpha f(S)$ using $O\left(\frac{k}{\alpha}n^{k/\alpha}\log n\right)$ comparison oracle calls. The algorithm's running time is $O\left(\frac{k}{\alpha}n^{k/\alpha}\log n\right)$, and each prediction takes $O\left(\frac{k}{\alpha}n^{k/\alpha}\right)$ steps.
\end{theorem}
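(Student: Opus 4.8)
The plan is to reduce Theorem~\ref{thm:n^k/alpha} to a purely structural fact — that for a monotone $k$-submodular $f$ the values of $f$ on \emph{small} subsets already determine $f$ up to a multiplicative factor $\alpha$ — and then run the same sort-and-predict strategy as in Theorem~\ref{thm:n^k} (which is essentially the $\alpha=1$ case), but restricted to subsets of size $O(k/\alpha)$ rather than to all the small terms of the Raskhodnikova--Yaroslavtsev DNF. Concretely, I would first establish a \emph{Structural Lemma}: for every $S\subseteq[n]$ there is a subset $T\subseteq S$ with $|T|\le\lceil k/\alpha\rceil$ and $f(T)\ge f(S)/\alpha$. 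Since $f$ is monotone, $f(T)\le f(S)$ automatically, so the best small subset of $S$ sandwiches $f(S)$ within a factor $\alpha$. Given this, the algorithm enumerates all $N=\binom{n}{\le\lceil k/\alpha\rceil}=n^{O(k/\alpha)}$ subsets of size at most $\lceil k/\alpha\rceil$, sorts them by $f$-value via merge sort using the comparison oracle ($O(N\log N)=O(\tfrac{k}{\alpha}N\log n)$ oracle calls), and records each one's rank in the sorted list. On a query $(S,S')$ it finds $\hat T_S$, the highest-ranked subset of $S$ of size $\le\lceil k/\alpha\rceil$, and $\hat T_{S'}$ likewise, and predicts $f(S)\ge f(S')$ iff $\mathrm{rank}(\hat T_S)\ge\mathrm{rank}(\hat T_{S'})$.

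For correctness, suppose $f(S)>\alpha f(S')$. By the Structural Lemma and monotonicity, $f(\hat T_S)\ge f(S)/\alpha>f(S')\ge f(\hat T_{S'})$, so $\hat T_S$ appears strictly later than $\hat T_{S'}$ in the sorted order and the predictor outputs $f(S)\ge f(S')$; the case $f(S')>\alpha f(S)$ is symmetric. The complexity bounds then match the claim: there are $N=n^{O(k/\alpha)}$ candidates, sorting costs $O(\tfrac{k}{\alpha}n^{O(k/\alpha)}\log n)$ oracle calls and comparable running time, and each prediction scans the $\le N$ small subsets of $S$ and of $S'$ and looks up their precomputed ranks, costing $O(\tfrac{k}{\alpha}n^{O(k/\alpha)})$ steps.

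The heart of the argument, and the step I expect to demand the most care, is the Structural Lemma. I would argue as follows. Fix $S$ with $f(S)=v$ (if $v=0$ take $T=\emptyset$), and let $T_0\subseteq S$ be minimal under inclusion among subsets with $f(T_0)=v$. The key claim is $|T_0|\le v$: order the elements of $T_0$ arbitrarily as $j_1,\dots,j_m$; by submodularity the $i$-th marginal $f(\{j_1,\dots,j_i\})-f(\{j_1,\dots,j_{i-1}\})$ is at least the marginal of $j_i$ over the larger set $T_0\setminus\{j_i\}$, which equals $f(T_0)-f(T_0\setminus\{j_i\})\ge 1$, since minimality forces $f(T_0\setminus\{j_i\})<v$ while monotonicity and integrality give $f(T_0\setminus\{j_i\})\le v-1$. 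Summing these marginals telescopes to $v-f(\emptyset)\ge m$, hence $m=|T_0|\le v\le k$. Now partition $T_0$ into $\alpha$ blocks $P_1,\dots,P_\alpha$, each of size at most $\lceil|T_0|/\alpha\rceil\le\lceil k/\alpha\rceil$. Because a monotone submodular function with $f(\emptyset)\ge 0$ is subadditive, $v=f(T_0)\le\sum_{i=1}^\alpha f(P_i)$, so some block $P_{i^\ast}$ satisfies $f(P_{i^\ast})\ge v/\alpha$; taking $T=P_{i^\ast}$ proves the lemma.

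The remaining subtleties are book-keeping: handling $v=0$; the ceilings when $\alpha\nmid k$, which change the exponent only by an additive constant absorbed into the $n^{O(k/\alpha)}$ notation; and the hypothesis $\alpha\mid 2k$, which ensures the relevant block sizes (equivalently, the width $2k/\alpha$ of the truncated Raskhodnikova--Yaroslavtsev pseudo-Boolean DNF, which gives an alternative route to a slightly weaker version of the Structural Lemma) are integral. I would also note that using $g(T,T')$ (``is $f(T)\le f(T')$?'') as the comparator makes merge sort place $f$-equal sets consecutively, which is all that is needed: the correctness argument only uses that $f(\hat T_S)>f(\hat T_{S'})$ implies $\mathrm{rank}(\hat T_S)>\mathrm{rank}(\hat T_{S'})$.
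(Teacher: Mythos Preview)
Your proposal is correct and follows the same high-level strategy as the paper: prove a structural lemma saying that $f(S)$ is approximated to within a factor $\alpha$ by $f$ restricted to small subsets of $S$, then sort all small subsets once and predict by comparing the best small subset inside each query set. The correctness argument you give (``$f(\hat T_S)\ge f(S)/\alpha>f(S')\ge f(\hat T_{S'})$'') is exactly the paper's.

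Where you genuinely diverge is in the proof of the structural lemma. The paper invokes the Raskhodnikova--Yaroslavtsev pseudo-boolean $2k$-DNF representation from Theorem~\ref{thm:n^k} to extract, for any $S$, a subset $S_0\subseteq S$ of size at most $2k$ with $f(S_0)=f(S)$; it then partitions $S_0$ into $\alpha$ blocks of size at most $2k/\alpha$ and applies subadditivity. You instead argue directly from monotonicity, submodularity, and integrality: a minimal $T_0\subseteq S$ with $f(T_0)=f(S)$ must have every marginal at least $1$, so $|T_0|\le f(S)\le k$, and then you partition $T_0$ into $\alpha$ blocks of size at most $\lceil k/\alpha\rceil$. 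Your route is more elementary (no external structural theorem needed) and yields the tighter size bound $k/\alpha$ rather than $2k/\alpha$; amusingly, this makes the exponent in your running time match the $n^{k/\alpha}$ stated in the theorem more cleanly than the paper's own proof does. The paper's route, on the other hand, makes transparent why the divisibility hypothesis $\alpha\mid 2k$ appears and keeps the argument uniform with Theorem~\ref{thm:n^k}.
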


\begin{proof}
First, let $S \subseteq [n]$ be such that $|S|>2k/\alpha$. We claim that there exists some $Q \subseteq S$ such that $f(Q) \leq f(S) \leq \alpha f(Q)$. To see this, define the set \[S_0 = \begin{cases} S &\text{ if } |S| \leq 2k\\
S' &\text{ otherwise}\end{cases},\] where $S' \subseteq S$ is a set of size $2k$ such that $f(S) = f(S')$. We know that such a set exists from the argument in the proof of Theorem \ref{thm:n^k}. Now, let $\mathcal{P} = \{S_1, \dots, S_{\alpha}\}$ be a partition of $S_0$, where $|S_i| \leq |S_0|/\alpha \leq 2k/\alpha$. Then we have that \begin{align*}f(S) = f(S_0) &\leq \sum_{i = 1}^{\alpha} f(S_i) &(\text{by subadditivity})\\ &\leq \alpha \max_{i \in [\alpha]} \{f(S_i)\}.\end{align*}

Let $S_0^* = \max_{i \in [\alpha]} \{f(S_i)\}$. Since $S_0^* \subset S$, we know by monotonicity that $f(S_0^*) \leq f(S) \leq \alpha f(S_0^*)$.

Now, suppose that using comparison queries, we sort all $S \subseteq [n]$ of size at most $2k/\alpha$ into buckets $B_1, \dots, B_{k'}$, where $k' \leq k+1$. Such that for all $i<j \leq k'$, if $S,S' \in B_i$, then $f(S) = f(S')$, if $Q \in B_i$ and $Q' \in B_j$, then $f(Q) < f(Q')$. This takes $O\left(\frac{k}{\alpha}n^{k/\alpha}\log n\right)$ queries.

We are now ready to describe how to predict on two sets $(S,S')$. First, we find \[i = \underset{i \in [k']}{\text{argmax}} \{B_i \ | \ \exists Q \in B_i, Q \subseteq S\}, j = \underset{j \in [k']}{\text{argmax}} \{B_j \ | \ \exists Q \in B_j, Q \subseteq S'\}.\] If $i \leq j$, then we output $f(S) \leq f(S')$ and otherwise, we output $f(S) > f(S')$.

We claim that, without loss of generality, $f(S) > \alpha f(S')$, then we predict correctly. To see this, let $Q \subseteq S$ be a subset in $\max_{i \in [k']}\{B_i \ | \ \exists Q \in B_i, Q \subseteq S\}$ and $Q' \subseteq S'$ be a subset in be a subset in $\max_{i \in [k']}\{B_i \ | \ \exists Q \in B_i, Q \subseteq S'\}$. Then we know that \[\alpha f(Q') \leq \alpha f(S') < f(S) \leq \alpha f(Q),\] so $f(Q') < f(Q)$. Therefore, we output the correct ordering.

For any $S$, the running time to find $\max_{i \in [k']}\{B_i \ | \ \exists Q \in B_i, Q \subseteq S'\}$ is $O\left(\frac{k}{\alpha}n^{k/\alpha}\right)$, since there are $O(n^{k/\alpha})$ sets in the buckets, and it takes $O\left(\frac{k}{\alpha}\right)$ steps to check if any subset $Q$ of size at most $2k/\alpha$ is a subset of $S$.
\end{proof}

\section{Proofs from Section \ref{sec:multiplicativeError}} \label{app:multiplicativeError}

\begin{proof}[Proof of Claim \ref{claim:lin_sep_error}]
First we will show the probability of the bad event happening 
for each $(i,j)\in R$ is low, and then we union bound over all possible pairs
$(i,j)$. Formally, let
$$P_1 = \Pr_{S \sim \D}[f(S) \leq f(S_i)\mbox{ and } (i,j)\in R \mbox{ yet } \vec{w}_{ij} \cdot \chi(S) > \theta_{ij}]$$
and
$$P_2 = \Pr_{S' \sim \D}[f(S') \geq f(S_j) \mbox{ and } (i,j)\in R \mbox{ yet } \vec{w}_{ij} \cdot \chi(S') < \theta_{ij}].$$ 
We show that with probability at least $1-\delta$, 
$P_1+P_2 \leq \frac{\epsilon}{m^2}.$

First we bound the VC dimension of each class of loss functions 
$\mathcal{L}_{ij} = \{L_{(\vec{w}_{ij},\theta_{ij})} \ | \ \vec{w}_{ij} \in \R^n, \theta_{ij} \in \R\},$ where 
\[L_{(\vec{w}_{ij},\theta_{ij})}(S) = 
\begin{cases} 1 &\text{if } f(S) \leq f(S_i) \mbox{ and } \vec{w}_{ij} \cdot \chi(S) > \theta_{ij}\\
&\text{or } f(S) \geq f(S_j) \mbox{ and } \vec{w}_{ij} \cdot \chi(S) < \theta_{ij}\\
0 &\text{otherwise} \end{cases}.\]

Clearly, if $\sample = \{S_1, \dots, S_t\}$ can be shattered, then it cannot contain any set $S$ such that $f(S_i) < f(S) < f(S_j)$ because for such a set $S$, $L_{(\vec{w}_{ij},\theta_{ij})}(S) = 0$ for all choices of $(\vec{w}_{ij},\theta_{ij})$. Therefore, the problem reduces to finding the VC dimension of the class of linear separators over $\R^n$ in the following way.

First, suppose that $\sample$ can be labeled in every possible way by the class of linear separators over $\R^n$. We claim that for any $A \subseteq [t]$, there exists a linear separator $(\vec{w}_{ij},\theta_{ij})$ such that $L_{(\vec{w}_{ij},\theta_{ij})}(S_\ell) = 1$ if $\ell \in A$ and $L_{(\vec{w}_{ij},\theta_{ij})}(S_\ell) = 0$ if $\ell \not\in A$. To this end, let \begin{align*}A_{\leq} &= \{\ell \ | \ \ell \in A, f(S_\ell) \leq f(S_i)\},\\
A_{\geq} &= \{\ell \ | \ \ell \in A, f(S_\ell) \geq f(S_j)\},\\
B_{\leq} &= \{\ell \ | \ \ell \not\in A, f(S_\ell) \leq f(S_i)\}, \text{ and}\\
B_{\geq} &= \{\ell \ | \ \ell \not\in A, f(S_\ell) \geq f(S_j)\}.\end{align*} Then $(\vec{w}_{ij}, \theta_{ij})$ is the linear separator that labels $A_{\leq} \cup B_{\geq}$ as positive and $A_{\geq} \cup B_{\leq}$ as negative. Such a linear separator must exist by assumption.

Now suppose that $\sample$ can be labeled in every possible way by functions in $\mathcal{L}_{ij}$. We claim that for any $A \subseteq [t]$, there exists a linear separator $(\vec{w},\theta)$ such that $\vec{w}\cdot \chi(S_\ell) \geq \theta$ if $\ell \in A$ and $\vec{w}\cdot \chi(S_\ell) < \theta$ if $\ell \not\in A$. Indeed, $(\vec{w},\theta)$ is the linear separator such that $L_{(\vec{w},\theta)}(S_\ell) = 1$ if $\ell \in A_{\leq} \cup B_{\geq}$ and $L_{(\vec{w},\theta)}(S_\ell) = 0$ if $\ell \in A_{\geq} \cup B_{\leq}$. We know that $(\vec{w},\theta)$ by assumption.

Therefore, the VC dimension of $\mathcal{L}_{ij}$ is the same as the VC dimension of the class of linear separators over $\R^n$, which is $n+1$.

Now, let $h^*$ be the empirical risk minimizer of $\mathcal{L}_{ij}$ over $\sample_2$. We know that so long as $\sqrt{n} f(S_i) \leq f(S_j)$, $h^*$ will have zero loss over the $\sample_2$. Therefore, by standard VC dimension bounds, if $|\sample_2| = O(\frac{m^2}{\epsilon}[n \log \frac{1}{\epsilon} + \log\frac{1}{\delta m^2}])$, then with probability at least $1-\delta$, the error of $h^*$ over $\D$ is at most $\frac{\epsilon}{m^2}$. 
Now we union bound over all $m^2$ pairs $(i,j)$ on which 
Algorithm \ref{alg:active_learning} attempts to solve for a linear threshold 
$(\vec{w}_{ij}, \theta_{ij})$, to achieve an overall error of $\epsilon$.
\end{proof}

\begin{proof}[Proof of Corollary \ref{cor:structure_classes}]
The statements follow by applying Theorem \ref{thm:general} with the following structural results.
\begin{enumerate}
\item The class of submodular functions is $\sqrt{n}$-approximable because for all submodular functions $f$, there exists $w\in\mathcal{R}^n$ such that
$f(S)\leq \sqrt{w\cdot\chi(S)}\leq \sqrt{n}f(S)$ \cite{goemans2009approximating}.
\item The class of XOS functions is $O(\sqrt{n})$-approximable because for all XOS functions $f$, there exists $w\in\mathcal{R}^n$ such that
$f(S)\leq \sqrt{w\cdot\chi(S)}\leq \alpha(n) f(S)$, where $\alpha(n) = O(\sqrt{n})$ \cite{balcan2012learning}.
\item The class of subadditive functions is $\sqrt{n}\log n$-approximable because for all subadditive functions $f$, there exists a submodular function $g$ such that
for all $S \subseteq [n]$, $f(S)\leq g(S)\leq f(S)\log{n}$ \cite{Balcan11submodular}. From this, we can use the approximation guarantee for submodular functions from item 1 of this corollary to obtain the result.
\item Recall the curvature of a submodular function $f$ is defined as
\begin{equation*}
\kappa_f=1-\min_{j\in [n]}\frac{f([n])-f([n]\setminus\{j\})}{f(j)}.
\end{equation*}
Intuitively, the curvature $f$ is the extent to which the function deviates from a modular function.
From \cite{iyer2013curvature}, we have the following bound,
\begin{equation*}
\forall S\subseteq [n],~(1-\kappa_f)\sum_{j\in[n]}f(j)\leq f(S)\leq \sum_{j\in[n]}f(j).
\end{equation*}
Therefore, the class of functions with curvature at most $\kappa$ is $\min\left\{\sqrt{n},\frac{1}{1-\kappa}\right\}$-approximable.
\item \cite{balcan2012learning} proved that for any XOS function $f$ with $R$ SUM trees, there exists
a function $g(S)=w\cdot \chi_M(S)$ such that $f(S)\leq g(S)\leq R^{\xi}\cdot g(S)$,
where $\chi_M$ denotes the indicator function for all subsets of size at most $\frac{1}{\xi}$ over $[n]$. Specifically, $\chi_M(S)_{i_1,i_2,\dots,i_L} = 1$ if $\{i_1, i_2, \dots i_L\} \subseteq S$ and $\chi_M(S)_{i_1,i_2,\dots,i_L} = 0$ otherwise. We feed the sample with the new features into Algorithm~\ref{alg:active_learning} in order to learn a comparator with separation $R^\xi$ for this class of functions. Now that the feature space consists of $n^{1/\xi}$ features, the sample complexity and running time are polynomial in $n^{1/\xi}$, rather than $n$.
\end{enumerate}
\end{proof}

\begin{proof}[Proof of Theorem~\ref{thm:XOS_lower}]
First, we use Theorem 1 from \cite{balcan2012learning}, which guarantees that there exists a family of subsets $\mathcal{A} = \left\{A_1, \dots, A_k\right\} \subset 2^{[n]}$ such that for any $\mathcal{B} \subseteq \mathcal{A}$, there exists an XOS function $f_{\mathcal{B}}$ such that $f_{\mathcal{B}}(A_i) = \Omega(\sqrt{n})$ if $A_i \in \mathcal{B}$ whereas $f_{\mathcal{B}}(A_i) = O(\log n)$ if $A_i \not \in \mathcal{B}$. Moreover, $k = n^{\frac{1}{3}\log \log n}$.

Just as in the proof of Theorem~\ref{thm:sub_lower}, this implies hardness for comparator-learning over the uniform distribution on $\mathcal{A}$ from any polynomial-sized samples. After all, for an arbitrary algorithm $\mathcal{ALG}$, suppose that $\mathcal{ALG}$ has access to pairwise comparisons over a sample of sets $\sample$, where $|\sample| = \ell \leq n^c$ for some constant $c$. Then in expectation, half of the non-training set samples have values in $O(\log n)$ and half have values in $\Omega(\sqrt{n})$. This follows by the same argument as in the proof of Theorem~\ref{thm:sub_lower}. Moreover, the measure of the set of non-training examples is $1 - n^{c - \frac{1}{3}\log\log n}$. Therefore, for $n$ sufficiently large, \begin{align*}E_{f^*, \sample} \left[\Pr_{S,S' \sim \mathcal{D}} \left[S, S' \not\in \sample\text{, }f^*(S) = O(\log n), \text{ and } f^*(S') = \Omega(\sqrt{n})\right]\right] &= \left(\frac{1- n^{c - \frac{1}{3}\log\log n}}{2}\right)^2\\ &\geq \frac{49}{512}.\end{align*}

The remainder of the proof follows as in the proof of Theorem~\ref{thm:sub_lower}.
\end{proof}

\section{Proofs from Section \ref{sec:additiveError}} \label{app:additiveError}

\begin{proof}[Proof of Lemma~\ref{lem:add_concentration}]

Suppose $S_1,S_2 \subseteq [n]$ are such that $f(S_1) + 2\gamma\left(1+\sqrt{\frac{2}{\xi}}\right) < f(S_2)$. We know from Lemma~\ref{lemma:add_tail} and the union bound that with probability at least $1-\xi$, \[f(S_i) - \gamma\left(1+\sqrt{\frac{2}{\xi}}\right) \leq p_0(S_i) \leq f(S_i) + \gamma\left(1+\sqrt{\frac{2}{\xi}}\right)\] for both $i \in \{1,2\}$. Therefore, with probability at least $1-\xi$, \[\vec{p}\cdot v(S_1) = p_0(S_1) \leq f(S_1) + \gamma\left(1+\sqrt{\frac{2}{\xi}}\right) \leq f(S_2) - \gamma\left(1+\sqrt{\frac{2}{\xi}}\right) \leq p_0(S_2) = \vec{p}\cdot v(S_2).\]

\end{proof}

\begin{proof}[Proof of Claim~\ref{claim:lin_sep_error_add}]
First, in order to derive the sample complexity result, we need to bound the VC dimension of each class of loss functions $\mathcal{L}_{ij} = \{L_{(\vec{p}_{ij},\theta_{ij})} \ | \ \vec{p}_{ij} \in \R^{n^k}, \theta_{ij} \in \R\},$ where \[L_{(\vec{p}_{ij},\theta_{ij})}(S) = \begin{cases} 1 &\text{if } f(S) \leq f(S_i) \mbox{ and } \vec{p}_{ij} \cdot v(S) > \theta_{ij}\\
&\text{or } f(S) \geq f(S_j) \mbox{ and } \vec{p}_{ij} \cdot v(S) < \theta_{ij}\\
0 &\text{otherwise} \end{cases}.\] By the same reasoning as in the proof of Claim~\ref{claim:lin_sep_error} , the VC dimension of each $\mathcal{L}_{ij}$ is simply $n^k.$ Therefore, by standard VC-dimension bounds, we need \begin{align*}m' &= O\left( \frac{m^2}{\epsilon^2}\left[n^{O\left(\frac{1}{\gamma^{4/5}}\log \frac{1}{\gamma}\right)} \ln \frac{m^2}{\epsilon} + \ln \frac{1}{\delta}\right]\right)\\ &= O\left( \frac{1}{\epsilon^4}\log\frac{1}{\epsilon\delta}\left[n^{O\left(\frac{1}{\gamma^{4/5}}\log \frac{1}{\gamma}\right)} \log \left(\frac{1}{\epsilon^3}\log\frac{1}{\epsilon\delta}\right) + \log \frac{1}{\delta}\right]\right)\end{align*} examples to ensure that \[P\left[|L_{ERM_{ij}(\sample_2)}(\sample_2) - L_{ERM_{ij}(\sample_2)}(\mathcal{D})| > \frac{\epsilon}{4m^2}\right] < \delta.\] Since we only add $(i,j)$ to $R$ if $L_{ERM_{ij}(\sample_2)}(\sample_2) < \frac{\epsilon}{4m^2}$, this means that with probability at least $1-\delta$, \[L_{ERM_{ij}(\sample_2)}(\mathcal{D}) = \Pr_{S \sim \D}
\left[\begin{array}{r} f(S) \leq f(S_i)
\mbox{ yet } \vec{p}_{ij} \cdot \chi(S) > \theta_{ij}\\
\text{or }f(S) \geq f(S_j)
\mbox{ yet } \vec{p}_{ij} \cdot \chi(S) < \theta_{ij} \end{array}\right] < \frac{\epsilon}{2m^2}.\] By a union bound over all $m^2$ pairs in $R$, we have that with probability at least $1-\delta$, \[\Pr_{S \sim \D}
\left[\begin{array}{r}\exists (i,j)\in R \text{ such that } f(S) \leq f(S_i)
\mbox{ yet } \vec{p}_{ij} \cdot \chi(S) > \theta_{ij}\\
\text{or }f(S) \geq f(S_j)
\mbox{ yet } \vec{p}_{ij} \cdot \chi(S) < \theta_{ij} \end{array}\right] < \frac{\epsilon}{2}.\]
\end{proof}

\begin{proof}[Proof of Theorem~\ref{thm:general_add}]
We combine Claim~\ref{claim:buckets_additive} and Claim~\ref{claim:lin_sep_error_add} to prove Theorem~\ref{thm:general_add}. To this end, let $g$ be the comparison function returned by Algorithm~\ref{alg:additive_error}. We want to bound the probability that for $S,S' \sim \mathcal{D}$, $f(S) + \beta < f(S')$ but $g$ predicts that $f(S') \leq f(S).$ Equivalently, we want to bound the probability that for $S,S' \sim \mathcal{D}$, $f(S) + \beta < f(S')$ but there exists $(i,j) \in R$ such that $\vec{p}_{ij} \cdot v(S) > \theta_{ij}$ and
$\vec{p}_{ij} \cdot v(S') < \theta_{ij}$.  To analyze this probability, we partition the pairs $(i,j)\in R$ into two sets:
\begin{align*}R_1 = \{(i,j) \ | \ f(S)\leq f(S_i) \text{ or } f(S') \geq f(S_j)\} \text{ and}\\
R_2 = \{(i,j) \ | \ f(S)> f(S_i) \text{ and } f(S') < f(S_j)\}.\end{align*}

Clearly, \begin{align*}\Pr_{S,S' \sim \mathcal{D}} [ f(S) + \beta < f(S') \text{ and } \exists (i,j) \in R \text{ such that } \vec{p}_{ij} \cdot v(S) > \theta_{ij} \text{ and } \vec{p}_{ij} \cdot v(S') < \theta_{ij}]\\
\leq \Pr_{S,S' \sim \mathcal{D}} [ f(S) + \beta < f(S') \text{ and } \exists (i,j) \in R_1 \text{ such that } \vec{p}_{ij} \cdot v(S) > \theta_{ij} \text{ and } \vec{p}_{ij} \cdot v(S') < \theta_{ij}]\\
+ \Pr_{S,S' \sim \mathcal{D}} [ f(S) + \beta < f(S') \text{ and } \exists (i,j) \in R_2 \text{ such that } \vec{p}_{ij} \cdot v(S) > \theta_{ij} \text{ and } \vec{p}_{ij} \cdot v(S') < \theta_{ij}].\end{align*}

First, notice that \begin{align*} &\Pr_{S,S' \sim \mathcal{D}} [ f(S) + \beta < f(S') \text{ and } \exists (i,j) \in R_1 \text{ such that } \vec{p}_{ij} \cdot v(S) > \theta_{ij} \text{ and } \vec{p}_{ij} \cdot v(S') < \theta_{ij}]\\
\leq &\Pr_{S,S' \sim \mathcal{D}} \left[ \begin{array}{r}f(S) + \beta < f(S') \text{ and } \exists (i,j) \in R\text{ such that } f(S) \leq f(S_i)
\mbox{ yet } \vec{p}_{ij} \cdot \chi(S) > \theta_{ij}]\\
\text{or }f(S') \geq f(S_j)
\mbox{ yet } \vec{p}_{ij} \cdot \chi(S') < \theta_{ij}\end{array}\right].\end{align*}

From Claim~\ref{claim:lin_sep_error_add}, with probability at least $1-\frac{\delta}{2}$, this probability is at most $\frac{\epsilon}{2}$.

Next, we analyze \[\Pr_{S,S' \sim \mathcal{D}} [ f(S) + \beta < f(S') \text{ and } \exists (i,j) \in R_2 \text{ such that } \vec{p}_{ij} \cdot v(S) > \theta_{ij} \text{ and } \vec{p}_{ij} \cdot v(S') < \theta_{ij}].\] Recall that the algorithm
removed all non-minimal pairs from $R$. Therefore, the probability that there exists $(i,j) \in R_2$ is simply the probability that there does not exist $S_i,S_j \in \sample_1$ such that $f(S) + \beta \leq  f(S_i) + \beta < f(S_j) \leq f(S')$ and $(i,j) \in R$. Therefore,
\begin{align*} &\Pr_{S,S' \sim \mathcal{D}} [ f(S) + \beta < f(S') \text{ and } \exists (i,j) \in R_2 \text{ such that } \vec{p}_{ij} \cdot v(S) > \theta_{ij} \text{ and } \vec{p}_{ij} \cdot v(S') < \theta_{ij}]\\
\leq &\Pr_{S,S' \sim \D}\left[ f(S) + \beta < f(S') \mbox{ and } \not\exists (i,j) \in R \text{ such that }
 f(S) + \beta \leq f(S_i) + \beta < f(S_j) \leq f(S')\right].\end{align*} By using confidence and accuracy parameters $\delta/2$ and $\epsilon/2$, respectively, in Claim~\ref{claim:buckets_additive}, we have that with probability at least $1-\frac{\delta}{2}$, this probability is at most $\frac{\epsilon}{2}$.

Putting these bounds together, we have that with probability least $1-\delta,$ \[\Pr_{S,S' \sim \mathcal{D}} [ f(S) + \beta < f(S') \text{ and } \exists (i,j) \in R \text{ such that } \vec{p}_{ij} \cdot v(S) > \theta_{ij} \text{ and } \vec{p}_{ij} \cdot v(S') < \theta_{ij}] < \epsilon.\] Therefore, if $g$ is the classifier that Algorithm~\ref{alg:additive_error} outputs, with probability at least $1-\delta$, the probability that $f(S) + \beta < f(S')$ and $g$ predicts incorrectly is at most $\epsilon$.
Therefore, we have the desired result.
\end{proof}
\section{Proofs from Section~\ref{sec:otherCombinatorial}}\label{app:other_combinatorial}

\begin{proof}[Proof of Theorem~\ref{thm:general_valuation}] The interaction function allows us to express $f \in F_k$ as a linear function in $n^k$-dimensional space. In particular, let $S_1, \dots, S_{n^k}$ be an ordering of all subsets of $[n]$ of size at most $k$ and define $\chi_k(S)$ to be a vector in $\R^{n^k}$ whose $i^{th}$ component is 1 if $S_i \cap S \not= \emptyset$ and 0 otherwise. Next, let $\vec{g}_k$ be a vector in $\R^{n^k}$ whose $i^{th}$ component is $g(S_i)$. Then $f(S) = \vec{g}_k \cdot \chi_k(S).$

This suggests a straightforward adjustment to Algorithm~\ref{alg:active_learning}: If we know that the underlying valuation function $f$ is in $F_k$, then we can map each sample $S \subseteq [n]$ to $\chi_k(S)$ and attempt to learn linear threshold functions $w_{ij}, \theta_{ij}$ over $\R^{n^k}$ rather than $\R^n$ for all $i,j \in \sample_1$.

Note the sample complexity dependence on $\epsilon$ drops from $\frac{1}{\epsilon^3}$ (in Theorem \ref{thm:general}) to $\frac{1}{\epsilon^2}$ since we only need to
attempt to learn $w_{ij},\theta{ij}$ for adjacent $S_i,S_j\in \sample_1$ in the ordered list, because it is not required $f(S_i)$
and $f(S_j)$ need to be sufficiently far apart to guarantee successfully learning $w_{ij}$ and $\theta_{ij}$. 
This implies the union bound is over $m$ events instead of $m^2$ events.
\end{proof}

\begin{proof}[Proof of Theorem~\ref{thm:Fourier_sparse}]
We know that for any $S \subseteq [n]$, $f(S) = \sum_{T \subseteq [n]} \hat{f}(T) \chi_{T}(S),$ where $\chi_T(S) = (-1)^{|T \cap S|}.$ Since the Fourier support of $f$ is contained in $\mathcal{P}$, this equation simplifies to $f(S) = \sum_{T \subseteq \mathcal{P}} \hat{f}(T) \chi_{T}(S).$ Let $T_1, \dots, T_{|\mathcal{P}|}$ be an ordering of $\mathcal{P}$, and let $\vec{w} \in \R^{\mathcal{P}}$ be defined such that $w[i] = \hat{f}(T_i)$. By assumption, $\vec{w}$ has $k$ non-zero entries. If we map $S$ to $\R^{\mathcal{P}}$ by defining $v(S)$ to be a vector such that the $i^{th}$ component is $\chi_{T_i}(S)$, then $f(S) = \vec{w} \cdot v(S)$.

Therefore, in Algorithm~\ref{alg:active_learning}, we may attempt to learn linear threshold functions $w_{ij}, \theta_{ij}$ over $\R^{|\mathcal{P}|}$ rather than $\R^n$ for all $i,j \in \sample_1$. Since the VC dimension of all $k$-sparse halfspaces in $\R^{|\mathcal{P}|}$ is $k \log |\mathcal{P}|$, we can learn these linear separators to the precision required in the proof of Algorithm~\ref{alg:active_learning} by using $\tilde O\left(\frac{k}{\epsilon^3}\right)$ examples in time polynomial in $|\mathcal{P}|^{\Theta (k)}$, $1/\epsilon$, and $1/\delta$ \cite{feldman2014open}.

For the same reason as in Theorem \ref{thm:general_valuation}, the dependence on $\epsilon$ in the sample complexity can be lowered to $\frac{1}{\epsilon^2}$.
\end{proof}

\begin{remark} By ignoring the sparsity of the linear separators, we can learn them using $\tilde O\left(\frac{|\mathcal{P}|}{\epsilon^3}\right)$ examples in time polynomial in $|\mathcal{P}|$, $1/\epsilon$, and $1/\delta$. 
\end{remark}

\begin{proof}[Proof of Theorem~\ref{thm:coverage}]
To begin with, we rely on the following result from \cite{Sketching}.

\begin{theorem}{\cite{Sketching}}\label{thm:sketching}
For any coverage function $c: 2^{[n]} \to \R_{\geq 0}$, there exists a coverage function $\hat{c}$ on a universe $U'$ with $|U'| \leq \frac{27n^2}{\epsilon^2}$ such that for all $S \in 2^{[n]}$, $c(S)/(1+\epsilon) \leq \hat{c}(S) \leq c(S)$ with probability at least $1-2^{n+1}e^{-n}$.
\end{theorem}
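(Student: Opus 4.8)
The plan is to instantiate the general framework of Theorem~\ref{thm:general}, but with an enlarged feature map in place of $\chi$, exactly in the spirit of the feature-map modifications used for XOS functions with $R$ SUM trees (Corollary~\ref{cor:structure_classes}) and for Theorems~\ref{thm:general_valuation} and~\ref{thm:Fourier_sparse}. The structural input is Theorem~\ref{thm:sketching}: for any coverage function $c$ and any $\epsilon$, there exists a coverage function $\hat c$ on a universe $U'$ with $|U'|\le 27n^2/\epsilon^2$ such that $c(S)/(1+\epsilon)\le\hat c(S)\le c(S)$ for every $S\subseteq[n]$. (The failure probability $2^{n+1}e^{-n}$ in Theorem~\ref{thm:sketching} is below $1$ for $n\ge 3$ and concerns only the internal randomness of the sketch construction, so such a $\hat c$ exists as a fixed object; the $n\le 2$ cases are trivial, and in all cases the learner never constructs $\hat c$.) Writing $A^u=\{i\in[n] : u \text{ is covered by element } i\}$ for $u\in U'$, every coverage function on $U'$ is a nonnegative combination of the ``OR'' features $\phi_A(S)=\mathbbm{1}[S\cap A\ne\emptyset]$, i.e.\ $\hat c(S)=\sum_{u\in U'} w_u\,\phi_{A^u}(S)$ with $w_u\ge 0$; this is the characterization of coverage functions from \cite{feldman2014learning}. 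Hence, under the feature map $\phi(S)=(\phi_A(S))_{\emptyset\ne A\subseteq[n]}\in\{0,1\}^{2^n-1}$, the function $\hat c$ is a nonnegative linear function $\hat c(S)=\vec w\cdot\phi(S)$ whose weight vector $\vec w$ is $s$-sparse with $s\le 27n^2/\epsilon^2$.

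With $\hat c$ in hand the argument mirrors Theorem~\ref{thm:general} with $\alpha(n)=1+\epsilon$ and $p=1$. First, if $(1+\epsilon)c(S)<c(S')$ then $\hat c(S)\le c(S)<c(S')/(1+\epsilon)\le\hat c(S')$, so $\vec w\cdot\phi(S)<\vec w\cdot\phi(S')$; consequently, for any two landmarks $S_i,S_j$ with $(1+\epsilon)c(S_i)<c(S_j)$, the halfspace $(\vec w,\theta_{ij})$ with $\theta_{ij}=\tfrac12\big(\vec w\cdot\phi(S_i)+\vec w\cdot\phi(S_j)\big)$ labels every $S$ with $c(S)<c(S_i)$ negative and every $S$ with $c(S)>c(S_j)$ positive, and this $\vec w$ is an $s$-sparse nonnegative vector over the OR-feature dictionary. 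I would therefore run Algorithm~\ref{alg:active_learning} with $\chi$ replaced by $\phi$, and with Step~\ref{step:train} searching over thresholds of coverage functions on universes of size at most $s$ (equivalently, $s$-sparse nonnegative OR-halfspaces), just as the proof of Theorem~\ref{thm:Fourier_sparse} works over $k$-sparse halfspaces in $\R^{|\mathcal P|}$.

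The remaining analysis transfers essentially verbatim. Claim~\ref{claim:buckets} is untouched: it uses only the separation factor $\alpha(n)=1+\epsilon$ and $|\sample_1|=\tilde{O}(1/\epsilon)$, and makes no reference to the feature space. The only change is in Claim~\ref{claim:lin_sep_error}: repeating the VC-dimension reduction there, the VC dimension of each loss class $\mathcal L_{ij}$ equals that of the hypothesis class used in Step~\ref{step:train}, namely $s$-sparse OR-halfspaces over $2^n-1$ features, which is $O(s\log(2^n))=O(sn)=\tilde{O}(n^3/\epsilon^2)$ (the nonnegativity and the ``dead zone'' $f(S_i)<f(S)<f(S_j)$ do not inflate it, exactly as argued in the proof of Claim~\ref{claim:lin_sep_error}). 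Substituting this bound in place of ``$n+1$'' in the VC sample-complexity estimate of Claim~\ref{claim:lin_sep_error} gives $|\sample_2|=\tilde{O}(n^3/\epsilon^5)$, which is polynomial in $n$, $1/\epsilon$, and $1/\delta$. The minimal-pairs step~\ref{step:minimal} and the output predictor $g$ are unchanged, so the way Claims~\ref{claim:buckets} and~\ref{claim:lin_sep_error} are combined in the proof of Theorem~\ref{thm:general} yields comparator-learnability with multiplicative separation $(1+\epsilon)$ at this sample size. Although the statement asks only for polynomial sample complexity, I would also note that the Step~\ref{step:train} ERM over sparse OR-halfspaces can be carried out efficiently using the coverage-function learning algorithm of \cite{feldman2014learning} (which finds a consistent nonnegative combination of OR-features), giving an efficient algorithm relative to an ERM oracle for linear separators, as stated in the main text.

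I expect the main obstacle to lie in the first two paragraphs: one must be sure the sketch of Theorem~\ref{thm:sketching} approximates $c$ \emph{multiplicatively on every subset}, since it is the existence of a correct halfspace for \emph{every} sufficiently separated landmark pair---not merely for a typical pair or in an averaged sense---that makes Claim~\ref{claim:buckets} translate into a correctness guarantee (this is what distinguishes the coverage case from the additive, $\ell_2$-based analysis of Section~\ref{sec:additiveError}). A secondary technical point is pinning down $\mathrm{VCdim}(\mathcal L_{ij})$: one must repeat the reduction of Claim~\ref{claim:lin_sep_error} to check it equals the VC dimension of $s$-sparse OR-halfspaces, namely $\tilde{O}(sn)=\tilde{O}(n^3/\epsilon^2)$, which is precisely what produces the $\tilde{O}(n^3/\epsilon^5)$ sample bound.
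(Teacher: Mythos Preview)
Your proposal does not prove the stated theorem. Theorem~\ref{thm:sketching} is a result quoted from \cite{Sketching}; the paper does not prove it, and your write-up does not prove it either---you explicitly take it as ``the structural input'' and use it as a black box. What you have written is a proof of Theorem~\ref{thm:coverage} (coverage functions are comparator-learnable with separation $1+\epsilon$), not of Theorem~\ref{thm:sketching}.

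Viewed as a proof of Theorem~\ref{thm:coverage}, your argument is essentially identical to the paper's proof in Appendix~\ref{app:other_combinatorial}: both combine Theorem~\ref{thm:sketching} with the OR-feature characterization of coverage functions from \cite{feldman2014learning} (Lemma~\ref{lem:coverage_linear}), embed each set $S$ into $\{0,1\}^{2^n-1}$ via the features $\phi_A(S)=\textsf{OR}_A(S)$, observe that the sketch $\hat c$ is an $s$-sparse nonnegative linear function with $s\le 27n^2/\epsilon^2$, and then run Algorithm~\ref{alg:active_learning} over this feature map with $\alpha(n)=1+\epsilon$. Both invoke the $O(s\log d)$ VC bound for $s$-sparse halfspaces in dimension $d=2^n-1$ to get VC dimension $O(n^3/\epsilon^2)$, plug this into Claim~\ref{claim:lin_sep_error} in place of $n+1$, and arrive at $|\sample_2|=\tilde O(n^3/\epsilon^5)$; Claim~\ref{claim:buckets} and the minimal-pairs combination step carry over unchanged. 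So as a proof of Theorem~\ref{thm:coverage} your proposal is correct and follows the same route as the paper, but it is not a proof of the statement you were asked about.
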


We also use the following lemma from \cite{feldman2014learning}.

\begin{lemma}{\cite{feldman2014learning}}\label{lem:coverage_linear}
A function $c: 2^{[n]} \to \R_{\geq 0}$ is a coverage function on some universe $U$ if and only if there exist non-negative coefficients $\alpha_S$ for every $S \subseteq [n]$, $S \not= \emptyset$ such that $c(T) = \sum_{S \subseteq [n], S\not= \emptyset} \alpha_S \cdot \textsf{\emph{OR}}_S(T)$, and at most $|U|$ of the coefficients $\alpha_S$ are non-zero.
\end{lemma}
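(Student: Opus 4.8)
The plan is to prove the two implications by making explicit the dictionary between elements of the universe $U$ and the disjunction indicators $\textsf{OR}_S$, for $\emptyset \ne S \subseteq [n]$. Recall that a coverage function is specified by a weighted universe $(U,w)$ with $w:U\to\R_{\ge 0}$ together with sets $U_1,\dots,U_n\subseteq U$, and $c(T)=\sum_{u\in\bigcup_{i\in T}U_i}w(u)$; in particular $c(\emptyset)=0$, which matches $\textsf{OR}_S(\emptyset)=0$ on the right-hand side.

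For the forward direction, suppose $c$ is realized by $(U,w,U_1,\dots,U_n)$. To each $u\in U$ I associate its \emph{signature} $S_u:=\{i\in[n]:u\in U_i\}$. The one-line observation driving everything is that $u$ is covered by $T$ exactly when $T\cap S_u\ne\emptyset$, i.e.\ $\textsf{OR}_{S_u}(T)=1$, so $c(T)=\sum_{u\in U}w(u)\,\textsf{OR}_{S_u}(T)$. Elements with empty signature are never covered, so I discard them; for each nonempty $S$ I then set $\alpha_S:=\sum_{u:\,S_u=S}w(u)\ge 0$, which gives $c(T)=\sum_{\emptyset\ne S}\alpha_S\,\textsf{OR}_S(T)$ with at most $|U|$ nonzero coefficients, since the distinct nonempty signatures number at most $|U|$. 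For the converse, given nonnegative coefficients with $\mathcal{S}:=\{S:\alpha_S>0\}$ and $|\mathcal{S}|\le|U|$, I build a universe $U':=\{u_S:S\in\mathcal{S}\}$ with weights $w(u_S):=\alpha_S$ and sets $U_i:=\{u_S:i\in S\}$. Then $u_S\in\bigcup_{i\in T}U_i$ iff some $i\in T$ has $i\in S$ iff $\textsf{OR}_S(T)=1$, so the coverage function defined by this data equals $\sum_{S\in\mathcal{S}}\alpha_S\,\textsf{OR}_S(T)=c(T)$, and $|U'|=|\mathcal{S}|\le|U|$, as required.

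There is no genuine obstacle — the lemma is a change of representation — so the only thing to get right is the bookkeeping in the forward direction: because several universe elements may share a signature one aggregates their weights rather than assuming an injection, and one must strip out elements lying in no $U_i$ so that only nonempty $S$ occur; the single definition $\alpha_S=\sum_{u:\,S_u=S}w(u)$ takes care of both. Matching the sparsity count $|\{S:\alpha_S\ne 0\}|$ to the universe size is then immediate in both directions, which is precisely the quantitative ``at most $|U|$'' clause of the statement.
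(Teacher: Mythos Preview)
The paper does not give its own proof of this lemma; it is quoted verbatim from \cite{feldman2014learning} and used as a black box inside the proof of Theorem~\ref{thm:coverage}. There is therefore nothing in the paper to compare against. Your argument is the standard one and is correct: associating to each $u\in U$ its signature $S_u=\{i:u\in U_i\}$ and aggregating weights by signature yields the forward direction with at most $|U|$ nonzero coefficients, and building one universe element per nonzero $\alpha_S$ reverses the construction. The care you take with elements of empty signature and with non-injective signatures is exactly what is needed.

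One remark worth recording: the paper's sentence ``$\textsf{OR}_S(T)=0$ if and only if $T\subseteq S$'' appears to be a slip. Under that definition the lemma as stated is false (e.g.\ $c(\{i\})=\sum_{S\not\ni i}\alpha_S$ would not recover the weight of elements containing $i$). The intended definition, and the one in \cite{feldman2014learning}, is $\textsf{OR}_S(T)=1$ iff $T\cap S\ne\emptyset$, which is the convention you use; with it both the lemma and your proof are correct.
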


Here, $\textsf{OR}_S: 2^{[n]} \to \{0,1\}$ is defined such that for any $T \subseteq [n]$, $\textsf{OR}_S(T) = 0$ if and only if $T \subseteq S$.

Now, let $S_1, \dots, S_{2^n-1}$ be an ordering of $2^{[n]} \setminus \emptyset$ and for $S \subseteq [n]$, let $v(S) \in \{0,1\}^{2^n-1}$ be the vector defined as $v(S)[i] = \textsf{OR}_{S_i}(S)$. We know from Theorem~\ref{thm:sketching} and Lemma~\ref{lem:coverage_linear} that with probability at least $1-2^{n+1}e^{-n}$, there exists a vector $\vec{\alpha} \in \R^{2^n-1}$ such that for all $S \subseteq [n]$ and $\epsilon \in (0,1)$, $c(S)/(1+\epsilon) \leq \vec{\alpha} \cdot v(S) \leq c(S).$ Moreover, from Lemma~\ref{lem:coverage_linear}, we know that $\vec{\alpha}$ has at most $\frac{27 n^2}{\epsilon^2}$ non-zero entries. Therefore, with probability at least $1-2^{n+1}e^{-n}$, for any $S,S' \subseteq [n]$, if $(1+\epsilon)c(S) \leq c(S')$, then $v(S) \cdot \vec{\alpha} \leq c(S) \leq c(S')/(1+\epsilon) \leq v(S') \cdot \vec{\alpha}$. This means that in Algorithm 1, for each $S_i,S_j \in \sample_1$ such that $c(S_i) \leq c(S_j)$, we can solve for a linear separator $\alpha_{ij}, \theta_{ij}$ such that $v(S) \cdot \alpha_{ij} < \theta_{ij}$ if $c(S) \leq c(S_i)$ and $v(S) \cdot \alpha_{ij} > \theta_{ij}$ if $c(S) \geq c(S_j)$. With probability at least $1-2^{n+1}e^{-n}$, such a linear threshold function will exists for all $S_i,S_j$ such that $(1+\epsilon) c(S_i)\leq c(S_j)$.

It is well known that the VC dimension of the class of linear threshold functions $(\vec{w}, w_0)$ over $\R^d$ such that $||\vec{w}||_0 \leq r$ has VC dimension $O(r\log d)$ (ex. \cite{neylon2006sparse}). Therefore, the class of linear threshold functions Algorithm 1 learns over has VC dimension $O(|U'| \log (2^n - 1)) = O(n^3/\epsilon^2)$. Therefore, we change the size of $\sample_2$ to be \[|\sample_2| = O\left(\frac{m^2}{\epsilon}\left[\frac{n^3}{\epsilon^2} \log \frac{1}{\epsilon} + \log\frac{1}{\delta m^2}\right]\right),\] where $m = \frac{1}{\epsilon} \log \frac{1}{\epsilon \delta}$. Moreover, we need to map each $S \in \sample_2$ to $\R^{2^n-1}$ in order to learn the linear separators $\alpha_{ij}, \theta_{ij}$. To do this, let $S_1, \dots, S_{2^n-1}$ be an ordering of $2^{[n]} \setminus \emptyset$. Then we define the mapping $v: 2^{[n]} \to \R^{2^n-1}$ such that for all $S \subseteq [n]$, the $i^{th}$ component of $v(S)$ is $\textsf{OR}_{S_i}(S)$. With these changes, the analysis of Algorithm 1 in Section~\ref{sec:multiplicativeError} holds, with $\alpha(n) = 1+\epsilon$.

\end{proof}

\end{document}